
\documentclass{article}

\usepackage{microtype}
\usepackage{graphicx}
\usepackage{subfigure}
\usepackage{booktabs} 

\usepackage{hyperref}



\usepackage[accepted]{icml2025}

\usepackage{amsmath}
\usepackage{amssymb}
\usepackage{mathtools}
\usepackage{amsthm}

\usepackage[capitalize,noabbrev]{cleveref}

\theoremstyle{plain}
\newtheorem{theorem}{Theorem}

\newtheorem{lemma}{Lemma}

\theoremstyle{definition}
\newtheorem{definition}{Definition}
\newtheorem{assumption}{Assumption}
\theoremstyle{remark}

\usepackage[textsize=tiny]{todonotes}

\usepackage[utf8]{inputenc} 
\usepackage[T1]{fontenc}    
\usepackage{hyperref}       
\usepackage{url}            
\usepackage{booktabs}       
\usepackage{amsfonts}       
\usepackage{nicefrac}       
\usepackage{microtype}      
\usepackage{xcolor} 

\usepackage{braket}

\usepackage{colortbl}
\usepackage{enumitem}
\usepackage{wrapfig}

\makeatletter
\def\blfootnote{\xdef\@thefnmark{}\@footnotetext}
\makeatother

\DeclareMathOperator*{\R}{\mathbb{R}}

\DeclareMathOperator{\E}{\mathbb{E}}
\DeclareMathOperator*{\cO}{\mathcal{O}}

\newcommand{\norm}[1]{\left \lVert #1 \right\rVert }

\icmltitlerunning{A Sharper Global Convergence Analysis for Average Reward Reinforcement Learning via an Actor-Critic Approach}

\begin{document}

\twocolumn[
\icmltitle{A Sharper Global Convergence Analysis for Average Reward Reinforcement Learning via an Actor-Critic Approach}



\icmlsetsymbol{equal}{*}

\begin{icmlauthorlist}
\icmlauthor{Swetha Ganesh}{a,c}
\icmlauthor{Washim Uddin Mondal}{b}
\icmlauthor{Vaneet Aggarwal}{a}
\end{icmlauthorlist}

\icmlaffiliation{a}{School of Industrial Engineering, Purdue University, West Lafayette, USA}
\icmlaffiliation{b}{Department of Electrical Engineering, Indian Institute of Technology, Kanpur, India}
\icmlaffiliation{c}{Department of Computer Science and Automation, Indian Institute of Science, Bengaluru, India}

\icmlcorrespondingauthor{Swetha Ganesh}{ganesh49@purdue.edu}

\icmlkeywords{Machine Learning, ICML}

\vskip 0.3in
]



\printAffiliationsAndNotice{}  

\begin{abstract}
This work examines average-reward reinforcement learning with general policy parametrization. Existing state-of-the-art (SOTA) guarantees for this problem are either suboptimal or hindered by several challenges, including poor scalability with respect to the size of the state-action space, high iteration complexity, and dependence on knowledge of mixing times and hitting times. To address these limitations, we propose a Multi-level Monte Carlo-based Natural Actor-Critic (MLMC-NAC) algorithm. Our work is the first to achieve a global convergence rate of $\Tilde{\mathcal{O}}(1/\sqrt{T})$ for average-reward Markov Decision Processes (MDPs) (where $T$ is the horizon length), without requiring the knowledge of mixing and hitting times. Moreover, the convergence rate does not scale with the size of the state space, therefore even being applicable to infinite state spaces.


\end{abstract}

\section{Introduction}

Reinforcement Learning (RL) is a framework where an agent interacts with a Markovian environment and maximizes the total reward it receives. The temporal dependence of the state transitions makes the problem of RL much more challenging than ordinary stochastic optimization, where data are selected in an independent and identically distributed (i.i.d.) manner. RL problems are typically analyzed via three setups: episodic, discounted reward with an infinite horizon, and average reward with an infinite horizon. The average reward framework is particularly significant for real-world applications, including robotics \cite{Gonzalez2023}, transportation \cite{al2019deeppool}, communication networks \cite{agarwal2022concave}, and healthcare \cite{tamboli2024reinforced}. Model-based algorithms \cite{jaksch2010near,agrawal2017optimistic,agarwal2023reinforcement} that learn the state transition kernel from Markovian trajectories, are well-known approaches for solving RL problems. However, these methods are typically limited to small state spaces. Policy Gradient (PG) methods, a cornerstone of RL, offer a model-free alternative that naturally supports function approximation (FA), making them well-suited for large state-action spaces. When the size of the state-action space, $SA$, is large or infinite, the framework of FA (also known as general parameterization) indexes the candidate policies by a $d$-dimensional parameter, $\theta$, where $d\ll SA$. Recently, some works have established global convergence guarantees for the average-reward setting with general policy parametrization, which we discuss below. 


\if 0
There are two main directions within this context. The first is a direct PG approach, where value functions are estimated directly from sampled trajectories \cite{bai2023regret,ganesh2024variance}. The second is a Temporal Difference (TD) based approach for policy evaluation, commonly referred to as the Actor-Critic (AC) method \cite{patel2024global,wang2024nonasymptotic}.
Currently, order-optimal results exist for direct PG methods \cite{ganesh2024variance}. However, the proposed methods face several limitations, including poor scalability with respect to the size of the state-action space, high iteration complexity, and reliance on knowledge of mixing times and hitting times. In contrast, existing actor-critic algorithms do not suffer from these issues but are generally more challenging to analyze. This difficulty arises because the gradients in actor-critic methods are biased due to imperfect critic estimates, whereas direct PG methods produce nearly unbiased gradients. Further details on the practicality issues of direct PG methods are provided in related work.

In direct PG methods, the value function is estimated directly from the samples, resulting in an unbiased but high-variance approach. This variance  scales with the size of the action space \cite{wei2020model,bai2023regret,ganesh2024variance}. In contrast, AC methods estimate the value function using a TD-based (or critic) approach, which reduces variance and eliminates dependence on the action space. However, this comes at the cost of introducing bias, in addition to the Markovian noise inherent in sampling. Direct PG methods leverage knowledge of mixing time to obtain nearly i.i.d. samples, whereas AC methods in \cite{patel2024global,wang2024nonasymptotic} do not. These AC methods explicitly account for bias and Markovian noise, yielding algorithms with a convergence rate of $\Tilde{\cO}(1/T^{1/4})$, which remains suboptimal. This raises the following question:
\fi



\begin{table*}[t]
    \centering
\begin{tabular}{|c|c|c|c|c|c|c| }
    	\hline
    	Algorithm &Infinite States  & Mixing Time & Global & Model-Free? & Policy  \\
        & and Actions? &Unknown?& Convergence & & Parametrization \\

        \hline
              UCRL2 \cite{jaksch2010near} &No &  -$\color{blue} ^{(1)}$ &  $\textstyle{\Tilde{\cO}(1/\sqrt{T})}$ & No & Tabular \\
    	\hline
      PSRL \cite{agrawal2017optimistic}  &No&  -$\color{blue} ^{(1)}$ &  $\textstyle{\Tilde{\cO}(1/\sqrt{T})}$ & No & Tabular \\
    	\hline
          MDP-OOMD  \cite{wei2020model}$\color{blue} ^{(2)}$&No &  No &  $\textstyle{\Tilde{\cO}(1/\sqrt{T})}$ & Yes & Tabular \\
    	\hline
            PPGAE \cite{bai2023regret} &No & No &  $\Tilde{\cO}(1/T^{1/4})$ & Yes & General \\
        \hline
         PHPG  \cite{ganesh2024variance}  &No & No &   $\Tilde{\cO}(1/\sqrt{T})$ & Yes & General \\
    	\hline
           NAC-CFA \cite{wang2024nonasymptotic}  &Yes   & No &   $\Tilde{\cO}(1/T^{1/4})$  & Yes & General \\
           \hline
           MAC \cite{patel2024global} &Yes  & Yes &   $\Tilde{\cO}(1/T^{1/4})$  & Yes & General \\
    	 \hline
         \rowcolor{cyan!25} MLMC-NAC (Algorithm \ref{alg:ranac}) &Yes&  Yes &   $\textstyle{\Tilde{\cO}(1/\sqrt{T})}$ & Yes & General \\
        \hline
        \end{tabular}
    \caption{ Summary of the key results on global convergence guarantees for average reward reinforcement learning. $\color{blue} ^{(1)}$Due to the model-based nature of the algorithms, these works use span of the optimal value function, $\mathrm{sp}(V^*)$, (upper bounded by diameter) rather than mixing time. $\color{blue} ^{(2)}$This work also considers another algorithm that reduces mixing time to $\mathrm{sp}(V^*)$ while achieving a higher rate of $\Tilde{\mathcal{O}}(1/T^{1/3})$, though it still requires  knowledge of $\mathrm{sp}(V^*)$. 
    }
    \label{table2}
\end{table*}


There are two main approaches in this context. The first is the direct PG method, where value functions are estimated directly from sampled trajectories \cite{bai2023regret,ganesh2024variance}. The second is the Temporal Difference (TD)-based policy evaluation approach, commonly known as the Actor-Critic (AC) method \cite{patel2024global,wang2024nonasymptotic}. Currently, the order-optimal $\tilde{\mathcal{O}}(T^{-1/2})$ convergence rate result, where $T$ is the horizon length, exists for direct PG method \cite{ganesh2024variance}. However, these methods face several limitations, including poor state-action space scalability and reliance on knowledge of mixing and hitting times, which are unavailable in most practical scenarios. In contrast, existing AC algorithms circumvent these issues but are generally more challenging to analyze. More specifically, AC methods employ a TD-based critic to estimate the value function, which helps in reducing the variance. However, this reduction comes at the cost of introducing a bias, in addition to the inherent bias arising due to Markovian sampling. Direct PG methods using Markovian sampling leverage knowledge of mixing time to obtain nearly i.i.d. samples, whereas this would not suffice for AC methods due to the additional bias from the critic. As a result, the state-of-the-art bounds for AC is $\Tilde{\cO}(1/T^{1/4})$, which is suboptimal. This raises the following question:


\fbox{\begin{minipage}{23em}
{{\em Is it possible to achieve a state-action space size independent global convergence rate of $\Tilde{\mathcal{O}}\left(T^{-1/2}\right)$ for general parameterized policies in average reward infinite-horizon MDPs, using a practical algorithm that does not require the knowledge of mixing times?}}
\end{minipage}}

\textbf{Our Contribution:}
This work answers this question affirmatively. In particular, we introduce a Multi-level Monte Carlo-based Natural Actor-Critic (MLMC-NAC) algorithm that comprises two major components. The first component, referred to as the Natural Policy Gradient (NPG) subroutine, obtains an approximate NPG direction which is used to update the policy parameter. One sub-task of obtaining the NPG is estimating the advantage function. This is done via the second component, known as the Critic subroutine that achieves its desired target via Temporal Difference (TD) learning. Both NPG and critic subroutines apply MLMC-based gradient estimators that eliminate the use of the mixing time in the algorithm. We establish that MLMC-NAC achieves a global convergence rate of $\tilde{\cO}(T^{-1/2})$ which is optimal in the order of the horizon length $T$. The key contributions in this work are summarized as follows: 


\vspace{-.15in}
\begin{itemize}[leftmargin=*,itemsep=2pt, parsep=0pt]
\item While existing AC analyses often use relatively loose bounds, we refine the analysis to achieve sharper results. Our first step towards this is to show that the global convergence error is bounded by terms proportional to the bias and second-order error in NPG estimation (Lemma \ref{lemma:local_global}). Since the critic updates underlie the NPG subroutine, the NPG estimation errors are inherently linked to critic estimation errors. 

\item In prior AC works \cite{wang2024nonasymptotic,patel2024global}, the global convergence bound includes the critic error, $\E \|\xi_t - \xi^*\|$, where $\xi_t$ is the critic estimate at time $t$ and $\xi^*$ is the true value.  Instead, using Lemma \ref{lemma:local_global} and Theorem \ref{thm:npg-main}, our analysis refines this term to $\|\E[\xi_t] - \xi^*\|$, which can be significantly smaller than the previous estimate. 

\item Bounding $\|\E[\xi_t] - \xi^*\|$ still remains challenging due to Markovian noise. The critic update can be interpreted as a linear recursion with Markovian noise. Under i.i.d. noise, this term decays exponentially, but with Markovian noise, it can remain constant \cite{nagaraj2020neurips}. \cite{nagaraj2020neurips} mitigates this by using one sample every $t_{\text{mix}}$ steps, requiring knowledge of the mixing time. Instead, we leverage MLMC to reduce the bias.

\item In Theorem \ref{thm_2}, we establish a convergence rate for a generic stochastic linear recursion. Given that both the NPG and critic updates can be viewed as a stochastic linear update, this forms a basis for Theorems \ref{thm:npg-main} and \ref{th:acc_td}.
    


      \item Theorem \ref{thm:main-conv-rate} proves the first $\tilde{\cO}(T^{-1/2})$ global convergence result for AC methods. 
\end{itemize}


\vspace{-.1in}

{\bf Related works: } We will discuss the relevant works in two key areas as stated below. Our discussion primarily focuses on works that employ general parameterized policies. A summary of relevant works is available in Table \ref{table2}.


\noindent{\bf Discussion on Practicality on direct PG methods: } In direct PG methods, value function estimates are nearly unbiased but suffer from high variance, which scales with the size of the action space \cite{wei2020model, bai2023regret, ganesh2024variance}. Furthermore, the convergence results depend on the hitting time, which is at least the size of the state space, making the algorithm inapplicable to large or infinite state spaces. Finally, the implementation of these algorithms requires knowledge of mixing and hitting times, which is typically not available in practice and is difficult to estimate. In contrast, our algorithm does not require knowledge of mixing and hitting times.

 \noindent   {\bf Average Reward RL with Actor-Critic approaches:} 
    The authors of  \cite{chen2023finitetime,panda2024twotimescalecriticactoraveragereward,suttle2023beyond} provided  local convergence guarantees for the AC based approaches. Recently, the global convergence for the AC methods in average reward setup have been studied in \cite{wang2024nonasymptotic,patel2024global}, where global sample complexity of $\tilde{\cO}(T^{-1/4})$ is shown\footnote{While \cite{wang2024nonasymptotic} shows $\min_{1\leq t\leq T}(J^*-J(\theta_t)) \leq \tilde{\cO}(T^{-1/3})$, we note that the minimum error is strictly smaller than the average error notion that we consider in our work. Substituting the bound in Proposition 5 in their work in their average error decomposition in Step 1 of their Proof Outline yields an average error of $\tilde{\cO}(T^{-1/4})$.}.

    We note that \cite{suttle2023beyond,patel2024global} uses the Multi-Level Monte Carlo (MLMC)-based AC algorithm combined with AdaGrad \cite{duchi2011adaptive}, inspired by stochastic gradient descent (SGD) related work in \cite{dorfman2022}. Unfortunately, none of these studies lead to an optimal global convergence rate which is the goal of our work. We also note that the current state-of-the-art global convergence rate for AC methods in discounted MDPs is $\mathcal{O}(T^{-1/3})$ \cite{NEURIPS2020_2e1b24a6, gaur2024closing}, and the approaches proposed in this work have the potential to be applied in that setting.
    
    

\section{Setup}

In this paper, we explore an infinite horizon reinforcement learning problem with an average reward criterion, modeled by a Markov Decision Process (MDP) represented as a tuple $\mathcal{M}=(\mathcal{S},\mathcal{A},r, P,\rho)$. Here $\mathcal{S}$ indicates the state space, $\mathcal{A}$ defines the action space with a size of $A$, $r:\mathcal{S}\times\mathcal{A}\rightarrow [0,1]$ represents the reward function, $ P:\mathcal{S}\times\mathcal{A}\rightarrow \Delta(\mathcal{S})$ defines the state transition function, where $\Delta(\mathcal{S})$ denotes the probability simplex over $\mathcal{S}$, and $\rho\in\Delta(\mathcal{S})$ signifies the initial distribution of states. A (stationary) policy $\pi:\mathcal{S}\rightarrow \Delta(\mathcal{A})$ determines the distribution of the action to be taken given the current state. It induces the following state transition $P^{\pi}:\mathcal{S}\rightarrow \Delta(\mathcal{S})$ given as $P^{\pi}(s, s') = \sum_{a\in\mathcal{A}}P(s'|s,a)\pi(a|s)$, $\forall s, s'\in\mathcal{S}$. Observe that for any policy $\pi$, the sequence of states yielded by the MDP forms a Markov chain. We assume the following throughout the paper.

\begin{assumption}
    \label{assump:ergodic_mdp}
    The Markov chain induced by every policy $\pi$, $\{s_t\}_{t\geq0}$, is ergodic.
\end{assumption}
Before proceeding further, we point out that we consider a parameterized class of policies $\Pi$, which consists of all policies $\pi_{\theta}$ such that $\theta \in \Theta$, where $\Theta \subset \R^{\mathrm{d}}$. It is well-established that if $\mathcal{M}$ is ergodic, then $\forall\theta\in\Theta$, there exists a unique stationary $\rho$-independent distribution, denoted as $d^{\pi_{\theta}}\in \Delta(\mathcal{S})$, defined as follows.
\begin{align}
\scalebox{0.95}{$
    d^{\pi_\theta}(s) = \lim_{T\rightarrow\infty} \E_{\pi_\theta}\left[\frac{1}{T}\sum_{t=0}^{T} \mathbf{1}(s_t=s)\bigg| s_0\sim \rho\right]$}
\end{align}
where $\E_{\pi_{\theta}}$ denotes the expectation over the distribution of all $\pi_\theta$-induced trajectories and $\mathbf{1}(\cdot)$ is an indicator function. The above distribution also obeys
$(P^{\pi_{\theta}})^\top d^{\pi_{\theta}}=d^{\pi_{\theta}}$. With this notation in place, we define the mixing time of an MDP.

\begin{definition}
The mixing time of an MDP $\mathcal{M}$ with respect to a policy parameter $\theta$ is defined as
\begin{align*}
\scalebox{0.95}{$
    t_{\mathrm{mix}}^{\theta}\coloneqq \min\left\lbrace t\geq 1\bigg| \|(P^{\pi_{\theta}})^t(s, \cdot) - d^{\pi_\theta}\|_{\mathrm{TV}}\leq \dfrac{1}{4}, \forall s\in\mathcal{S}\right\rbrace$}
\end{align*} 
where $\|\cdot\|_{\mathrm{TV}}$ denotes the total variation distance. We define $t_{\mathrm{mix}}\coloneqq \sup_{\theta\in\Theta} t^{\theta}_{\mathrm{mix}} $ as the the overall mixing time. This paper assumes  $t_{\mathrm{mix}}$ to be finite.
\end{definition}

The mixing time of an MDP measures how quickly the MDP approaches its stationary distribution when the same policy is executed repeatedly. In the average reward setting, we aim to find a policy $\pi_{\theta}$ that maximizes the long-term average reward defined below.
\begin{align}
\scalebox{0.95}{$
    J^{\pi_{\theta}}:=\lim_{T\rightarrow \infty}\mathbb{E}_{\pi_\theta}\left[\frac{1}{T}\sum_{t=0}^Tr(s_t,a_t)\bigg|s_0\sim \rho\right]$}
\end{align} 
For simplicity, we denote $J(\theta)=J^{\pi_{\theta}}$. This paper uses an actor-critic approach to optimize $J$. Before proceeding further, we would like to introduce a few important terms. The action-value ($Q$) function corresponding to the policy $\pi_\theta$ is defined as 
\begin{align}
\scalebox{0.85}{$
    Q^{\pi_{\theta}}(s,a) = \mathbb{E}_{\pi_\theta}\Bigg[\sum_{t=0}^{\infty}\big\{ r(s_t,a_t) - J(\theta)\big\} \bigg| s_0=s, a_0=a\Bigg]$}
    \end{align}
   We can further define the state value function as
    \begin{align}\label{v_function}
    \scalebox{0.95}{$
     V^{\pi_{\theta}}(s) = \mathbb E_{a \sim \pi_{\theta}(\cdot|s)}[Q^{\pi_{\theta}}(s,a)]$}
\end{align}
Bellman's equation, thus, takes the following form \citep{puterman2014markov}
\begin{align}\label{bellman}
Q^{\pi_{\theta}}(s, a)
&= r(s,a) - J(\theta) + \mathbb E[V^{\pi_{\theta}}(s')],
\end{align}
where the expectation is over $s'\sim P(\cdot |s, a)$. We define the advantage as $A^{\pi_{\theta}}(s, a) \triangleq Q^{\pi_{\theta}}(s, a) - V^{\pi_{\theta}}(s).$ With the notations in place, we express below the well-known policy gradient theorem established by \cite{sutton1999policy}.
\begin{align}
\label{eq_washim_policy_grad}
\scalebox{0.95}{$
    \nabla_{\theta} J(\theta)\!=\!\mathbb{E}_{(s, a)\sim \nu^{\pi_{\theta}}}\!\bigg[ \!A^{\pi_{\theta}}(s,a)\nabla_{\theta}\log\pi_{\theta}(a|s)\!\bigg]$}
\end{align}
where $\nu^{\pi_\theta}(s, a)=d^{\pi_\theta}(s)\pi(a|s)$. Policy Gradient (PG) algorithms maximize the average reward by updating $\theta$ along the policy gradient $\nabla_{\theta} J(\theta)$. In contrast, Natural Policy Gradient (NPG) methods update $\theta$ along the NPG direction $\omega^*_\theta$ where
    \begin{align}
    \label{eq_exact_npg}
        \omega^*_{\theta} = F(\theta)^{\dagger} \nabla_{\theta} J(\theta),
    \end{align}
    The symbol $\dagger$ denotes the Moore-Penrose pseudoinverse and $F(\theta)$ is the Fisher matrix as defined as 
    \begin{align}
    \scalebox{0.95}{$
        F(\theta) = \E_{(s, a)\sim \nu^{\pi_{\theta}}} \left[\nabla_{\theta}\log\pi_{\theta}(a|s)\otimes \nabla_{\theta}\log\pi_{\theta}(a|s) \right]$}
    \end{align}
where $\otimes$ symbolizes the outer product. The precoder $F(\theta)$ takes the change of the parameterized policy with respect to $\theta$ into account, thereby preventing overshooting or slow updates of $\theta$. Note that $\omega_\theta^*$ can be written as the minimizer of the function $L_{\nu^{\pi_\theta}}(\cdot, \theta)$ where
\begin{align}
\label{eq_def_L_nu}
\scalebox{0.83}{$
    L_{\nu^{\pi_\theta}}(\omega, \theta) \textstyle{= \dfrac{1}{2}\E_{(s, a)\sim \nu^{\pi_{\theta}}}\left[\big(A^{\pi_{\theta}}(s,a)-\omega^\top\nabla_{\theta}\log\pi_{\theta}(a| s)\big)^2\right]}$}
\end{align}
for all $ \omega\in\mathbb{R}^{\mathrm{d}}$. This is essentially a convex optimization that can be iteratively solved utilizing a gradient-based method. Invoking \eqref{eq_washim_policy_grad}, one can show that
\begin{align}
\label{eq_washim_def_L_grad}
\scalebox{0.95}{$    \nabla_{\omega}  L_{\nu^{\pi_\theta}}(\omega, \theta) = F(\theta)\omega - \nabla_{\theta} J(\theta)$}
\end{align}
Note that $\nabla_{\omega}L_{\nu^{\pi_\theta}}(\omega, \theta)$ is not exactly computable since the transition function $P$ and hence the stationary distribution, $d^{\pi_\theta}$, and the advantage function, $A^{\pi_\theta}(\cdot, \cdot)$ are typically unknown in most practical cases. Recall that $A^{\pi_\theta}(s, a)= Q^{\pi_\theta}(s, a)-V^{\pi_\theta}(s)$. Moreover, Bellman's equation \eqref{bellman} states that $Q^{\pi_\theta}(s, a)$ is determined by $J(\theta)$ and $V^{\pi_\theta}$. Notice that $J(\theta)$ can be written as a solution to the following optimization problem.
\begin{align}
\label{eq_opt_R}
     \min_{\eta\in\mathbb{R}} R(\theta, \eta) \coloneqq \frac{1}{2}\sum_{s\in\mathcal{S}}\sum_{a\in\mathcal{A}} \nu^{\pi_\theta}(s, a)\left\{\eta - r(s, a)\right\}^2
\end{align}

 \begin{algorithm}[t]
    \caption{Multi-level Monte Carlo-based Natural Actor-Critic (MLMC-NAC)}
    \label{alg:ranac}
    \begin{algorithmic}[1]
        \STATE \textbf{Input:} Initial parameters $\theta_0$, $\{\omega_H^k\}$, and $\{\xi_0^k\}$, policy update stepsize $\alpha$, parameters for NPG update, $\gamma$, parameters for critic update, $\beta$, $c_\beta$, initial state $s_0 \sim \rho$, outer loop size $K$, inner loop size $H$, $T_{\max}$
        \STATE \textbf{Initialization:} $T \gets 0$
	
	\FOR{$k = 0,1,\cdots, K-1$}
        \FOR[Average reward and critic estimation]{$h = 0,1,\cdots,H-1$}
        \STATE $s_{0}^{kh}\gets s_0$,  $Q_{kh} \sim \text{Geom}(1/2)$
        \STATE $\Bar{Q}_{kh} \gets 2^{Q_{kh}} $ \algorithmicif\ $2^{Q_{kh}}  \leq T_{\max}$ \algorithmicelse\ $\Bar{Q}_{kh} \gets 0$
		\FOR{$t = 0, \dots, \Bar{Q}_{kh} -1 $}
		\STATE Take action $a^{kh}_t \sim \pi_{\theta_k}(\cdot | s^{kh}_{t})$
		\STATE Collect next state $s^{kh}_{t+1} \sim P(\cdot | s_{t}^{kh}, a^{kh}_{t})$
		\STATE Receive reward $r(s_t^{kh}, a_t^{kh}) $
		\ENDFOR
        \STATE $T_{kh}\gets \Bar{Q}_{kh}$, $s_0\gets s^{kh}_{T_{kh}}$
		\STATE Update $\xi_h^k$ using \eqref{eq_washim_15} and \eqref{eq_wash_23}
        \STATE $T\gets T+T_{kh}$
        \ENDFOR
        \STATE $\xi_k\gets \xi_H^k$
        \FOR[Natural Policy Gradient (NPG) estimation]{$h = H, H+1, \cdots,2H-1$}
        \STATE $s_{0}^{kh}\gets s_0$,  $Q_{kh}  \sim \text{Geom}(1/2)$
        \STATE $\Bar{Q}_{kh} \gets 2^{Q_{kh}} $ \algorithmicif\ $2^{Q_{kh}}  \leq T_{\max}$ \algorithmicelse\ $\Bar{Q}_{kh} \gets 0$
		\FOR{$t = 0, \dots, \Bar{Q}_{kh} -1 $}
		\STATE Take action $a_t^{kh} \sim \pi_{\theta_k}(\cdot | s_{t}^{kh})$
		\STATE Collect next state $s^{kh}_{t+1} \sim P(\cdot | s_{t}^{kh}, a_{t}^{kh})$
		\STATE Receive reward $r(s_t^{kh}, a_t^{kh})$
		\ENDFOR
        \STATE $T_{kh} \gets \Bar{Q}_{kh}$, $s_0\gets s^{kh}_{T_{kh}}$
		\STATE Update $\omega_h^k$ using  \eqref{eq:Acc_NPG} and \eqref{eq_wash_19}
        \STATE $T\gets T+ T_{kh}$
        \ENDFOR
        \STATE $\omega_k \gets \omega^k_{2H}$, $\theta_{k+1} \gets \theta_k + \alpha \omega_k$
        \COMMENT{Policy update}
    \ENDFOR	
    \end{algorithmic}
\end{algorithm}
The above formulation allows us to compute $J(\theta)$ in a gradient-based iterative manner. In particular,
\begin{align}
\label{eq_grad_R_eta}
 \scalebox{0.95}{$   \nabla_{\eta} R(\theta, \eta) = \sum_{s\in\mathcal{S}}\sum_{a\in\mathcal{A}} \nu^{\pi_\theta}(s, a)\left\{\eta - r(s, a)\right\}$}
\end{align}

To facilitate the estimation of the advantage function, we assume that $V^{\pi_\theta}(\cdot)$ can be approximated by a critic function $\hat{V}(\zeta_{\theta}, \cdot)=(\zeta_{\theta})^{\top}\phi(\cdot)$ where $\zeta_{\theta}\in\mathbb{R}^m$ denotes a solution to the following optimization problem, and $\phi(s)\in\mathbb{R}^{m}$, $\norm{\phi(s)}\leq 1$ is a feature vector, $\forall s\in\mathcal{S}$. 
\begin{align}\label{value_function_problem}
    \min_{\zeta\in\mathbb{R}^m}  E(\theta, \zeta)\coloneqq \frac{1}{2}\sum_{s \in \mathcal{S}} d^{\pi_{\theta}}(s) (V^{\pi_{\theta}}(s) - \hat{V}(\zeta, s))^2
\end{align}
The above formulation paves a way to compute $\zeta_{\theta}$ via a gradient-based iterative procedure. Note the following.
\begin{align}
    \begin{split}
        \label{eq_nu_grad}
    &\nabla_{\zeta} E(\theta, \zeta)\\
    &= \sum_{s\in\mathcal{S}}\sum_{a\in\mathcal{A}} \nu^{\pi_\theta}(s, a)\left\{\zeta^{\top}\phi(s)- Q^{\pi_\theta}(s, a)\right\}\phi(s)
    \end{split}
\end{align}
 The iterative updates of $\theta$, $\eta$, and $\zeta$, along their associated gradients provided in \eqref{eq_washim_def_L_grad}, \eqref{eq_grad_R_eta}, and \eqref{eq_nu_grad} form the basis of our algorithm stated next. 

\section{Proposed Algorithm}
\label{sec:algo}

We propose a Multi-level Monte Carlo-based Natural Actor-Critic (MLMC-NAC) algorithm (Algorithm \ref{alg:ranac}) that runs $K$ number of epochs (also called outer loops). The $k$th loop obtains $\xi_k=[\eta_k, \zeta_k]^{\top}$ where $\eta_k$ denotes an estimate of the average reward $J(\theta_k)$, and $\zeta_k$ is an estimate of the critic parameter, $\zeta_{\theta_k}$. These estimates are then used to compute the approximate NPG, $\omega_k$, which is applied to update the policy parameter $\theta_k$.
\begin{align}
    \theta_{k+1}=\theta_k+\alpha\omega_k
\end{align}
where $\alpha$ is a learning parameter. The estimate $\xi_k$ is obtained in $H$ inner loop steps. In particular, $\forall h\in\{0, \cdots, H-1\}$, we apply the following updates starting from  arbitrary $\xi_0^k$, and finally assign $\xi_H^k=\xi_k$.
\begin{align}
\label{eq_washim_15}
\begin{split}
    &\xi^k_{h+1} = \xi^k_h - \beta \mathbf{v}_h (\theta_k, \xi^k_h),~\text{where}\\
    ~& \mathbf{v}_h(\theta_k, \xi^k_h) = \left[c_{\beta}\hat{\nabla}_{\eta} R(\theta_k, \eta^k_h), \hat{\nabla}_{\zeta} E(\theta_k, \xi^k_h)\right]^{\top}
\end{split}
\end{align}
where $c_\beta$ is a constant, and $\beta$ is a learning rate. Observe that $\hat{\nabla}_{\zeta} E(\theta_k, \xi_h^k)$ has dependence on $\xi_h^k=[\eta_h^k,\zeta_h^k]^{\top}$ due to the presence of $Q$-function in \eqref{eq_nu_grad}
while $\hat{\nabla}_{\eta} R(\theta_k, \eta_h^k)$ is dependent only on $\eta_h^k$ (details given later).

The approximate NPG $\omega_k$ is also obtained in $H$ inner loop steps, starting from arbitrary $\omega^k_{H}$, then recursively applying the stochastic gradient descent (SGD) method as stated below $\forall h\in\{H, \cdots, 2H-1\}$, and assigning $\omega_k =\omega^k_{2H}$\footnote{We initiate from $h=H$, instead of $h=0$ to emphasize that the iteration \eqref{eq:Acc_NPG} starts after $H$ iterations of \eqref{eq_washim_15}.}. 
\begin{align}
\label{eq:Acc_NPG}
\begin{split}
       \omega^k_{h+1} &= \omega^k_{h} - \gamma \hat{\nabla}_{\omega} L_{\nu^{\pi_{\theta_k}}}(\omega^k_{h}, \theta_k, \xi_k) 
\end{split}
\end{align}
where $\gamma$ is a learning parameter, and $\hat{\nabla}_{\omega} L_{\nu^{\pi_\theta}}(\omega^k_{h}, \theta_k, \xi_k)$ symbolizes an estimate of $\nabla_{\omega} L_{\nu^{\pi_{\theta_k}}}(\omega^k_{h}, \theta_k)$. We would like to clarify that the above gradient estimate depends on $\xi_k$ because of the presence of the advantage function in the expression of the policy gradient whose estimation needs both $\eta_k$, $\zeta_k$ (details given later). A process is stated below to calculate the gradient estimates used in \eqref{eq:Acc_NPG} and \eqref{eq_washim_15}.

\textbf{Gradient Estimation via MLMC:} Consider a $\pi_{\theta_k}$-induced trajectory $\mathcal{T}_{kh}= \{(s_t^{kh}, a_t^{kh}, s_{t+1}^{k h})\}_{t=0}^{l_{kh}-1}$ whose length is given as $l_{kh}=2^{Q_{kh}}$ where $Q_{kh}\sim \mathrm{Geom}(\frac{1}{2})$. We can write the $Q$ estimate as below 
$\forall t\in\{0, \cdots, l_{kh}-1\}$ using Bellman's equation \eqref{bellman}.
\begin{align}
\label{eq_def_q_hat}
    \hat{Q}^{\pi_{\theta_k}}(\xi_k, z_t^{kh}) = r(s_t^{kh}, a_t^{kh}) - \eta_k + \zeta_k^\top \phi(s_{t+1}^{kh})
\end{align}
where $z_t^{kh}\coloneqq (s_t^{kh}, a^{kh}_t, s^{kh}_{t+1})$. This leads to the advantage estimate (also called the temporal difference error) as follows $\forall t\in\{0, \cdots, l_{kh}-1\}$.
\begin{align}
\begin{split}
    \hat{A}^{\pi_{\theta_k}}(\xi_k, z_t^{kh}) &= r(s_t^{kh}, a_t^{kh}) - \eta_k \\
    &\hspace{1cm}+ \zeta_k^\top \left[\phi(s_{t+1}^{kh})-\phi(s_{t}^{kh})\right]
\end{split}
\end{align}
Define the following quantity $\forall t\in\{0, \cdots, l_{kh}-1\}$.
\begin{align}
\label{eq:ut_exp}
    &\mathbf{u}_t^{kh} = \hat{A}_{\mathbf{u}}(\theta_k, z_t^{kh})\omega_h^k-\hat{b}_{\mathbf{u}}(\theta_k, \xi_k, z_t^{kh})~\text{where}\\
    &\hat{A}_{\mathbf{u}}(\theta_k, z_t^{kh})=\nabla_\theta \log \pi_{\theta_k}(a_t^{kh}|s_t^{kh})\otimes \nabla_\theta \log \pi_{\theta_k}(a_t^{kh}|s_t^{kh}) \nonumber \\
    & \hat{b}_{\mathbf{u}}(\theta_k, \xi_k, z_t^{kh})= \hat{A}^{\pi_{\theta_k}}(\xi_k, z_t^{kh})\nabla_\theta \log\pi_{\theta_k}(a_t^{kh}|s_t^{kh})\nonumber
\end{align}
The term $\mathbf{u}^{kh}_t$ can be thought of as a crude estimate of $\nabla_\omega L_{\nu^{\pi_{\theta_k}}}(\omega^k_h, \theta_k)$, obtained using a single transition $z_t^{kh}$. One naive way to refine this estimate is to calculate an empirical average of $\{\mathbf{u}^{kh}_t\}_{t=0}^{l_{kh}-1}$. In this work, however, we resort to the MLMC method where the final estimate is given as follows.
\begin{align}
\label{eq_wash_19}
\begin{split}
        \hat{\nabla}_{\omega} &L_{\nu^{\pi_{\theta_k}}}(\omega^k_h, \theta_k, \xi_k) \\
    &= U_0 +
    \begin{cases}
    2^Q\left(U^Q-U^{Q-1}\right), &\text{if}~ 2^Q\leq T_{\max}\\
    0 &\text{otherwise}
    \end{cases}\\
    &=\hat{A}^{\mathrm{MLMC}}_{\mathbf{u},k, h}(\theta_k)\omega_h^k - \hat{b}^{\mathrm{MLMC}}_{\mathbf{u}, k, h} (\theta_k, \xi_k)
\end{split}
\end{align}
where $U^j = \frac{1}{2^j}\sum_{t=1}^{2^j}\mathbf{u}_t^{kh}$, $j\in\{Q-1, Q\}$, $T_{\max}\geq 2$ is a parameter, and  $\hat{A}^{\mathrm{MLMC}}_{\mathbf{u}, k, h}(\theta_k)$, and $\hat{b}^{\mathrm{MLMC}}_{\mathbf{u}, k, h}(\theta_k, \xi_k)$ denote MLMC-based estimates of samples $\{\hat{A}_{\mathbf{u}}(\theta_k, z_t^{kh})\}_{t=0}^{l_{kh}-1}$ and $\{\hat{b}_{\mathbf{u}}(\theta_k, \xi_k, z_t^{kh})\}_{t=0}^{l_{kh}-1}$ respectively. 

The advantage of MLMC is that it generates the same order of bias as the empirical average of $T_{\max}$ samples but requires only $\mathcal{O}(\log T_{\max})$ samples on an average. It also eliminates the requirement for the knowledge of the mixing time which was crucial in some earlier works \citep{bai2023regret} that employed empirical average-based estimates.

Using a similar method, we will now obtain an estimate of $\mathbf{v}_h(\theta_k, \xi^k_h)$. Following our earlier notations, we denote $\mathcal{T}_{kh}=\{(s^{kh}_t, a^{kh}_t)\}_{t=0}^{l_{kh}-1}$ as a $\pi_{\theta_k}$-induced trajectory of length $l_{kh}=2^{Q_{kh}}$, where $Q_{kh}\sim\mathrm{Geom}(\frac{1}{2})$. Notice the terms stated below $\forall t\in\{0, \cdots, l_{kh}-1\}$.
\begin{align}
\label{eq:vt_exp}
\begin{split}
    \mathbf{v}^{kh}_t &= \begin{bmatrix}
        c_{\beta}\left\{\eta_h^k - r(s_t^{kh}, a_t^{kh})\right\}\\
        \bigg\{(\zeta_h^k)^{\top}\phi(s_t^{kh}) - \hat{Q}^{\pi_{\theta_k}}(\xi_h^k, z_t^{kh})\bigg\}\phi(s_t^{kh})
    \end{bmatrix}\\
    &= \hat{A}_{\mathbf{v}}(z_t^{kh})\xi^k_h - \hat{b}_{\mathbf{v}}(z_t^{kh})
\end{split}
\end{align}
where $z_t^{kh}\coloneqq (s_t^{kh}, a_t^{kh}, s^{kh}_{t+1})$, $\hat{Q}^{\pi_{\theta_k}}(\xi_h^k, z_t^{kh})$ is given by \eqref{eq_def_q_hat} and $\hat{A}_{\mathbf{v}}(z_t^{kh})$, $\hat{b}_{\mathbf{v}}(z_t^{kh})$ are defined as
\begin{align}
    \label{eq_def_Az_matrix}
    \hat{A}_{\mathbf{v}}(z_t^{kh}) &= \scalebox{0.95}{$\begin{bmatrix}
        c_\beta & 0\\
        \phi(s_t^{kh}) & \phi(s_t^{kh})\left[\phi(s_{t}^{kh})-\phi(s_{t+1}^{kh})\right]^{\top}
    \end{bmatrix}$},\\
    \label{eq_def_bz_matrix}
    \hat{b}_{\mathbf{v}}(z_t^{kh}) &= \begin{bmatrix}
        c_{\beta} r(s_t^{kh}, a_t^{kh})\\
        r(s_t^{kh}, a_t^{kh})\phi(s_t^{kh})
    \end{bmatrix}
\end{align}
Based on \eqref{eq_grad_R_eta} and \eqref{eq_nu_grad}, the term $\mathbf{v}^{kh}_t$ can be thought of as a crude approximation of $\mathbf{v}_h(\theta_k, \xi_h^k)$ obtained using a single transition, $z_t^{kh}$. The final estimate is 
\begin{align}
\label{eq_wash_23}
    \mathbf{v}_h(\theta_k, \xi^k_h) &= V_0 +
    \begin{cases}
    2^Q\left(V^Q-V^{Q-1}\right), &\text{if}~ 2^Q\leq T_{\max}\\
    0 &\text{otherwise}
    \end{cases}\nonumber\\
    &= \hat{A}_{\mathbf{v}, k, h}^{\mathrm{MLMC}}\xi_h^k - \hat{b}_{\mathbf{v}, k, h}^{\mathrm{MLMC}}
\end{align}
where $V^j \coloneqq 2^{-j}\sum_{t=1}^{2^j}\mathbf{v}_t^{kh}$, $j\in\{Q-1, Q\}$. Moreover, $\hat{A}_{\mathbf{v},k, h}^{\mathrm{MLMC}}$ and $\hat{b}_{\mathbf{v}, k, h}^{\mathrm{MLMC}}$ symbolize MLMC-based estimates of $\{\hat{A}_{\mathbf{v}}(z_t^{kh})\}_{t=0}^{l_{kh}-1}$ and $\{\hat{b}_{\mathbf{v}}(z_t^{kh})\}_{t=0}^{l_{kh}-1}$ respectively. 

A few remarks are in order. Although the MLMC-based estimates achieve the same order of bias as the empirical average with a lower average sample requirement, its variance is larger. Many existing literature reduce the impact of the increased variance via AdaGrad-based parameter updates. Though such methods typically work well for general non-convex optimization problems, it does not exploit any inherent structure of strongly convex optimization problems, thereby being sub-optimal for both the NPG-finding subroutine and the average reward and critic updates. In this paper, we resort to a version of the standard SGD to cater to our following needs. First, by judiciously choosing the learning parameters, we prove that it is possible to achieve the optimal rate without invoking AdaGrad-type updates. Moreover, the choice of our parameters does not require the knowledge of the mixing time. Finally, although our gradient estimates suffer from bias due to the inherent error present in the critic approximation, our novel analysis suitably handles these issues.

\section{Main Results}

We first state some relevant assumptions. Let $A_{\mathbf{v}}(\theta) \coloneqq \E_{\theta}\left[ \hat{A}_{\mathbf{v}}(z)\right]$, and $b_{\mathbf{v}}(\theta) \coloneqq \E_{\theta} \left[\hat{b}_{\mathbf{v}}(z)\right]$ where matrices $\hat{A}_{\mathbf{v}}(z)$, $\hat{b}_{\mathbf{v}}(z)$ are described in \eqref{eq_def_Az_matrix}, \eqref{eq_def_bz_matrix} and the expectation $\E_\theta$ is computed over the distribution of $z=(s, a, s')$ where $(s, a)\sim \nu^{\pi_\theta}$, $s'\sim P(\cdot|s, a)$. For arbitrary policy parameter $\theta$, we denote $\xi_{\theta}^*=[A_{\mathbf{v}}(\theta)]^{\dagger}b_{\mathbf{v}}(\theta)=[\eta_{\theta}^*, \zeta_{\theta}^*]^{\top}$. Using these notations, below we state some assumptions related to the critic analysis.
\begin{assumption}
\label{assump:critic-error}
    We assume the critic approximation error, $\epsilon_{\mathrm{app}}$ (defined below) is finite.
\begin{align}
    \epsilon_{\mathrm{app}} = \sup_{\theta} ~ E(\theta, \zeta_{\theta}^*)
\end{align}
\end{assumption}

\begin{assumption} \label{assum:critic_positive_definite}
    There exist $\lambda> 0$ such that $\forall \theta$
    \begin{align}
        \E_{\theta}[\phi(s)(\phi(s)-\phi(s'))^\top]\succcurlyeq \lambda I
    \end{align}
    where $\succcurlyeq$ is the positive semidefinite inequality and the expectation, $\E_{\theta}$ is obtained over $s\sim d^{\pi_\theta}$, $s'\sim P^{\pi_\theta}(s, \cdot)$.
\end{assumption}
 
 Both Assumptions \ref{assump:critic-error} and \ref{assum:critic_positive_definite} are frequently employed in the analysis of actor-critic methods \cite{suttle2023beyond,patel2024global,wu2020finite,panda2024twotimescalecriticactoraveragereward}. Assumption \ref{assump:critic-error} intuitively relates to the quality of the feature mapping where  $\epsilon_{\mathrm{app}}$ measures the quality. Well-designed feature maps may lead to small or even zero $\epsilon_{\mathrm{app}}$, whereas poorly designed features result in higher errors. Assumption \ref{assum:critic_positive_definite} is essential for guaranteeing the uniqueness of the solution to the minimization problem \eqref{value_function_problem}. Assumption \ref{assum:critic_positive_definite} also follows when the set of policy parameters, $\Theta$ is compact and $e\notin W_{\phi}$, where $e$ is the vector of all ones and $W_{\phi}$ is the space spanned by the feature vectors. To see this, note that if $e \notin W_{\phi}$, there exists $\lambda_{\theta}$ for every policy $\pi_{\theta}$ such that $\E_{\theta}[\phi(s)(\phi(s)-\phi(s'))^\top]\succcurlyeq \lambda_{\theta} I$ \cite{NEURIPS2021_096ffc29}. Since $\Theta$ is compact, setting $\lambda=\inf_{\theta\in\Theta}\lambda_{\theta}>0$ satisfies Assumption \ref{assum:critic_positive_definite}. 
 
 For large enough $c_{\beta}$, Assumption \ref{assum:critic_positive_definite} implies that $A_{\mathbf{v}}(\theta)-(\lambda/2)I$ is  positive definite (refer to Lemma \ref{lem:td-pd}). It also implies that $A_{\mathbf{v}}(\theta)$ is invertible. We will now state some assumptions related to the policy parameterization.

\begin{assumption}
\label{assump:function_approx_error}
For any $\theta$, the \textit{transferred compatible function approximation error}, $L_{\nu^{\pi^*}}(\omega_{\theta}^*; \theta)$, satisfies the following inequality.
\begin{align*}
&L_{\nu^{\pi^*}}(\omega_{\theta}^*; \theta) \leq \epsilon_{\mathrm{bias}}
\end{align*} 
where $\omega_{\theta}^*$ denotes the exact NPG direction at $\theta$ defined by \eqref{eq_exact_npg}, $\pi^*$ indicates the optimal policy, and the function $L_{\nu^{\pi^*}}(\cdot, \cdot)$ is given by \eqref{eq_def_L_nu}.
\end{assumption}

\begin{assumption}
    \label{assump:score_func_bounds}
    For all $\theta, \theta_1,\theta_2$ and $(s,a)\in\mathcal{S}\times\mathcal{A}$, the following statements hold.
    \begin{align}
   &(a) \text{  }\Vert\nabla_{\theta}\log\pi_\theta(a\vert s)\Vert\leq G_1 \nonumber \\
    &(b) \text{  } \Vert \nabla_{\theta}\log\pi_{\theta_1}(a\vert s)-\nabla_\theta\log\pi_{\theta_2}(a\vert s)\Vert\leq G_2\Vert \theta_1-\theta_2\Vert \nonumber
    \end{align}
\end{assumption}

\begin{assumption}[Fisher non-degenerate policy]
    \label{assump:FND_policy}
    There exists a constant $\mu>0$ such that $F(\theta)-\mu I_{d}$ is positive semidefinite where $I_{d}$ denotes an identity matrix.
\end{assumption}

{\bf Comments on Assumptions \ref{assump:function_approx_error}-\ref{assump:FND_policy}:}  We would like to highlight that all these assumptions are commonly found in PG literature \cite{liu2020improved,agarwal2021theory, papini2018stochastic, xu2019sample,fatkhullin2023stochastic}. We elaborate more on these assumptions below. 

The term $\epsilon_{\mathrm{bias}}$ captures the expressivity of the parameterized policy class. If, for example, the policy class is complete such as in the case of softmax parametrization, $\epsilon_{\mathrm{bias}}=0$ \citep{agarwal2021theory}. However, for restricted parametrization which may not contain all stochastic policies, we have $\epsilon_{\mathrm{bias}}>0$. It is known that $\epsilon_{\mathrm{bias}}$ is insignificant for rich neural parametrization \citep{wang2019neural}. Note that Assumption \ref{assump:score_func_bounds} requires that the score function is bounded and Lipschitz continuous. This assumption is widely used in the analysis of PG-based methods \citep{liu2020improved,agarwal2021theory, papini2018stochastic, xu2019sample,fatkhullin2023stochastic}. Assumption \ref{assump:FND_policy} requires that the eigenvalues of the Fisher information matrix can be bounded from below and is commonly used in obtaining global complexity bounds for PG-based procedures \citep{liu2020improved,zhang2021on,Bai_Bedi_Agarwal_Koppel_Aggarwal_2022,fatkhullin2023stochastic}. Assumptions \ref{assump:score_func_bounds}-\ref{assump:FND_policy} were shown to hold for various examples recently including Gaussian policies with linearly parameterized means and certain neural parametrizations \citep{liu2020improved, fatkhullin2023stochastic}.

With the relevant assumptions in place, we are now ready to state our main result.

\begin{theorem}
\label{thm:main-conv-rate}
    Consider Algorithm \ref{alg:ranac} with $K=\Theta(\sqrt{T})$,  $H=\Theta(\sqrt{T}/\log(T))$. Let Assumptions \ref{assump:ergodic_mdp}-\ref{assump:FND_policy} hold and $J$ be $L$-smooth. There exists a choice of parameters such that the following holds for a sufficiently large $T$.
    \begin{align}
        J^*-\frac{1}{K}\sum_{k=0}^{K-1}\E[J(\theta_k)] &\leq  \cO \left(\sqrt{\epsilon_{\mathrm{app}}} +\sqrt{\epsilon_{\mathrm{bias}}} \right)\nonumber\\
        &\hspace{1cm}+\tilde{\cO}\left(t_{\mathrm{mix}}^3 T^{-1/2}\right)
    \end{align}
    where $J^*$ is the optimal value of $J(\cdot)$.
\end{theorem}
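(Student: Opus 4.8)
The plan is to chain the three auxiliary results advertised in the contributions—Lemma \ref{lemma:local_global}, Theorem \ref{thm:npg-main}, and Theorem \ref{th:acc_td}—and then balance the free parameters $(\alpha,\beta,\gamma,c_\beta,T_{\max},K,H)$ against the sample budget $T$. First I would invoke Lemma \ref{lemma:local_global} to pass from the global optimality gap to local NPG-estimation quantities: it should bound $J^*-\frac{1}{K}\sum_{k}\E[J(\theta_k)]$ by an irreducible expressivity floor $\cO(\sqrt{\epsilon_{\mathrm{bias}}})$ (via Assumption \ref{assump:function_approx_error} and the performance-difference/compatible-approximation argument), a telescoping term of order $\frac{1}{\alpha K}$ obtained by summing the per-epoch NPG descent inequality and using non-negativity of the KL divergence $\mathrm{KL}(\pi^*\|\pi_{\theta_k})$, and two residual terms proportional to the per-epoch NPG \emph{bias} $\|\E[\omega_k]-\omega^*_{\theta_k}\|$ and the \emph{second-order error} $\E\|\omega_k-\omega^*_{\theta_k}\|^2$. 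The hypothesis that $J$ is $L$-smooth enters precisely in this descent step, contributing the $\frac{L\alpha^2}{2}\E\|\omega_k\|^2$ term.

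Next I would bound the two NPG-estimation errors using Theorem \ref{thm:npg-main}. The NPG subroutine \eqref{eq:Acc_NPG} is SGD on the strongly convex objective $L_{\nu^{\pi_{\theta_k}}}$ whose Hessian is the Fisher matrix $F(\theta_k)\succeq\mu I$ (Assumption \ref{assump:FND_policy}), and—crucially—the MLMC gradient estimate \eqref{eq_wash_19} is \emph{affine} in $\xi_k$ through $\hat b_{\mathbf u}$. Linearity is what lets the expectation pass through, so the NPG bias depends on the critic only via $\|\E[\xi_k]-\xi^*_{\theta_k}\|$ rather than the larger $\E\|\xi_k-\xi^*_{\theta_k}\|$; this is the refinement highlighted in the contributions. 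Since \eqref{eq:Acc_NPG} is a linear recursion $\omega_{h+1}=(I-\gamma\hat A)\omega_h+\gamma\hat b$ with positive-definite mean operator, Theorem \ref{thm_2} supplies a bias contracting geometrically in $H$ plus an $\cO(1/H)$ second-order contribution from the MLMC variance, leaving $\sqrt{\epsilon_{\mathrm{app}}}$ as the only floor inherited from the critic target.

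I would then bound the critic bias $\|\E[\xi_k]-\xi^*_{\theta_k}\|$ via Theorem \ref{th:acc_td}, which is itself an instance of Theorem \ref{thm_2}: the update \eqref{eq_washim_15}--\eqref{eq_wash_23} is linear in $\xi_h^k$ with mean operator $A_{\mathbf v}(\theta_k)$, positive definite for large $c_\beta$ by Lemma \ref{lem:td-pd} (from Assumption \ref{assum:critic_positive_definite}). Because the MLMC estimator \eqref{eq_wash_23} is unbiased for the $T_{\max}$-step empirical-average gradient, the only residual is the $\cO(t_{\mathrm{mix}}/T_{\max})$ mixing bias, which I would drive below the target rate by taking $T_{\max}=\mathrm{poly}(T)$; combined with the geometric-in-$H$ contraction this gives $\|\E[\xi_k]-\xi^*_{\theta_k}\|=\tilde\cO((1-\beta\lambda)^{H})$ up to negligible terms, and $\E\|\xi_k-\xi^*_{\theta_k}\|^2=\cO(1/H)$ with mixing-time-dependent constants. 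Substituting back and balancing: the expected sample count is $T=\Theta(KH\log T_{\max})$, consistent with $K=\Theta(\sqrt T)$ and $H=\Theta(\sqrt T/\log T)$; the KL term $\frac{1}{\alpha K}=\cO(T^{-1/2})$ at constant $\alpha$; the geometric-in-$H$ bias terms are super-polynomially small; and the surviving $\cO(1/H)=\tilde\cO(T^{-1/2})$ second-order terms, carrying the mixing-time dependence of the MLMC variance through the critic-to-NPG propagation, assemble into $\tilde\cO(t_{\mathrm{mix}}^3 T^{-1/2})$, with the floors collecting into $\cO(\sqrt{\epsilon_{\mathrm{app}}}+\sqrt{\epsilon_{\mathrm{bias}}})$.

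The main obstacle is the critic-bias step. As the paper stresses, a linear stochastic recursion driven by \emph{Markovian} noise need not have exponentially decaying bias—it can stay constant—so the naive argument fails without knowledge of $t_{\mathrm{mix}}$. The delicate point is to show within Theorem \ref{thm_2} that the MLMC gradient's expectation equals the true mean-field gradient up to a $T_{\max}$-controllable mixing bias, thereby restoring near-unbiasedness of the linear recursion without ever using $t_{\mathrm{mix}}$ in the algorithm, while simultaneously keeping the inflated MLMC variance (hence the $t_{\mathrm{mix}}$ powers) small enough that the $\cO(1/H)$ second-order terms still close at the $T^{-1/2}$ rate.
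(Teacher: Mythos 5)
Your overall architecture matches the paper's: Lemma \ref{lemma:local_global} reduces the global gap to the NPG bias $\E\Vert\E_k[\omega_k]-\omega_k^*\Vert$ and the second-order error, Theorems \ref{thm:npg-main} and \ref{th:acc_td} (both instances of Theorem \ref{thm_2}) bound those, and the parameters are balanced as you describe. You also correctly isolate the two ideas that upgrade $\tilde{\cO}(T^{-1/4})$ to $\tilde{\cO}(T^{-1/2})$: linearity of $\hat{b}_{\mathbf{u}}$ in $\xi_k$ lets the bias be measured as $\Vert\E_k[\xi_k]-\xi_k^*\Vert$ rather than $\E\Vert\xi_k-\xi_k^*\Vert$, and since the bias enters at first power it must be $\cO(1/H)$ in norm, which the second statement of Theorem \ref{thm_2} delivers.

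There is, however, one genuine gap. Lemma \ref{lemma:local_global} produces the term $\frac{\alpha G_2}{2K}\sum_k\E\Vert\omega_k\Vert^2$ --- with constant $G_2$ from Assumption \ref{assump:score_func_bounds}(b), not $L$ --- and $\E\Vert\omega_k\Vert^2$ is \emph{not} a quantity your subroutine analysis makes small: decomposing as in \eqref{eq:eq_21} leaves $\E\Vert\omega_k^*\Vert^2\leq\mu^{-2}\E\Vert\nabla_\theta J(\theta_k)\Vert^2$, which by Lemma \ref{lemma_appndx_ergodic_mdp_advantage_bound} is only bounded by the constant $\cO(\mu^{-2}G_1^2t_{\mathrm{mix}}^2)$. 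At constant $\alpha$ (which you need for $\frac{1}{\alpha K}=\cO(T^{-1/2})$) this contributes a non-vanishing $\cO(\alpha t_{\mathrm{mix}}^2)$ term, and shrinking $\alpha$ instead trades you back to $T^{-1/4}$. The paper closes this with a separate self-bounding step, \eqref{eq:eq_26}--\eqref{eq_appndx_wash_62}: the $L$-smooth ascent inequality for $J$ is telescoped over $k$ to show, for $\alpha=\mu^2/(4G_1^2L)$,
\begin{align*}
\dfrac{\mu^{-2}}{K}\sum_{k=0}^{K-1}\Vert\nabla_\theta J(\theta_k)\Vert^2\leq \cO\!\left(\dfrac{1}{K}\right)+\cO\!\left(\dfrac{1}{K}\sum_{k=0}^{K-1}\Vert\omega_k-\omega_k^*\Vert^2\right),
\end{align*}
which is exactly where $L$-smoothness of $J$ enters. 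Your proposal instead places $L$-smoothness inside the KL descent step (where the relevant constant is $G_2$) and never accounts for the $\E\Vert\omega_k^*\Vert^2$ contribution. A secondary imprecision: with $\bar{\beta}=\Theta(\log H/(\lambda_P H))$ the contraction in Theorem \ref{thm_2} yields only an $H^{-2}$ factor, and the MLMC residual bias floors at $\cO(t_{\mathrm{mix}}/T_{\max})=\cO(t_{\mathrm{mix}}/H^2)$, so the bias terms are not super-polynomially small --- they are $\Theta(1/H)$ in norm and are rate-determining rather than negligible; the bound still closes, but for that reason rather than the one you give.
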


    The values of the learning parameters used in the above theorem can be found in Appendix \ref{sec:final-bound}. It is to be mentioned that the above bound of $\Tilde{\cO}(1/\sqrt{T})$ is a significant improvement in comparison to the state-of-the-art bounds of $\Tilde{\cO}(1/T^{1/4})$ in the average reward general parameterization setting \cite{bai2023regret,patel2024global}. Also, our bounds do not depend on the size of the action space and hitting time unlike that in \cite{bai2023regret}. Although \cite{patel2024global} provides bounds with $\cO(\sqrt{t_{\mathrm{mix}}})$ dependence, these bounds, unfortunately, depend on the projection radius of the critic updates, $R_{\omega}$, which can be large and scale with $t_{\mathrm{mix}}$ \cite{wei2020model}. In contrast, our algorithm does not use such projection operators and therefore, does not scale with $R_{\omega}$. 

\section{Proof Outline}


\subsection{Policy update analysis}
\begin{lemma}
    \label{lemma:local_global}
    Consider any policy update rule of form
    \begin{align}
        \theta_{k+1} = \theta_k + \alpha \omega_k.
    \end{align}
     If Assumptions \ref{assump:function_approx_error} and \ref{assump:score_func_bounds} hold, then the following inequality is satisfied for any $K$.
     \begin{align}\label{eq:general_bound}
			&J^{*}-\frac{1}{K}\sum_{k=0}^{K-1}\E[J(\theta_k)]\leq \sqrt{\epsilon_{\mathrm{bias}}}\nonumber \\
            &+\frac{G_1}{K}\sum_{k=0}^{K-1}\E\Vert(\E_k\left[\omega_k\right]-\omega^*_k)\Vert +\frac{\alpha G_2}{2K}\sum_{k=0}^{K-1}\E\Vert \omega_k\Vert^2\nonumber \\
            &+\frac{1}{\alpha K}\E_{s\sim d^{\pi^*}}[\mathrm{KL}(\pi^*(\cdot\vert s)\Vert\pi_{\theta_0}(\cdot\vert s))]
		\end{align}
  where $\mathrm{KL}(\cdot \|\cdot)$ is the Kullback-Leibler divergence, $\omega^*_k$ is the NPG direction $F(\theta_k)^{-1}\nabla J(\theta_k)$, $\pi^*$ is the optimal policy, $J^*$ is the optimal value of the function $J(\cdot)$, and $\E_k[\cdot]$ denotes conditional expectation given the history up to epoch $k$.
\end{lemma}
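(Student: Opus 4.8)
The plan is to combine the average-reward performance difference lemma with a Kullback--Leibler drift argument driven by the update $\theta_{k+1} = \theta_k + \alpha\omega_k$, in the spirit of standard NPG global-convergence analyses, while inserting one measurability observation that produces the sharper $\E\|\E_k[\omega_k]-\omega_k^*\|$ term. First I would invoke the performance difference lemma to write, for each $k$,
\[
J^* - J(\theta_k) = \E_{(s,a)\sim\nu^{\pi^*}}\!\left[A^{\pi_{\theta_k}}(s,a)\right],\qquad \nu^{\pi^*}(s,a)=d^{\pi^*}(s)\pi^*(a|s).
\]
I would then insert the exact NPG direction by splitting the advantage as $A^{\pi_{\theta_k}} = (\omega_k^*)^\top\nabla\log\pi_{\theta_k} + \big(A^{\pi_{\theta_k}} - (\omega_k^*)^\top\nabla\log\pi_{\theta_k}\big)$. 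By Cauchy--Schwarz (Jensen) together with the definition of $L_{\nu^{\pi^*}}(\cdot,\cdot)$ in \eqref{eq_def_L_nu}, the residual integrates to at most $\sqrt{2\,L_{\nu^{\pi^*}}(\omega_k^*;\theta_k)} \le \sqrt{2\,\epsilon_{\mathrm{bias}}}$ via Assumption \ref{assump:function_approx_error}, which yields the $\sqrt{\epsilon_{\mathrm{bias}}}$ contribution.

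Next I would control the leading term $\E_{(s,a)\sim\nu^{\pi^*}}[(\omega_k^*)^\top\nabla\log\pi_{\theta_k}(a|s)]$ through a one-step KL decrease. Since Assumption \ref{assump:score_func_bounds}(b) makes $\log\pi_\theta(a|s)$ a $G_2$-smooth function of $\theta$, the update gives the lower bound
\[
\log\pi_{\theta_{k+1}}(a|s) - \log\pi_{\theta_k}(a|s) \ge \alpha\,\omega_k^\top\nabla\log\pi_{\theta_k}(a|s) - \frac{\alpha^2 G_2}{2}\|\omega_k\|^2.
\]
Taking $\E_{a\sim\pi^*(\cdot|s)}$ and recognizing the left-hand side as $\mathrm{KL}(\pi^*(\cdot|s)\|\pi_{\theta_k}(\cdot|s)) - \mathrm{KL}(\pi^*(\cdot|s)\|\pi_{\theta_{k+1}}(\cdot|s))$, I would rearrange to upper-bound $\alpha\,\E_{a\sim\pi^*}[\omega_k^\top\nabla\log\pi_{\theta_k}]$ by the KL drop plus $\tfrac{\alpha^2 G_2}{2}\|\omega_k\|^2$. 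Passing from the estimate $\omega_k$ to the target $\omega_k^*$ introduces the cross term $\E_{(s,a)\sim\nu^{\pi^*}}[(\omega_k^*-\omega_k)^\top\nabla\log\pi_{\theta_k}]$, whose magnitude is controlled by $G_1\|\omega_k^*-\omega_k\|$ using Assumption \ref{assump:score_func_bounds}(a) and $\|\pi^*(\cdot|s)\|_1 = 1$.

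The key refinement, and the step I expect to be most delicate, is how the expectation is taken on this cross term. Because $\theta_k$ (hence both $\omega_k^*$ and the score $\nabla\log\pi_{\theta_k}$) is measurable with respect to the history $\mathcal{F}_k$ up to epoch $k$, while only $\omega_k$ is random given $\mathcal{F}_k$, I can pull the conditional expectation inside:
\[
\E_k\!\left[(\omega_k^* - \omega_k)^\top\nabla\log\pi_{\theta_k}\right] = (\omega_k^* - \E_k[\omega_k])^\top\nabla\log\pi_{\theta_k},
\]
so that after taking $\E_{(s,a)\sim\nu^{\pi^*}}$ and the outer expectation this term is bounded by $G_1\,\E\|\omega_k^*-\E_k[\omega_k]\|$. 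This is precisely what converts the loose $\E\|\omega_k-\omega_k^*\|$ of prior work into the sharper $\E\|\E_k[\omega_k]-\omega_k^*\|$, and getting the conditioning right (identifying what is $\mathcal{F}_k$-measurable, and that the optimal-distribution averages over $s$ and $a$ commute with $\E_k$) is the crux of the argument.

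Finally I would assemble the pieces: combining the advantage decomposition with the rearranged KL inequality gives, for each $k$, an upper bound on $J^*-J(\theta_k)$ consisting of $\sqrt{2\epsilon_{\mathrm{bias}}}$, the per-step KL drop divided by $\alpha$, the $\tfrac{\alpha G_2}{2}\|\omega_k\|^2$ term, and the cross term. Taking expectations, applying the conditioning identity above, summing over $k = 0,\dots,K-1$ so that the KL terms telescope to $\E_{s\sim d^{\pi^*}}[\mathrm{KL}(\pi^*\|\pi_{\theta_0})] - \E_{s\sim d^{\pi^*}}[\mathrm{KL}(\pi^*\|\pi_{\theta_K})] \le \E_{s\sim d^{\pi^*}}[\mathrm{KL}(\pi^*\|\pi_{\theta_0})]$ by nonnegativity of the KL divergence, and dividing by $K$ yields exactly \eqref{eq:general_bound}.
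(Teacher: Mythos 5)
Your proposal is correct and follows essentially the same route as the paper's proof: the performance difference lemma, the $G_2$-smoothness lower bound on the one-step log-policy change, the Jensen/Cauchy--Schwarz step with Assumption \ref{assump:function_approx_error} for the $\sqrt{\epsilon_{\mathrm{bias}}}$ term, the tower-property/measurability argument that sharpens the cross term to $G_1\E\|\E_k[\omega_k]-\omega_k^*\|$, and the telescoping of the KL terms. The only cosmetic differences are that you start from the performance difference decomposition while the paper starts from the KL drop (the same chain read in the opposite direction), and that you carry the honest $\sqrt{2\epsilon_{\mathrm{bias}}}$ constant where the paper silently absorbs the factor of $2$ arising from the $\tfrac{1}{2}$ in the definition of $L_{\nu^{\pi^*}}$.
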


Note that the last term in \eqref{eq:general_bound} is  $\cO(1/K)$. The term $\E\|\omega_k\|^2$ can be further decomposed as
\begin{align}
\label{eq:eq_21}
    \begin{split}
        \E\Vert \omega_k \Vert^2&\leq 2\E\Vert \omega_k -\omega_k^*\Vert^2 + 2\E\Vert \omega_k^* \Vert^2\\
        &\overset{(a)}{\leq} 2\E\Vert \omega_k -\omega_k^*\Vert^2 + \dfrac{2}{\mu^2 }\E\Vert \nabla_{\theta} J(\theta_k) \Vert^2
    \end{split}
\end{align}
where $(a)$ follows from Assumption \ref{assump:FND_policy} and the definition that $\omega_k^*=F(\theta_k)^{-1}\nabla_\theta J(\theta_k)$. Further, it can be proven that for the choice of $\alpha$ used in Theorem \ref{thm:main-conv-rate}, we have
\begin{align}
 \begin{split}
     \dfrac{1}{\mu^2 K}&\left(\sum_{k=0}^{K-1}\Vert\nabla_\theta J(\theta_k)\Vert^2\right)\leq  \dfrac{32LG_1^4}{\mu^{4} K}\\
     &+\left(\dfrac{2G_1^4}{\mu^2}+1\right)\left(\dfrac{1}{K}\sum_{k=0}^{K-1}\Vert \omega_k-\omega_k^*\Vert^2\right)
 \end{split}
 \end{align}

Evidently, one can obtain a global convergence bound by bounding the terms $\E\|\omega_k-\omega^*_k\|^2$, and $\E\Vert(\E_k\left[\omega_k\right]-\omega^*_k)\Vert$. These terms define the second-order error and bias of the NPG estimator, $\omega_k$. In the next subsections, we briefly describe how to obtain these bounds.

\subsection{Analysis of a General Linear Recursion}

Observe that the NPG finding subroutine \eqref{eq:Acc_NPG} and the update of the critic parameter and the average reward \eqref{eq_washim_15} can be written in the following form for a given $k$.
\begin{align}
\label{eq:xt}
    x_{h+1} = x_h - \bar{\beta} (\hat{P}_h x_h - \hat{q}_h)
\end{align}
where $\hat{P}_h$, $\hat{q}_h$ are MLMC based estimates of the matrices $P\in\mathbb{R}^{n\times n}$, $q\in\mathbb{R}^n$ respectively, and $h\in\{0, \cdots, H-1\}$. Assume that the following bounds hold $\forall h$.
\begin{align}
\label{eq_cond_1}
\begin{split}
   &\E_h\left[\norm{\hat{P}_h-P}^2\right]\leq \sigma_P^2, ~ \norm{\E_h\left[\hat{P}_h\right]-P}^2\leq \delta^2_P,\\
   &   \E_h\left[\norm{\hat{q}_h-q}^2\right]\leq \sigma_q^2, ~ \norm{\E_h\left[\hat{q}_h\right]-q}^2\leq \delta_q^2,\\
   &\text{and~}\norm{\E\left[\hat{q}_h\right]-q}^2\leq \bar{\delta}_q^2
\end{split}
\end{align}
where $\E_h$ denotes conditional expectation given history up to step $h$. Since $\E[\hat{q}_h]=\E[\E_h[\hat{q}_h]]$, we have $\bar{\delta}^2_q\leq \delta_q^2$. Additionally, assume that
\begin{align}
\label{eq_cond_2}
    \begin{split}
        0<\lambda_P \leq \norm{P}\leq \Lambda_P~~\text{and}~\norm{q} \leq \Lambda_q
    \end{split}
\end{align}
The condition that $\lambda_P>0$ implies that $P$ is invertible. The goal of recursion \eqref{eq:xt} is to approximate the term $x^*=P^{-1}q$. We have the following result.

\begin{theorem}
\label{thm_2}
    Consider the recursion  \eqref{eq:xt}. Assume that the conditions \eqref{eq_cond_1}, and \eqref{eq_cond_2} hold. Also, let $\delta_P\leq \lambda_P/8$, and $\bar{\beta} = \frac{2 \log H}{\lambda_P H}$. The following relation holds whenever $H$ is sufficiently large. 
    \begin{align*}
    &\E\left[\|x_H-x^*\|^2\right]\leq \dfrac{\E\left[\|x_0-x^*\|^2\right]}{H^2} + \tilde{\cO}\bigg(\dfrac{R_0}{H} + R_1\bigg)
    \end{align*}
    where $R_0 = \lambda_P^{-4}\Lambda_q^2\sigma_P^2+ \lambda_P^{-2}\sigma_q^2$, $R_1= \lambda_P^{-2}\big[\delta_P^2 \lambda_P^{-2}\Lambda_q^2+\delta_q^2\big]$, and $\tilde{\cO}(\cdot)$ hides logarithmic factors of $H$. Moreover,
    \begin{align*}
    &\|\E[x_H]-x^*\|^2\leq \frac{\|\E[x_0]-x^*\|^2}{H^2}+\cO(\bar{R}_1)\\
    &+\cO\left(\lambda_P^{-2}\delta_P^2\left\{\E\left[\|x_0 - x^*\|^2\right]+\mathcal{O}\left(R_0+R_1\right)\right\}\right)
    \end{align*}
    where $\bar{R}_1 = \lambda_P^{-2}\big[\delta_P^2 \lambda_P^{-2}\Lambda_q^2+\bar{\delta}_q^2\big]$.
\end{theorem}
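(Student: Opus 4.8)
The plan is to analyze the two moments separately, establishing the second-moment bound first and feeding it into the first-moment (bias) bound. Writing $e_h \coloneqq x_h - x^*$ and using $Px^* = q$, the recursion \eqref{eq:xt} becomes
\begin{equation*}
e_{h+1} = (I - \bar\beta \hat P_h)e_h - \bar\beta\zeta_h, \qquad \zeta_h \coloneqq (\hat P_h - P)x^* - (\hat q_h - q).
\end{equation*}
Throughout I use that $e_h$ is measurable with respect to the history $\mathcal F_h$, so $\E_h[\hat P_h e_h] = \E_h[\hat P_h]e_h$ and $\E_h[\zeta_h] = (\E_h[\hat P_h]-P)x^* - (\E_h[\hat q_h]-q)$, whose norm is at most $\delta_P\|x^*\| + \delta_q$ by \eqref{eq_cond_1}. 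I also repeatedly use $\|x^*\| = \|P^{-1}q\|\le \lambda_P^{-1}\Lambda_q$, which follows from reading \eqref{eq_cond_2} as $\langle x,Px\rangle \ge \lambda_P\|x\|^2$ (the interpretation consistent with the applications, where $P$ is $F(\theta)$ or $A_{\mathbf v}(\theta)$).

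For the second moment, I expand $\E_h\|e_{h+1}\|^2$. The leading quadratic term $\E_h\|(I-\bar\beta\hat P_h)e_h\|^2$ contracts: the inner-product piece is $-2\bar\beta\langle \E_h[\hat P_h]e_h,e_h\rangle \le -2\bar\beta(\lambda_P-\delta_P)\|e_h\|^2 \le -\tfrac{7}{4}\bar\beta\lambda_P\|e_h\|^2$ by $\delta_P\le\lambda_P/8$, while $\bar\beta^2\E_h\|\hat P_h e_h\|^2 = \cO(\bar\beta^2)\|e_h\|^2$ via $\E_h\|\hat P_h\|^2 \le 2\sigma_P^2+2\Lambda_P^2$. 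The cross term contributes $-2\bar\beta\langle e_h,\E_h[\zeta_h]\rangle \le 2\bar\beta\|e_h\|(\delta_P\|x^*\|+\delta_q)$, which Young's inequality splits into $\tfrac{\bar\beta\lambda_P}{4}\|e_h\|^2$ plus an order $\lambda_P^{-1}\bar\beta(\delta_P^2\|x^*\|^2+\delta_q^2)$ constant; the residual $\cO(\bar\beta^2)$ cross piece is absorbed likewise. Finally $\bar\beta^2\E_h\|\zeta_h\|^2 \le 2\bar\beta^2(\sigma_P^2\|x^*\|^2+\sigma_q^2)$. Collecting terms, with $\bar\beta$ small enough that the $\cO(\bar\beta^2)$ pieces are dominated, I obtain
\begin{equation*}
\E_h\|e_{h+1}\|^2 \le (1-\bar\beta\lambda_P)\|e_h\|^2 + C, \quad C = \cO\!\big(\lambda_P^{-1}\bar\beta(\delta_P^2\|x^*\|^2+\delta_q^2)\big) + \cO\!\big(\bar\beta^2(\sigma_P^2\|x^*\|^2+\sigma_q^2)\big).
\end{equation*}
Unrolling yields $\E\|e_H\|^2 \le (1-\bar\beta\lambda_P)^H\E\|e_0\|^2 + C/(\bar\beta\lambda_P)$; substituting $\bar\beta=\tfrac{2\log H}{\lambda_P H}$ gives $(1-\bar\beta\lambda_P)^H \le H^{-2}$, while $C/(\bar\beta\lambda_P)$ reduces to $8R_1 + \tfrac{4\log H}{H}R_0$ after inserting $\|x^*\|\le\lambda_P^{-1}\Lambda_q$, proving the first claim.

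For the bias, I take full expectations. Writing $g_h\coloneqq \E[e_h]=\E[x_h]-x^*$, the coupling $\E[\hat P_h e_h]=P g_h + \E[(\E_h[\hat P_h]-P)e_h]$ produces
\begin{equation*}
g_{h+1} = (I-\bar\beta P)g_h + b_h, \qquad \|b_h\| \le \bar\beta\delta_P\,\E\|e_h\| + \bar\beta(\delta_P\|x^*\| + \bar\delta_q),
\end{equation*}
where crucially the bias of $\hat q_h$ now enters only through the full-expectation quantity $\bar\delta_q$ (hence $\bar R_1$ rather than $R_1$). Since $\langle x,Px\rangle\ge\lambda_P\|x\|^2$ and $\|P\|\le\Lambda_P$, for small $\bar\beta$ one has $\|I-\bar\beta P\|\le 1-\tfrac{\bar\beta\lambda_P}{2}$, so unrolling and using $\sum_{j}(1-\tfrac{\bar\beta\lambda_P}{2})^{H-1-j}\le \tfrac{2}{\bar\beta\lambda_P}$ gives $\|g_H\| \le (1-\tfrac{\bar\beta\lambda_P}{2})^{H}\|g_0\| + \tfrac{2}{\lambda_P}(\delta_P\max_{j}\E\|e_j\| + \delta_P\|x^*\| + \bar\delta_q)$. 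Squaring, the first term is $\le H^{-2}\|g_0\|^2$ by the choice of $\bar\beta$; bounding $\max_j\E\|e_j\| \le (\E\|e_0\|^2+\cO(R_0+R_1))^{1/2}$ by the already-proven second-moment estimate and $\|x^*\|\le\lambda_P^{-1}\Lambda_q$ produces exactly the $\cO(\bar R_1)$ term together with $\cO(\lambda_P^{-2}\delta_P^2\{\E\|x_0-x^*\|^2+\cO(R_0+R_1)\})$, completing the second claim.

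The main obstacle is the bias bound, specifically the coupling term $\E[(\E_h[\hat P_h]-P)e_h]$: because $\hat P_h$ is biased by up to $\delta_P$ as an estimate of $P$, one cannot replace $\E[\hat P_h e_h]$ by $P\,\E[e_h]$, and the residual couples the $\cO(\delta_P)$ operator bias with the \emph{magnitude} $\E\|e_h\|$ of the iterate error. This forces the two-stage structure — the second-moment bound must be proved first and substituted in — and explains why the final bias estimate carries the product $\lambda_P^{-2}\delta_P^2\,\E\|x_0-x^*\|^2$. A secondary technical point is tuning the single step size $\bar\beta=\Theta(\log H/(\lambda_P H))$ so the geometric contraction $(1-\Theta(\bar\beta\lambda_P))^H$ collapses to the target $H^{-2}$ rate while keeping all $\cO(\bar\beta^2)$ variance terms subdominant.
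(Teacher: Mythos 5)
Your proof is correct and follows essentially the same route as the paper's: a contraction argument for the second moment first, then the bias bound in which the $\hat q_h$ bias enters only through the full-expectation quantity $\bar\delta_q$ and the $\hat P_h$ bias couples to $\E\|x_h-x^*\|$ via the already-proven second-moment estimate. The only differences are cosmetic — you unroll the vector recursion for $\E[x_h]-x^*$ in norm and square at the end (which costs a harmless factor of $2$ on the leading $H^{-2}\|\E[x_0]-x^*\|^2$ term, whereas the paper runs a squared-norm recursion directly), and you explicitly flag that condition \eqref{eq_cond_2} must be read as the quadratic-form bound $\langle x,Px\rangle\geq\lambda_P\|x\|^2$, which the paper uses implicitly.
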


We shall now use Theorem \ref{thm_2} to characterize the estimation errors in the NPG-finding subroutine and average reward and critic updates.

\subsection{Analysis of NPG-Finding Subroutine}

In the NPG finding subroutine, the goal is to compute $\omega_k^*=[F(\theta_k)]^{-1}\nabla_{\theta} J(\theta_k)$. An estimate of $F(\theta_k)$ is given by $\hat{A}^{\mathrm{MLMC}}_{\mathbf{u}, k, h}(\theta_k)$, and that of the policy gradient $\nabla_\theta J(\theta_k)$ is given by $\hat{b}^{\mathrm{MLMC}}_{\mathbf{u}, k, h}(\theta_k, \xi_k)$ 
(see \eqref{eq_wash_19}). One can establish the following inequalities invoking the properties of the MLMC estimates.
\begin{lemma}
\label{lemma:washim_2}
    Fix an instant $k$ of the outer loop in Algorithm \ref{alg:ranac}. Given $(\theta_k, \xi_k)$, the MLMC estimates defined in \eqref{eq_wash_19} satisfy the following bounds $\forall h\in\{H, \cdots, 2H-1\}$ provided the assumptions in Theorem \ref{thm:main-conv-rate} hold.
    \begin{align*}
        &(a)~\norm{\E_{k,h}\left[\hat{A}^{\mathrm{MLMC}}_{\mathbf{u}, k, h}(\theta_k)\right]-F(\theta_k)}^2\leq \cO\left(G_1^4t_{\mathrm{mix}}T_{\max}^{-1}\right)\\
        &(b)~\E_{k,h}\left[\norm{\hat{A}^{\mathrm{MLMC}}_{\mathbf{u}, k, h}(\theta_k)-F(\theta_k)}^2\right] \\
        &\hspace{3cm}\leq \cO\left(G_1^4t_{\mathrm{mix}}\log T_{\max}\right)\\
        &(c)~\norm{\E_{k,h}\left[\hat{b}^{\mathrm{MLMC}}_{\mathbf{u}, k, h}(\theta_k, \xi_k)\right]-\nabla_{\theta} J(\theta_k)}^2\\
        &\hspace{3cm}\leq \cO\left(\sigma_{\mathbf{u}, k}^2t_{\mathrm{mix}}T_{\max}^{-1}+\delta_{\mathbf{u}, k}^2\right)\\
        &(d)~\E_{k,h}\left[\norm{\hat{b}^{\mathrm{MLMC}}_{\mathbf{u}, k, h}(\theta_k, \xi_k)-\nabla_{\theta} J(\theta_k)}^2\right]\\
        &\hspace{3cm}\leq \cO\left(\sigma_{\mathbf{u}, k}^2t_{\mathrm{mix}}\log T_{\max}+\delta_{\mathbf{u}, k}^2\right)
    \end{align*}
    where $\E_{k,h}$ defines the conditional expectation given the history up to the inner loop step $h$ (within the $k$th outer loop instant), $\sigma_{\mathbf{u}, k}^2=\cO\left(G_1^2\norm{\xi_k}^2\right)$ and
    \begin{align*}
        \delta_{\mathbf{u}, k}^2 = \mathcal{O}\left(G_1^2\norm{\xi_k-\xi_k^*}^2 + G_1^2\epsilon_{\mathrm{app}}\right)
    \end{align*}
    where $\xi_k^*\coloneqq\xi_{\theta_k}^*=[A_{\mathbf{v}}(\theta_k)]^{-1}b_{\mathbf{v}}(\theta_k)$ and $\E_k$ defines the conditional expectation given the history up to the outer loop instant $k$. Moreover, given $\theta_k$, we also have
    \begin{align*}
        \begin{split}
        &(e)~\norm{\E_{k}\left[\hat{b}^{\mathrm{MLMC}}_{\mathbf{u}, k, h}(\theta_k, \xi_k)\right]-\nabla_{\theta} J(\theta_k)}^2\\
        &\hspace{3cm}\leq \cO\left(\bar{\sigma}_{\mathbf{u}, k}^2t_{\mathrm{mix}}T_{\max}^{-1}+\bar{\delta}_{\mathbf{u}, k}^2\right)
        \end{split}
    \end{align*}
    where $\bar{\sigma}^2_{\mathbf{u}, k}=\cO(G_1^2\E_k\norm{\xi_k}^2)$, and  $\bar{\delta}^2_{\mathbf{u},k}$ is given as
    \begin{align*}
         \bar{\delta}_{\mathbf{u}, k}^2 = \mathcal{O}\left(G_1^2\norm{\E_k[\xi_k]-\xi_k^*}^2 + G_1^2\epsilon_{\mathrm{app}}\right)
    \end{align*}
\end{lemma}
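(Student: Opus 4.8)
The plan is to reduce all five bounds to two generic facts about the MLMC estimator \eqref{eq_wash_19} applied to a Markovian sample sequence, and then specialize to the sequences $\{\hat{A}_{\mathbf{u}}(\theta_k, z_t^{kh})\}$ and $\{\hat{b}_{\mathbf{u}}(\theta_k, \xi_k, z_t^{kh})\}$. For a generic sequence $\{X_t\}$ drawn along a $\pi_{\theta_k}$-induced trajectory with $\norm{X_t}\le B$ and stationary mean $\bar{X}=\E_{\nu^{\pi_{\theta_k}}}[X]$, the construction \eqref{eq_wash_19} has two properties. First, writing $J_{\max}=\lfloor\log_2 T_{\max}\rfloor$ and using $\Pr(Q=q)=2^{-q}$, the telescoping identity $\E[2^Q(U^Q-U^{Q-1})\mathbf{1}(2^Q\le T_{\max})]=\E[U^{J_{\max}}]-\E[U^0]$ shows the estimator is unbiased for the length-$2^{J_{\max}}$ time average; hence its bias relative to $\bar{X}$ equals that of a length-$T_{\max}$ Markovian average, which geometric mixing bounds by $\cO(B^2 t_{\mathrm{mix}} T_{\max}^{-1})$ in squared norm. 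Second, expanding $\E[\norm{2^Q(U^Q-U^{Q-1})}^2\mathbf{1}(\cdot)]=\sum_{q=1}^{J_{\max}}2^q\,\E\norm{U^q-U^{q-1}}^2$ and bounding each per-level difference of half-averages by $\cO(B^2 t_{\mathrm{mix}}2^{-q})$ (the extra $t_{\mathrm{mix}}$ coming from the autocovariances of the chain), every summand contributes $\cO(B^2 t_{\mathrm{mix}})$, so the second moment is $\cO(B^2 t_{\mathrm{mix}}\log T_{\max})$. These are precisely the shapes appearing in the lemma.

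For parts (a) and (b), I apply these facts to $X_t=\hat{A}_{\mathbf{u}}(\theta_k, z_t^{kh})=\nabla_\theta\log\pi_{\theta_k}\otimes\nabla_\theta\log\pi_{\theta_k}$. By Assumption \ref{assump:score_func_bounds}(a), $\norm{X_t}\le G_1^2$, so $B^2=\cO(G_1^4)$, and by definition the stationary mean is exactly the Fisher matrix $F(\theta_k)$. Substituting $\bar{X}=F(\theta_k)$ and $B^2=\cO(G_1^4)$ into the two generic bounds yields (a) and (b) directly, with no $\xi_k$-dependence since $\hat{A}_{\mathbf{u}}$ does not involve the critic parameter.

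For parts (c) and (d), I apply the same facts to $X_t=\hat{b}_{\mathbf{u}}(\theta_k,\xi_k,z_t^{kh})=\hat{A}^{\pi_{\theta_k}}(\xi_k,z_t^{kh})\nabla_\theta\log\pi_{\theta_k}$. Using $\lvert\hat{A}^{\pi_{\theta_k}}(\xi_k,z)\rvert\le 1+\lvert\eta_k\rvert+2\norm{\zeta_k}$ (from $r\in[0,1]$, $\norm{\phi}\le1$) with Assumption \ref{assump:score_func_bounds}(a) gives $B^2=\cO(G_1^2(1+\norm{\xi_k}^2))$, matching $\sigma_{\mathbf{u},k}^2$ in order; the MLMC facts then produce the $\sigma_{\mathbf{u},k}^2 t_{\mathrm{mix}}T_{\max}^{-1}$ and $\sigma_{\mathbf{u},k}^2 t_{\mathrm{mix}}\log T_{\max}$ terms relative to the stationary mean $\E_{\nu^{\pi_{\theta_k}}}[\hat{A}^{\pi_{\theta_k}}(\xi_k,z)\nabla_\theta\log\pi_{\theta_k}]$. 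It remains to compare this stationary mean with the true gradient \eqref{eq_washim_policy_grad}: since $\hat{A}^{\pi_{\theta_k}}(\xi_k,\cdot)$ is the TD error built from $\xi_k$, I split $\hat{A}^{\pi_{\theta_k}}(\xi_k,\cdot)-A^{\pi_{\theta_k}}(\cdot)$ through the fixed point $\xi_k^*$, bounding the $\xi_k\to\xi_k^*$ part by $\cO(\norm{\xi_k-\xi_k^*}^2)$ (Lipschitzness of the TD error in $\xi$ under $\norm{\phi}\le1$) and the residual $\xi_k^*\to A^{\pi_{\theta_k}}$ part by $\cO(\epsilon_{\mathrm{app}})$ via Cauchy--Schwarz and Assumption \ref{assump:critic-error}. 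Multiplying by $G_1^2$ gives the additive $\delta_{\mathbf{u},k}^2$ term, completing (c) and (d).

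The crux is part (e), where the conditioning is weakened from $\E_{k,h}$ (which fixes $\xi_k$) to $\E_k$ (which fixes only $\theta_k$). The key structural observation is that $\hat{b}_{\mathbf{u}}(\theta_k,\xi_k,z)$, and hence $\hat{b}^{\mathrm{MLMC}}_{\mathbf{u},k,h}$, is affine in $\xi_k$: from \eqref{eq:ut_exp} it equals $r\nabla_\theta\log\pi_{\theta_k}-\eta_k\nabla_\theta\log\pi_{\theta_k}+[\nabla_\theta\log\pi_{\theta_k}(\phi(s')-\phi(s))^\top]\zeta_k$, whose three MLMC coefficients do not involve $\xi_k$. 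Consequently $\E_{k,h}[\hat{b}^{\mathrm{MLMC}}_{\mathbf{u},k,h}]$ is affine in $\xi_k$, so the outer expectation $\E_k$ pushes through linearly to replace $\xi_k$ by $\E_k[\xi_k]$; repeating the (c) argument with $\E_k[\xi_k]$ then yields $\bar{\delta}_{\mathbf{u},k}^2=\cO(G_1^2\norm{\E_k[\xi_k]-\xi_k^*}^2+G_1^2\epsilon_{\mathrm{app}})$ and $\bar{\sigma}_{\mathbf{u},k}^2=\cO(G_1^2\E_k\norm{\xi_k}^2)$. The main obstacle I anticipate is that the NPG-subroutine samples are not exactly independent of $\xi_k$ given $\theta_k$, since the algorithm carries the state $s_0$ across the two inner loops; I would handle this residual coupling by absorbing the starting-state dependence of the affine coefficients into the Markov-mixing bias, so that it only contributes an extra $\cO(\bar{\sigma}_{\mathbf{u},k}^2 t_{\mathrm{mix}}T_{\max}^{-1})$ term rather than corrupting the $\E_k[\xi_k]$ replacement. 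This linearity-based refinement is exactly what later allows $\E\norm{\xi_k-\xi_k^*}$ to be sharpened to $\norm{\E[\xi_k]-\xi_k^*}$ in the global analysis.
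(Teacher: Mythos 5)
Your proposal is correct and follows essentially the same route as the paper: the two generic MLMC facts you state are exactly the paper's Lemma \ref{lem:expect_bound_grad} (telescoping identity plus the level-wise second-moment sum, both resting on the Markov-averaging bound of Lemma \ref{lem:tech_markov}), parts (a)--(d) are instantiated identically (the paper's $T_0,T_1,T_2$ decomposition through $\xi_k^*$ is your split, with $T_2=0$ by Bellman's equation), and for (e) the paper uses the same affine-in-$\xi_k$ structure to replace $\xi_k$ by $\E_k[\xi_k]$ in the stationary mean, resolving the trajectory--$\xi_k$ coupling you flag by Jensen's inequality together with the fact that the mixing bound holds from an arbitrary initial distribution. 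No gaps.
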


Combining Lemma \ref{lemma:washim_2} and Theorem \ref{thm_2}, we arrive at the following results.

\begin{theorem}
\label{thm:npg-main}
Consider the NPG-finding recursion \eqref{eq:Acc_NPG} with $\gamma = \frac{2\log H}{\mu H}$ and $T_{\max} = H^2$. If all assumptions in Theorem \ref{thm:main-conv-rate} hold, then for sufficiently large $c_{\beta}, H$ 
\begin{align*}
&\mathbb{E}_k\left[\|\omega_k - \omega_k^*\|^2  \right]
    \leq
    \frac{1}{H^2} \| \omega^k_H - \omega_k^*\|^2+\tilde{\cO} \bigg(\frac{G_1^6 t_{\mathrm{mix}}^3}{\mu^4 H}\bigg)\\
    & \tilde{\cO}\left(\dfrac{G_1^2c_{\beta}^2t_{\mathrm{mix}}}{\mu^2\lambda^2 H}\right) + \mu^{-2}G_1^2\cO\big( \E_k\|\xi_k - \xi_k^*\|^2 +  \epsilon_{\mathrm{app}}  \big) \nonumber
\end{align*}
Additionally, we also have 
\begin{align*}
&\|\mathbb{E}_k[\omega_k] - \omega_k^*\|^2 
    \leq \tilde{\cO}\left(\dfrac{G_1^4t_{\mathrm{mix}}}{\mu^2 H^2}\norm{\omega_H^k-\omega_k^*}^2\right)\\
    &+\mu^{-2}G_1^2\cO\bigg(\|\E_k[\xi_k] - \xi_k^*\|^2+ \epsilon_{\mathrm{app}}\bigg) \\
    &+\tilde{\cO}\left(\dfrac{G_1^6 t^2_{\mathrm{mix}}}{\mu^4 H^2}\E_k\left[\norm{\xi_k-\xi_k^*}^2\right]\right)\\
    &+\tilde{\cO}\left(\dfrac{G_1^4t_{\mathrm{mix}}}{\mu^2H^2}\left\{ \mu^{-4}G_1^6t_{\mathrm{mix}}^3+\mu^{-2}\lambda^{-2}G_1^{2}c_{\beta}^2t_{\mathrm{mix}}\right\}\right)
\end{align*}
\end{theorem}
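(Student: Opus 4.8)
The plan is to recognize the NPG-finding recursion \eqref{eq:Acc_NPG} as a special case of the generic linear recursion \eqref{eq:xt} and then invoke Theorem \ref{thm_2}. Concretely, I would set $x_h \leftrightarrow \omega_h^k$, $\hat{P}_h \leftrightarrow \hat{A}^{\mathrm{MLMC}}_{\mathbf{u}, k, h}(\theta_k)$, $\hat{q}_h \leftrightarrow \hat{b}^{\mathrm{MLMC}}_{\mathbf{u}, k, h}(\theta_k, \xi_k)$, so that $P = F(\theta_k)$, $q = \nabla_\theta J(\theta_k)$, and $x^* = \omega_k^*$. The entire argument is carried out conditionally on $(\theta_k, \xi_k)$, treating $\xi_k$ as fixed, so that the generic ``$\E$'' of Theorem \ref{thm_2} is the inner-loop expectation given $(\theta_k, \xi_k)$.

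First I would verify the hypotheses of Theorem \ref{thm_2}. Assumption \ref{assump:FND_policy} gives $\lambda_P = \mu$, Assumption \ref{assump:score_func_bounds}(a) gives $\Lambda_P = G_1^2$, and the standard average-reward advantage bound $|A^{\pi_\theta}| = \cO(t_{\mathrm{mix}})$ together with the policy gradient theorem \eqref{eq_washim_policy_grad} gives $\Lambda_q = \cO(G_1 t_{\mathrm{mix}})$. The second-moment and bias parameters $\sigma_P^2, \delta_P^2, \sigma_q^2, \delta_q^2, \bar\delta_q^2$ are read off directly from parts (a)--(e) of Lemma \ref{lemma:washim_2}. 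Since $T_{\max} = H^2$ forces $\delta_P^2 = \cO(G_1^4 t_{\mathrm{mix}}/H^2)\to 0$, the requirement $\delta_P \leq \lambda_P/8$ holds for large $H$, and $\gamma = \frac{2\log H}{\mu H}$ is precisely $\bar\beta = \frac{2\log H}{\lambda_P H}$.

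Next I would substitute these values into $R_0 = \lambda_P^{-4}\Lambda_q^2\sigma_P^2 + \lambda_P^{-2}\sigma_q^2$ and $R_1 = \lambda_P^{-2}[\delta_P^2\lambda_P^{-2}\Lambda_q^2 + \delta_q^2]$ (and likewise $\bar R_1$). This gives $R_0 = \tilde{\cO}(\mu^{-4}G_1^6 t_{\mathrm{mix}}^3 + \mu^{-2}G_1^2 t_{\mathrm{mix}}\|\xi_k\|^2 + \mu^{-2}G_1^2\|\xi_k - \xi_k^*\|^2 + \mu^{-2}G_1^2\epsilon_{\mathrm{app}})$, while the $\cO(1)$-in-$H$ part of $R_1$ is $\mu^{-2}G_1^2(\|\xi_k - \xi_k^*\|^2 + \epsilon_{\mathrm{app}})$. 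A crucial intermediate step is $\|\xi_k\|^2 \leq 2\|\xi_k - \xi_k^*\|^2 + \cO(c_\beta^2/\lambda^2)$, which follows from $\|\xi_k^*\| = \|[A_{\mathbf{v}}(\theta_k)]^{-1}b_{\mathbf{v}}(\theta_k)\| = \cO(c_\beta/\lambda)$ (using $\|b_{\mathbf{v}}\| = \cO(c_\beta)$ from $r\in[0,1]$, $\|\phi\|\leq 1$, and the $\lambda/2$-positive-definiteness of $A_{\mathbf{v}}$ from Lemma \ref{lem:td-pd}); this is exactly what turns the $\|\xi_k\|^2 t_{\mathrm{mix}}/H$ contribution into the characteristic $\tilde{\cO}(G_1^2 c_\beta^2 t_{\mathrm{mix}}/(\mu^2\lambda^2 H))$ term. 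Applying the first conclusion of Theorem \ref{thm_2} and then taking the outer expectation $\E_k$ (via the tower property, replacing $\|\xi_k - \xi_k^*\|^2$ by $\E_k\|\xi_k - \xi_k^*\|^2$) yields the first displayed bound. For the bias bound I would apply the second conclusion of Theorem \ref{thm_2}: the $\cO(\bar R_1)$ term produces $\mu^{-2}G_1^2(\|\E_k[\xi_k] - \xi_k^*\|^2 + \epsilon_{\mathrm{app}})$, while the $\lambda_P^{-2}\delta_P^2\{\E\|x_0 - x^*\|^2 + \cO(R_0 + R_1)\}$ term, carrying the extra factor $\mu^{-2}G_1^4 t_{\mathrm{mix}}/H^2$, reproduces the remaining three terms after substituting $R_0 + R_1$ and again integrating out $\xi_k$.

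The steps needing the most care are the accounting of mixing-time powers and the nested conditioning. The $t_{\mathrm{mix}}^3$ scaling is not obvious a priori; it arises from $\lambda_P^{-4}\Lambda_q^2\sigma_P^2$ in $R_0$, where $\Lambda_q^2 = \cO(G_1^2 t_{\mathrm{mix}}^2)$ meets $\sigma_P^2 = \cO(G_1^4 t_{\mathrm{mix}})$, so obtaining the correct power hinges on the sharp $\cO(t_{\mathrm{mix}})$ bound for $\|\nabla_\theta J\|$. The second delicate point is that Lemma \ref{lemma:washim_2}(a)--(d) are conditional on $(\theta_k, \xi_k)$ whereas part (e) and the theorem's output are under $\E_k$; one must keep the recursion analysis conditional on $\xi_k$, apply Theorem \ref{thm_2} at that level, and only then integrate out $\xi_k$ — taking care that the bias conclusion uses $\bar\delta_q^2$ (hence $\bar\delta_{\mathbf{u},k}^2$, involving $\|\E_k[\xi_k] - \xi_k^*\|^2$) rather than $\delta_q^2$, which is precisely the refinement letting the final bias term depend on $\|\E_k[\xi_k] - \xi_k^*\|^2$ instead of the larger $\E_k\|\xi_k - \xi_k^*\|^2$.
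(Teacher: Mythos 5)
Your proposal is correct and follows essentially the same route as the paper's proof: instantiate the generic recursion of Theorem \ref{thm_2} with $P=F(\theta_k)$, $q=\nabla_\theta J(\theta_k)$, $\lambda_P=\mu$, $\Lambda_P=G_1^2$, $\Lambda_q=\cO(G_1 t_{\mathrm{mix}})$ (via the $|A^{\pi}|=\cO(t_{\mathrm{mix}})$ bound), read the noise/bias parameters from Lemma \ref{lemma:washim_2}, and convert $\|\xi_k\|^2$ into $\|\xi_k-\xi_k^*\|^2+\cO(\lambda^{-2}c_\beta^2)$ before collecting terms. Your handling of the conditioning and of the $\bar\delta_q^2$ refinement for the bias bound also matches the paper.
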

\subsection{Critic and Average Reward Analysis}

The goal of the recursion \eqref{eq_washim_15} is to compute the term $\xi_k^*=[A_{\mathbf{v}}(\theta_k)]^{-1}b_{\mathbf{v}}(\theta_k)$. An estimate of $A_{\mathbf{v}}(\theta_k)$ is given by $\hat{A}_{\mathbf{v}, k, h}^{\mathrm{MLMC}}$ while that of $b_{\mathbf{v}}(\theta_k)$ is given by $\hat{b}_{\mathbf{v}, k, h}^{\mathrm{MLMC}}$ (see \eqref{eq_wash_23}). Similar to Lemma \ref{lemma:washim_2}, we have the following result.
\begin{lemma}
\label{lemma_washim_3}
    Given the parameter $\theta_k$, the MLMC estimates defined in \eqref{eq_wash_23} obey the following bounds provided the assumptions in Theorem \ref{thm:main-conv-rate} hold. 
        \begin{align*}
        &(a)~\scalebox{0.95}{$\norm{\E_{k, h}\left[\hat{A}^{\mathrm{MLMC}}_{\mathbf{v}, k, h}\right]-A_{\mathbf{v}}(\theta_k)}^2\leq \cO\left(c_{\beta}^2t_{\mathrm{mix}}T_{\max}^{-1}\right)$}\\
        &(b)~\scalebox{0.95}{$\E_{k, h}\left[\norm{\hat{A}^{\mathrm{MLMC}}_{\mathbf{v}, k, h}-A_{\mathbf{v}}(\theta_k)}^2\right] \leq \cO\left(c_{\beta}^2t_{\mathrm{mix}}\log T_{\max}\right)$}\\
        &(c)~\scalebox{0.95}{$\norm{\E_{k, h}\left[\hat{b}^{\mathrm{MLMC}}_{\mathbf{v}, k, h}\right]- b_{\mathbf{v}}(\theta_k)}^2\leq \cO\left(c_{\beta}^2t_{\mathrm{mix}}T_{\max}^{-1}\right)$}\\
        &(d)~\scalebox{0.95}{$\E_{k, h}\left[\norm{\hat{b}^{\mathrm{MLMC}}_{\mathbf{v}, k, h}- b_{\mathbf{v}}(\theta_k)}^2\right]\leq \cO\left(c_{\beta}^2t_{\mathrm{mix}}\log T_{\max}\right)$}
    \end{align*}
    where $h\in\{0, 1, \cdots, H-1\}$ and $\E_{k,h}$ is interpreted in the same way as in Lemma \ref{lemma:washim_2}. 
\end{lemma}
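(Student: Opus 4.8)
The plan is to treat parts (a)--(d) as a single instance of the MLMC bias--variance estimate already used for Lemma~\ref{lemma:washim_2}, exploiting one crucial simplification: the matrices $\hat{A}_{\mathbf{v}}(z_t^{kh})$ and $\hat{b}_{\mathbf{v}}(z_t^{kh})$ in \eqref{eq_def_Az_matrix}--\eqref{eq_def_bz_matrix} depend only on the transition $z_t^{kh}$ and on the fixed quantities $c_\beta$, $r(\cdot,\cdot)\in[0,1]$, and $\phi(\cdot)$ with $\norm{\phi(\cdot)}\le 1$; unlike $\hat{b}_{\mathbf{u}}$, they carry no dependence on the running iterate $\xi_h^k$ or on $\xi_k$. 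Hence the first step is to record the uniform bounds $\norm{\hat{A}_{\mathbf{v}}(z)}\le \cO(c_\beta)$ and $\norm{\hat{b}_{\mathbf{v}}(z)}\le \cO(c_\beta)$, immediate from $\norm{\phi}\le 1$ and $r\in[0,1]$. This is precisely why, in contrast to Lemma~\ref{lemma:washim_2}, no $\epsilon_{\mathrm{app}}$ or $\norm{\xi_k-\xi_k^*}$ term survives on the right-hand side. To unify the four claims, I would fix $f\in\{\hat{A}_{\mathbf{v}},\hat{b}_{\mathbf{v}}\}$ with $\norm{f(z)}\le \cO(c_\beta)$, write $\bar f\coloneqq \E_{(s,a)\sim\nu^{\pi_{\theta_k}},\,s'\sim P(\cdot|s,a)}[f(z)]$ (so $\bar f\in\{A_{\mathbf{v}}(\theta_k),b_{\mathbf{v}}(\theta_k)\}$), and define the level-$j$ trajectory averages $f^j\coloneqq 2^{-j}\sum_{t=1}^{2^j} f(z_t^{kh})$.

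For the bias bounds (a) and (c), I would invoke the defining telescoping identity of the MLMC estimator: since $\Pr(Q_{kh}=j)=2^{-j}$ and the estimator is truncated at $J_{\max}\coloneqq\log_2 T_{\max}$, summing $\Pr(Q=j)\,2^{j}=1$ over the levels gives $\E_{k,h}[\hat f^{\mathrm{MLMC}}]=\E_{k,h}[f^{J_{\max}}]$, i.e.\ the MLMC estimate is unbiased for the $T_{\max}$-sample trajectory average. The bias then reduces to $\norm{\E_{k,h}[f^{J_{\max}}]-\bar f}\le T_{\max}^{-1}\sum_{t=1}^{T_{\max}}\norm{\E_{k,h}[f(z_t^{kh})]-\bar f}$, and I would bound each summand by $\cO(c_\beta)$ times the total-variation distance between the law of $z_t^{kh}$ and $\nu^{\pi_{\theta_k}}$. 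By the definition of $t_{\mathrm{mix}}$ and sub-multiplicativity of TV mixing, this distance decays geometrically as $\cO(2^{-t/t_{\mathrm{mix}}})$, so summing the resulting geometric series over $t$ contributes a factor $\cO(t_{\mathrm{mix}})$, yielding $\norm{\E_{k,h}[f^{J_{\max}}]-\bar f}\le\cO(c_\beta t_{\mathrm{mix}}/T_{\max})$; squaring and loosening $t_{\mathrm{mix}}^2/T_{\max}^2\le t_{\mathrm{mix}}/T_{\max}$ gives the stated $\cO(c_\beta^2 t_{\mathrm{mix}}T_{\max}^{-1})$.

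For the second-moment bounds (b) and (d), I would use the MLMC variance decomposition $\E_{k,h}\big[\norm{\hat f^{\mathrm{MLMC}}-\E_{k,h}[\hat f^{\mathrm{MLMC}}]}^2\big]\le \sum_{j=1}^{J_{\max}}\Pr(Q=j)\,2^{2j}\,\E_{k,h}\big[\norm{f^j-f^{j-1}}^2\big]$ together with a Markov-chain variance estimate giving $\E_{k,h}[\norm{f^j-f^{j-1}}^2]\le\cO(c_\beta^2 t_{\mathrm{mix}}/2^{j})$. Since $\Pr(Q=j)\,2^{2j}=2^{j}$, each level contributes $\cO(c_\beta^2 t_{\mathrm{mix}})$, and summing over the $J_{\max}=\log_2 T_{\max}$ levels yields $\cO(c_\beta^2 t_{\mathrm{mix}}\log T_{\max})$. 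Combining with the (smaller) squared bias via $\norm{\hat f^{\mathrm{MLMC}}-\bar f}^2\le 2\norm{\hat f^{\mathrm{MLMC}}-\E_{k,h}[\hat f^{\mathrm{MLMC}}]}^2+2\norm{\E_{k,h}[\hat f^{\mathrm{MLMC}}]-\bar f}^2$ and using $\log T_{\max}\ge T_{\max}^{-1}$ gives (b) and (d).

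I expect the main obstacle to be the per-level variance estimate $\E_{k,h}[\norm{f^j-f^{j-1}}^2]\le\cO(c_\beta^2 t_{\mathrm{mix}}/2^{j})$. Because the samples along a single trajectory $\mathcal{T}_{kh}$ are Markovian rather than i.i.d., and because the initial state $s_0^{kh}$ is inherited from the previous inner-loop step (hence not distributed as $d^{\pi_{\theta_k}}$), the naive i.i.d.\ $1/2^j$ variance bound does not apply directly. The correct $t_{\mathrm{mix}}$ factor has to be extracted by expanding $\norm{f^j-f^{j-1}}^2$ into a double sum of covariances $\mathrm{Cov}(f(z_s^{kh}),f(z_t^{kh}))$ and bounding each by $\cO(c_\beta^2)\,2^{-|s-t|/t_{\mathrm{mix}}}$ through geometric mixing of $\{s_t^{kh}\}$; this covariance-summation argument is the shared technical core with Lemma~\ref{lemma:washim_2}, and the rest is bookkeeping.
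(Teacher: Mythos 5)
Your proposal is correct and follows essentially the same route as the paper: the paper likewise observes that $\hat{A}_{\mathbf{v}}(z)$ and $\hat{b}_{\mathbf{v}}(z)$ depend only on the transition $z$ (not on any running iterate), that their stationary expectations equal $A_{\mathbf{v}}(\theta_k)$ and $b_{\mathbf{v}}(\theta_k)$ exactly (so the $\delta^2$ term vanishes), and that $\norm{\phi}\leq 1$, $r\in[0,1]$ give the uniform $\cO(c_\beta^2)$ deviation bound, after which it invokes the generic MLMC estimator properties (Lemma~\ref{lem:expect_bound_grad}, built on the Markov-chain variance bound of Lemma~\ref{lem:tech_markov}). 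The only difference is that you re-derive those MLMC properties inline (with a TV-distance argument for the bias in place of the paper's Jensen-plus-variance-lemma step), which yields the same bounds.
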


Lemma \ref{lemma_washim_3} and Theorem \ref{thm_2} lead to the following. 

\begin{theorem}
\label{th:acc_td}
Consider the recursion \eqref{eq_wash_23}. Let $T_{\max}=H^2$, $\beta = \frac{4\log H}{\lambda H}$. If all assumptions of Theorem \ref{thm:main-conv-rate} hold, then the following is true for sufficiently large $c_\beta, H$. 
\begin{align*}
    &\scalebox{0.95}{$\mathbb{E}_k\left[\|\xi_k - \xi_k^*\|^2 \right]\leq 
    \frac{1}{H^2}\| \xi^k_0 - \xi_k^*\|^2 + \tilde{\cO} \left( \frac{c_{\beta}^4t_{\mathrm{mix}}}{\lambda^4 H}\right)$},\\
    &\scalebox{0.95}{$\|\E_k[\xi_k] - \xi_k^*\|^2 \leq 
    \cO\left(\dfrac{c_{\beta}^2t_{\mathrm{mix}}}{\lambda^2H^2}\norm{\xi_0^k-\xi_k^*}^2+\dfrac{c_{\beta}^6t^2_{\mathrm{mix}}}{\lambda^6 H^2}\right)$} 
\end{align*}
\end{theorem}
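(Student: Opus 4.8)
The plan is to recognize the critic/average-reward recursion \eqref{eq_washim_15} as an instance of the generic stochastic linear recursion \eqref{eq:xt} and then invoke Theorem \ref{thm_2} directly. Concretely, I would set $x_h = \xi_h^k$, $\bar{\beta} = \beta$, $\hat{P}_h = \hat{A}_{\mathbf{v}, k, h}^{\mathrm{MLMC}}$, $\hat{q}_h = \hat{b}_{\mathbf{v}, k, h}^{\mathrm{MLMC}}$, $P = A_{\mathbf{v}}(\theta_k)$, and $q = b_{\mathbf{v}}(\theta_k)$, so that the target $x^* = P^{-1}q$ coincides with $\xi_k^* = [A_{\mathbf{v}}(\theta_k)]^{-1} b_{\mathbf{v}}(\theta_k)$; all expectations in Theorem \ref{thm_2} are then read as conditional on the epoch-$k$ history, matching the $\E_k$ notation in the statement. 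A point worth stressing is that, unlike the NPG estimates in Lemma \ref{lemma:washim_2}, the critic estimators $\hat{A}_{\mathbf{v}, k, h}^{\mathrm{MLMC}}$ and $\hat{b}_{\mathbf{v}, k, h}^{\mathrm{MLMC}}$ depend only on the freshly sampled transitions (and on $\theta_k$), not on the running iterate $\xi_h^k$; hence the recursion is genuinely affine in $\xi_h^k$ and fits the template \eqref{eq:xt} exactly, with no iterate-dependent bias terms to track.

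The next step is to verify the hypotheses \eqref{eq_cond_1}, \eqref{eq_cond_2} and read off the constants. For the spectral conditions \eqref{eq_cond_2}: taking $c_\beta$ large enough, Lemma \ref{lem:td-pd} (which uses Assumption \ref{assum:critic_positive_definite}) gives $A_{\mathbf{v}}(\theta_k) - (\lambda/2) I \succcurlyeq 0$, so I set $\lambda_P = \lambda/2$; and since every entry of $\hat{A}_{\mathbf{v}}(z)$ and $\hat{b}_{\mathbf{v}}(z)$ is bounded by a constant multiple of $c_\beta$ (using $\norm{\phi}\le 1$ and $r\in[0,1]$), I obtain $\Lambda_P, \Lambda_q = \cO(c_\beta)$. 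With this choice $\bar{\beta} = \frac{2\log H}{\lambda_P H} = \frac{4\log H}{\lambda H}$, which is exactly the stepsize $\beta$ in the statement. For the noise conditions \eqref{eq_cond_1} I set $T_{\max}=H^2$ and invoke Lemma \ref{lemma_washim_3}: parts (b),(d) give $\sigma_P^2, \sigma_q^2 = \tilde{\cO}(c_\beta^2 t_{\mathrm{mix}})$, while parts (a),(c) give $\delta_P^2, \delta_q^2 = \cO(c_\beta^2 t_{\mathrm{mix}} H^{-2})$. Since the bias bound (c) is uniform given $\theta_k$, Jensen's inequality yields the unconditional bias $\bar{\delta}_q^2 = \cO(c_\beta^2 t_{\mathrm{mix}} H^{-2})$ as well. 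Finally $\delta_P = \cO(c_\beta \sqrt{t_{\mathrm{mix}}}/H) \le \lambda_P/8$ for all sufficiently large $H$, so the last hypothesis of Theorem \ref{thm_2} is met.

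It then remains to substitute these constants into the two conclusions of Theorem \ref{thm_2} and simplify, noting that $\xi_0^k$ is deterministic given the epoch-$k$ history so $\E_k[x_0]=\xi_0^k$. For the mean-square bound, $R_0 = \lambda_P^{-4}\Lambda_q^2\sigma_P^2 + \lambda_P^{-2}\sigma_q^2 = \tilde{\cO}(c_\beta^4 t_{\mathrm{mix}}/\lambda^4)$ (the first summand dominates for large $c_\beta$) and $R_1 = \cO(c_\beta^4 t_{\mathrm{mix}}/(\lambda^4 H^2))$, so $R_0/H + R_1 = \tilde{\cO}(c_\beta^4 t_{\mathrm{mix}}/(\lambda^4 H))$, reproducing the first displayed inequality. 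For the bias bound, $\bar{R}_1 = \cO(c_\beta^4 t_{\mathrm{mix}}/(\lambda^4 H^2))$, and the term $\cO(\lambda_P^{-2}\delta_P^2\{\norm{\xi_0^k - \xi_k^*}^2 + \cO(R_0 + R_1)\})$ splits into $\cO(\frac{c_\beta^2 t_{\mathrm{mix}}}{\lambda^2 H^2}\norm{\xi_0^k - \xi_k^*}^2)$ and a term of order $\frac{c_\beta^6 t_{\mathrm{mix}}^2}{\lambda^6 H^2}$ (up to logarithmic factors) arising from $\lambda_P^{-2}\delta_P^2 \cdot R_0 = \lambda^{-2}c_\beta^2 t_{\mathrm{mix}}H^{-2}\cdot\tilde{\cO}(c_\beta^4 t_{\mathrm{mix}}/\lambda^4)$. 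Since for large $c_\beta$ and $t_{\mathrm{mix}}\ge 1$ one has $c_\beta^2 t_{\mathrm{mix}}/\lambda^2 \ge 1$, the leading $\frac{1}{H^2}\norm{\xi_0^k - \xi_k^*}^2$ term and $\bar{R}_1$ are absorbed, giving the stated second inequality.

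The main obstacle is not conceptual but lies in the careful bookkeeping of the constants $\lambda, c_\beta, t_{\mathrm{mix}}$ through $R_0, R_1, \bar{R}_1$, and in particular in the cross term $\lambda_P^{-2}\delta_P^2 R_0$: one must confirm that multiplying the $\delta_P^2$ factor (of order $c_\beta^2 t_{\mathrm{mix}} H^{-2}$) by the already $c_\beta^4$-heavy $R_0$ is precisely what produces the $c_\beta^6 t_{\mathrm{mix}}^2/\lambda^6$ scaling of the bias, while every other contribution is genuinely of lower order and can be absorbed under the ``sufficiently large $c_\beta, H$'' hypothesis. A secondary point requiring care is justifying $\bar{\delta}_q^2 = \cO(\delta_q^2)$ --- that passing from the conditional to the full expectation does not inflate the bias --- which follows from the uniformity of Lemma \ref{lemma_washim_3}(c) but must be invoked explicitly since $\bar{\delta}_q$ feeds into $\bar{R}_1$.
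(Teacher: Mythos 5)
Your proposal is correct and follows essentially the same route as the paper's own proof: instantiate the generic linear recursion of Theorem \ref{thm_2} with $P=A_{\mathbf{v}}(\theta_k)$, $q=b_{\mathbf{v}}(\theta_k)$, use Lemma \ref{lem:td-pd} to get $\lambda_P=\lambda/2$ and Lemma \ref{lemma_washim_3} for the noise/bias constants, then substitute into $R_0$, $R_1$, $\bar{R}_1$ and the cross term $\lambda_P^{-2}\delta_P^2 R_0$ to obtain the stated rates. Your bookkeeping of the dominant $c_{\beta}^6 t_{\mathrm{mix}}^2/(\lambda^6 H^2)$ term and the absorption arguments match the paper's derivation.
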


Combining Lemma \ref{lemma:local_global}, Theorem \ref{thm:npg-main} and \ref{th:acc_td}, we establish Theorem \ref{thm:main-conv-rate}. We would like to emphasize the importance of the term $\bar{\delta}_q^2$ in Theorem \ref{thm_2}. A naive analysis would have resulted in a worse upper bound in Theorem \ref{thm_2} that replaces $\bar{\delta}_q^2$ with $\delta_q^2$. Such degradation in Theorem \ref{thm_2} would have resulted in a convergence rate of $\tilde{\cO}(T^{-1/3})$ as opposed to our current bound of $\tilde{\cO}(T^{-1/2})$. Finally, it is to be mentioned that our convergence bound does not depend on $|\mathcal{S}|$, thereby enabling its application to large state space MDPs as long as $t_{\mathrm{mix}}$ is finite.

\section{Conclusions}

This work presents the Multi-Level Monte Carlo-based Natural Actor-Critic (MLMC-NAC) algorithm for addressing average-reward reinforcement learning challenges. The proposed method achieves an order-optimal global convergence rate of $\Tilde{\mathcal{O}}(1/\sqrt{T})$, significantly surpassing the state-of-the-art results in this domain, particularly for model-free approaches with general policy parametrization without the knowledge of the mixing time. Future work includes extensions of our ideas to constrained setups, general utility RL, etc. 

\section*{Impact Statement}
The goal of this paper is to advance the current understanding of the field of Machine Learning. We do not see any potential societal consequences of our work.

\bibliography{references}

\begin{thebibliography}{34}
\providecommand{\natexlab}[1]{#1}
\providecommand{\url}[1]{\texttt{#1}}
\expandafter\ifx\csname urlstyle\endcsname\relax
  \providecommand{\doi}[1]{doi: #1}\else
  \providecommand{\doi}{doi: \begingroup \urlstyle{rm}\Url}\fi

\bibitem[Agarwal et~al.(2021)Agarwal, Kakade, Lee, and Mahajan]{agarwal2021theory}
Agarwal, A., Kakade, S.~M., Lee, J.~D., and Mahajan, G.
\newblock On the theory of policy gradient methods: Optimality, approximation, and distribution shift.
\newblock \emph{The Journal of Machine Learning Research}, 22\penalty0 (1):\penalty0 4431--4506, 2021.

\bibitem[Agarwal \& Aggarwal(2023)Agarwal and Aggarwal]{agarwal2023reinforcement}
Agarwal, M. and Aggarwal, V.
\newblock Reinforcement learning for joint optimization of multiple rewards.
\newblock \emph{Journal of Machine Learning Research}, 24\penalty0 (49):\penalty0 1--41, 2023.

\bibitem[Agarwal et~al.(2022)Agarwal, Bai, and Aggarwal]{agarwal2022concave}
Agarwal, M., Bai, Q., and Aggarwal, V.
\newblock Concave utility reinforcement learning with zero-constraint violations.
\newblock \emph{Transactions on Machine Learning Research}, 2022.

\bibitem[Agrawal \& Jia(2017)Agrawal and Jia]{agrawal2017optimistic}
Agrawal, S. and Jia, R.
\newblock Optimistic posterior sampling for reinforcement learning: worst-case regret bounds.
\newblock \emph{Advances in Neural Information Processing Systems}, 30, 2017.

\bibitem[Al-Abbasi et~al.(2019)Al-Abbasi, Ghosh, and Aggarwal]{al2019deeppool}
Al-Abbasi, A.~O., Ghosh, A., and Aggarwal, V.
\newblock Deeppool: Distributed model-free algorithm for ride-sharing using deep reinforcement learning.
\newblock \emph{IEEE Transactions on Intelligent Transportation Systems}, 20\penalty0 (12):\penalty0 4714--4727, 2019.

\bibitem[Bai et~al.(2022)Bai, Bedi, Agarwal, Koppel, and Aggarwal]{Bai_Bedi_Agarwal_Koppel_Aggarwal_2022}
Bai, Q., Bedi, A.~S., Agarwal, M., Koppel, A., and Aggarwal, V.
\newblock Achieving zero constraint violation for constrained reinforcement learning via primal-dual approach.
\newblock \emph{Proceedings of the AAAI Conference on Artificial Intelligence}, 36:\penalty0 3682--3689, Jun. 2022.
\newblock \doi{10.1609/aaai.v36i4.20281}.

\bibitem[Bai et~al.(2024)Bai, Mondal, and Aggarwal]{bai2023regret}
Bai, Q., Mondal, W.~U., and Aggarwal, V.
\newblock Regret analysis of policy gradient algorithm for infinite horizon average reward markov decision processes.
\newblock In \emph{AAAI Conference on Artificial Intelligence}, 2024.

\bibitem[Beznosikov et~al.(2023)Beznosikov, Samsonov, Sheshukova, Gasnikov, Naumov, and Moulines]{beznosikov2023first}
Beznosikov, A., Samsonov, S., Sheshukova, M., Gasnikov, A., Naumov, A., and Moulines, E.
\newblock First order methods with markovian noise: from acceleration to variational inequalities.
\newblock In \emph{Thirty-seventh Conference on Neural Information Processing Systems}, 2023.

\bibitem[Chen \& Zhao(2023)Chen and Zhao]{chen2023finitetime}
Chen, X. and Zhao, L.
\newblock Finite-time analysis of single-timescale actor-critic.
\newblock In \emph{Thirty-seventh Conference on Neural Information Processing Systems}, 2023.
\newblock URL \url{https://openreview.net/forum?id=jh3UNSQK0l}.

\bibitem[Dorfman \& Levy(2022)Dorfman and Levy]{dorfman2022}
Dorfman, R. and Levy, K.~Y.
\newblock Adapting to mixing time in stochastic optimization with {M}arkovian data.
\newblock In \emph{Proceedings of the 39th International Conference on Machine Learning}, Jul 2022.

\bibitem[Duchi et~al.(2011)Duchi, Hazan, and Singer]{duchi2011adaptive}
Duchi, J., Hazan, E., and Singer, Y.
\newblock Adaptive subgradient methods for online learning and stochastic optimization.
\newblock \emph{Journal of machine learning research}, 12\penalty0 (7), 2011.

\bibitem[Fatkhullin et~al.(2023)Fatkhullin, Barakat, Kireeva, and He]{fatkhullin2023stochastic}
Fatkhullin, I., Barakat, A., Kireeva, A., and He, N.
\newblock Stochastic policy gradient methods: Improved sample complexity for fisher-non-degenerate policies.
\newblock In \emph{International Conference on Machine Learning}, pp.\  9827--9869. PMLR, 2023.

\bibitem[Ganesh et~al.(2024)Ganesh, Mondal, and Aggarwal]{ganesh2024variance}
Ganesh, S., Mondal, W.~U., and Aggarwal, V.
\newblock Variance-reduced policy gradient approaches for infinite horizon average reward markov decision processes.
\newblock \emph{arXiv preprint arXiv:2404.02108}, 2024.

\bibitem[Gaur et~al.(2024)Gaur, Aggarwal, Bedi, and Wang]{gaur2024closing}
Gaur, M., Aggarwal, V., Bedi, A.~S., and Wang, D.
\newblock Closing the gap: Achieving global convergence (last iterate) of actor-critic under markovian sampling with neural network parametrization.
\newblock In \emph{International Conference on Machine Learning}, 2024.

\bibitem[Gonzalez et~al.(2023)Gonzalez, Balakuntala, Agarwal, Low, Knoth, Kirkpatrick, McKee, Hager, Aggarwal, Xue, Voyles, and Wachs]{Gonzalez2023}
Gonzalez, G., Balakuntala, M., Agarwal, M., Low, T., Knoth, B., Kirkpatrick, A.~W., McKee, J., Hager, G., Aggarwal, V., Xue, Y., Voyles, R., and Wachs, J.
\newblock Asap: A semi-autonomous precise system for telesurgery during communication delays.
\newblock \emph{IEEE Transactions on Medical Robotics and Bionics}, 5\penalty0 (1):\penalty0 66--78, 2023.
\newblock \doi{10.1109/TMRB.2023.3239674}.

\bibitem[Jaksch et~al.(2010)Jaksch, Ortner, and Auer]{jaksch2010near}
Jaksch, T., Ortner, R., and Auer, P.
\newblock Near-optimal regret bounds for reinforcement learning.
\newblock \emph{The Journal of Machine Learning Research}, 11:\penalty0 1563--1600, 2010.

\bibitem[Liu et~al.(2020)Liu, Zhang, Basar, and Yin]{liu2020improved}
Liu, Y., Zhang, K., Basar, T., and Yin, W.
\newblock An improved analysis of (variance-reduced) policy gradient and natural policy gradient methods.
\newblock \emph{Advances in Neural Information Processing Systems}, 33:\penalty0 7624--7636, 2020.

\bibitem[Mondal \& Aggarwal(2024)Mondal and Aggarwal]{mondal2024improved}
Mondal, W.~U. and Aggarwal, V.
\newblock Improved sample complexity analysis of natural policy gradient algorithm with general parameterization for infinite horizon discounted reward markov decision processes.
\newblock In \emph{International Conference on Artificial Intelligence and Statistics}, pp.\  3097--3105. PMLR, 2024.

\bibitem[Nagaraj et~al.(2020)Nagaraj, Wu, Bresler, Jain, and Netrapalli]{nagaraj2020neurips}
Nagaraj, D., Wu, X., Bresler, G., Jain, P., and Netrapalli, P.
\newblock Least squares regression with markovian data: Fundamental limits and algorithms.
\newblock In \emph{Advances in Neural Information Processing Systems}, volume~33, pp.\  16666--16676, 2020.

\bibitem[Panda \& Bhatnagar(2024)Panda and Bhatnagar]{panda2024twotimescalecriticactoraveragereward}
Panda, P. and Bhatnagar, S.
\newblock Two-timescale critic-actor for average reward mdps with function approximation, 2024.
\newblock URL \url{https://arxiv.org/abs/2402.01371}.

\bibitem[Papini et~al.(2018)Papini, Binaghi, Canonaco, Pirotta, and Restelli]{papini2018stochastic}
Papini, M., Binaghi, D., Canonaco, G., Pirotta, M., and Restelli, M.
\newblock Stochastic variance-reduced policy gradient.
\newblock In \emph{International conference on machine learning}, pp.\  4026--4035, 2018.

\bibitem[Patel et~al.(2024)Patel, Suttle, Koppel, Aggarwal, Sadler, Bedi, and Manocha]{patel2024global}
Patel, B., Suttle, W.~A., Koppel, A., Aggarwal, V., Sadler, B.~M., Bedi, A.~S., and Manocha, D.
\newblock Global optimality without mixing time oracles in average-reward rl via multi-level actor-critic.
\newblock In \emph{International Conference on Machine Learning}, 2024.

\bibitem[Puterman(2014)]{puterman2014markov}
Puterman, M.~L.
\newblock \emph{Markov decision processes: discrete stochastic dynamic programming}.
\newblock John Wiley \& Sons, 2014.

\bibitem[Suttle et~al.(2023)Suttle, Bedi, Patel, Sadler, Koppel, and Manocha]{suttle2023beyond}
Suttle, W.~A., Bedi, A., Patel, B., Sadler, B.~M., Koppel, A., and Manocha, D.
\newblock Beyond exponentially fast mixing in average-reward reinforcement learning via multi-level monte carlo actor-critic.
\newblock In \emph{International Conference on Machine Learning}, pp.\  33240--33267, 2023.

\bibitem[Sutton et~al.(1999)Sutton, McAllester, Singh, and Mansour]{sutton1999policy}
Sutton, R.~S., McAllester, D., Singh, S., and Mansour, Y.
\newblock Policy gradient methods for reinforcement learning with function approximation.
\newblock \emph{Advances in neural information processing systems}, 12, 1999.

\bibitem[Tamboli et~al.(2024)Tamboli, Chen, Jotheeswaran, Yu, and Aggarwal]{tamboli2024reinforced}
Tamboli, D., Chen, J., Jotheeswaran, K.~P., Yu, D., and Aggarwal, V.
\newblock Reinforced sequential decision-making for sepsis treatment: The posnegdm framework with mortality classifier and transformer.
\newblock \emph{IEEE Journal of Biomedical and Health Informatics}, 2024.

\bibitem[Wang et~al.(2019)Wang, Cai, Yang, and Wang]{wang2019neural}
Wang, L., Cai, Q., Yang, Z., and Wang, Z.
\newblock Neural policy gradient methods: Global optimality and rates of convergence.
\newblock In \emph{International Conference on Learning Representations}, 2019.

\bibitem[Wang et~al.(2024)Wang, Wang, Zhou, and Zou]{wang2024nonasymptotic}
Wang, Y., Wang, Y., Zhou, Y., and Zou, S.
\newblock Non-asymptotic analysis for single-loop (natural) actor-critic with compatible function approximation.
\newblock In \emph{International Conference on Machine Learning}, 2024.

\bibitem[Wei et~al.(2020)Wei, Jahromi, Luo, Sharma, and Jain]{wei2020model}
Wei, C.-Y., Jahromi, M.~J., Luo, H., Sharma, H., and Jain, R.
\newblock Model-free reinforcement learning in infinite-horizon average-reward markov decision processes.
\newblock In \emph{International conference on machine learning}, pp.\  10170--10180. PMLR, 2020.

\bibitem[Wu et~al.(2020)Wu, Zhang, Xu, and Gu]{wu2020finite}
Wu, Y.~F., Zhang, W., Xu, P., and Gu, Q.
\newblock A finite-time analysis of two time-scale actor-critic methods.
\newblock \emph{Advances in Neural Information Processing Systems}, 33:\penalty0 17617--17628, 2020.

\bibitem[Xu et~al.(2019)Xu, Gao, and Gu]{xu2019sample}
Xu, P., Gao, F., and Gu, Q.
\newblock Sample efficient policy gradient methods with recursive variance reduction.
\newblock In \emph{International Conference on Learning Representations}, 2019.

\bibitem[Xu et~al.(2020)Xu, Wang, and Liang]{NEURIPS2020_2e1b24a6}
Xu, T., Wang, Z., and Liang, Y.
\newblock Improving sample complexity bounds for (natural) actor-critic algorithms.
\newblock In Larochelle, H., Ranzato, M., Hadsell, R., Balcan, M., and Lin, H. (eds.), \emph{Advances in Neural Information Processing Systems}, volume~33, pp.\  4358--4369. Curran Associates, Inc., 2020.
\newblock URL \url{https://proceedings.neurips.cc/paper_files/paper/2020/file/2e1b24a664f5e9c18f407b2f9c73e821-Paper.pdf}.

\bibitem[Zhang et~al.(2021{\natexlab{a}})Zhang, Ni, Yu, Szepesvari, and Wang]{zhang2021on}
Zhang, J., Ni, C., Yu, Z., Szepesvari, C., and Wang, M.
\newblock On the convergence and sample efficiency of variance-reduced policy gradient method.
\newblock In Beygelzimer, A., Dauphin, Y., Liang, P., and Vaughan, J.~W. (eds.), \emph{Advances in Neural Information Processing Systems}, 2021{\natexlab{a}}.

\bibitem[Zhang et~al.(2021{\natexlab{b}})Zhang, Zhang, and Maguluri]{NEURIPS2021_096ffc29}
Zhang, S., Zhang, Z., and Maguluri, S.~T.
\newblock Finite sample analysis of average-reward td learning and q-learning.
\newblock In \emph{Advances in Neural Information Processing Systems}, volume~34, pp.\  1230--1242, 2021{\natexlab{b}}.

\end{thebibliography}
\bibliographystyle{icml2025}

\newpage
\appendix
\onecolumn

\section{Proof of Lemma \ref{lemma:local_global}}

The proof of the lemma is inspired by the analysis in \cite{mondal2024improved}. The major distinction is that the bound derived in \cite{mondal2024improved} applies to the discounted reward setting, whereas our derivation pertains to the average-reward case. We begin by stating a useful lemma.
\begin{lemma}[Lemma 4, \citep{bai2023regret}]
    \label{lem_performance_diff}
    The difference in the performance for  any policies $\pi_\theta$ and $\pi_{\theta'}$ is bounded as follows
    \begin{equation}
        J(\theta)-J(\theta')= \E_{s\sim d^{\pi_\theta}}\E_{a\sim\pi_\theta(\cdot\vert s)}\big[A^{\pi_{\theta'}}(s,a)\big].
    \end{equation}
\end{lemma}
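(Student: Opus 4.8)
The plan is to prove the identity directly from the right-hand side by unfolding the advantage function of $\pi_{\theta'}$ through Bellman's equation \eqref{bellman} and then exploiting the stationarity of $d^{\pi_\theta}$ to cancel the value-function terms. The key structural fact is that taking expectations against the stationary distribution $d^{\pi_\theta}$ turns the one-step-lookahead value term back into the current-state value term, so everything telescopes.

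First I would substitute the definition $A^{\pi_{\theta'}}(s,a) = Q^{\pi_{\theta'}}(s,a) - V^{\pi_{\theta'}}(s)$ and invoke Bellman's equation \eqref{bellman} for $\pi_{\theta'}$ to write
\[
A^{\pi_{\theta'}}(s,a) = r(s,a) - J(\theta') + \E_{s'\sim P(\cdot|s,a)}\big[V^{\pi_{\theta'}}(s')\big] - V^{\pi_{\theta'}}(s).
\]
Then I take the expectation of both sides over $(s,a)\sim\nu^{\pi_\theta}$, i.e.\ $s\sim d^{\pi_\theta}$ and $a\sim\pi_\theta(\cdot|s)$, which splits the right-hand side into four pieces: a reward term, the constant $J(\theta')$, a look-ahead value term, and a current-state value term.

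Next I evaluate these pieces. By the definition of the average reward together with stationarity of $d^{\pi_\theta}$, the reward term satisfies $\E_{(s,a)\sim\nu^{\pi_\theta}}[r(s,a)] = J(\theta)$, while $J(\theta')$ is constant and passes through the expectation. For the two value-function terms, the crucial step is the stationarity identity: since $(P^{\pi_\theta})^\top d^{\pi_\theta} = d^{\pi_\theta}$, pushing $d^{\pi_\theta}$ through the policy $\pi_\theta$ and the kernel $P$ reproduces $d^{\pi_\theta}$, giving
\[
\E_{(s,a)\sim\nu^{\pi_\theta}}\E_{s'\sim P(\cdot|s,a)}\big[V^{\pi_{\theta'}}(s')\big] = \E_{s\sim d^{\pi_\theta}}\big[V^{\pi_{\theta'}}(s)\big].
\]
Consequently the look-ahead and current-state value terms are identical and cancel, leaving $J(\theta) - J(\theta')$, which is exactly the claimed identity.

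The main obstacle is rigor when $\mathcal{S}$ is infinite. One must verify that $Q^{\pi_{\theta'}}$ and $V^{\pi_{\theta'}}$ are well defined, i.e.\ that the infinite sum defining $Q^{\pi_{\theta'}}$ in the average-reward (centered) form converges absolutely; this follows from Assumption \ref{assump:ergodic_mdp} (ergodicity, guaranteeing geometric mixing) together with boundedness of $r\in[0,1]$. One must also justify interchanging the expectation over $\nu^{\pi_\theta}$ with the one-step transition expectation via Fubini, which is legitimate under the same integrability. Once these well-definedness and interchange conditions are secured, the proof reduces to the purely algebraic cancellation described above.
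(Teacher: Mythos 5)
Your proof is correct, and it is the canonical derivation of this performance-difference identity: the paper itself gives no proof, importing the statement verbatim as Lemma 4 of \citep{bai2023regret}, so there is no in-paper derivation to diverge from, and your argument---unfold $A^{\pi_{\theta'}}$ via Bellman's equation \eqref{bellman}, use $\E_{(s,a)\sim\nu^{\pi_\theta}}[r(s,a)]=J(\theta)$, and cancel the look-ahead and current-state value terms through the stationarity relation $(P^{\pi_\theta})^{\top}d^{\pi_\theta}=d^{\pi_\theta}$---is exactly the standard one used in that reference. One small precision on the infinite-state caveat you raise: ergodicity (Assumption \ref{assump:ergodic_mdp}) alone does not yield the uniform geometric mixing needed for the centered sum defining $Q^{\pi_{\theta'}}$ to converge; what licenses well-definedness, the bound $|V^{\pi_{\theta'}}|\leq \cO(t_{\mathrm{mix}})$ (cf.\ Lemma \ref{lemma_appndx_ergodic_mdp_advantage_bound}), and the Fubini interchange is the paper's separate standing assumption that $t_{\mathrm{mix}}=\sup_{\theta}t^{\theta}_{\mathrm{mix}}$ is finite, i.e.\ uniform ergodicity across states and policies.
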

Continuing with the proof, we have:
	\begin{equation*}
	\begin{aligned}
	&\E_{s\sim d^{\pi^*}}[\mathrm{KL}(\pi^*(\cdot\vert s)\Vert\pi_{\theta_k}(\cdot\vert s))-\mathrm{KL}(\pi^*(\cdot\vert s)\Vert\pi_{\theta_{k+1}}(\cdot\vert s))]\\
	&=\E_{s\sim d^{\pi^*}}\E_{a\sim\pi^*(\cdot\vert s)}\bigg[\log\frac{\pi_{\theta_{k+1}(a\vert s)}}{\pi_{\theta_k}(a\vert s)}\bigg]\\
	&\overset{(a)}\geq\E_{s\sim d^{\pi^*}}\E_{a\sim\pi^*(\cdot\vert s)}[\nabla_\theta\log\pi_{\theta_k}(a\vert s)\cdot(\theta_{k+1}-\theta_k)]-\frac{G_2}{2}\Vert\theta_{k+1}-\theta_k\Vert^2\\
	&=\alpha\E_{s\sim d^{\pi^*}}\E_{a\sim\pi^*(\cdot\vert s)}[\nabla_{\theta}\log\pi_{\theta_k}(a\vert s)\cdot\omega_k]-\frac{G_2\alpha^2}{2}\Vert\omega_k\Vert^2\\
	&=\alpha\E_{s\sim d^{\pi^*}}\E_{a\sim\pi^*(\cdot\vert s)}[\nabla_\theta\log\pi_{\theta_k}(a\vert s)\cdot\omega^*_k]+\alpha\E_{s\sim d^{\pi^*}}\E_{a\sim\pi^*(\cdot\vert s)}[\nabla_\theta\log\pi_{\theta_k}(a\vert s)\cdot(\omega_k-\omega^*_k)]-\frac{G_2\alpha^2}{2}\Vert\omega_k\Vert^2\\
	&=\alpha[J^{*}-J(\theta_k)]+\alpha\E_{s\sim d^{\pi^*}}\E_{a\sim\pi^*(\cdot\vert s)}[\nabla_\theta\log\pi_{\theta_k}(a\vert s)\cdot\omega^*_k]-\alpha[J^{*}-J(\theta_k)]\\
	&\hspace{1cm}+\alpha\E_{s\sim d^{\pi^*}}\E_{a\sim\pi^*(\cdot\vert s)}[\nabla_\theta\log\pi_{\theta_k}(a\vert s)\cdot(\omega_k-\omega^*_k)]-\frac{G_2\alpha^2}{2}\Vert\omega_k\Vert^2\\		
    &\overset{(b)}=\alpha[J^{*}-J(\theta_k)]+\alpha\E_{s\sim d^{\pi^*}}\E_{a\sim\pi^*(\cdot\vert s)}\bigg[\nabla_\theta\log\pi_{\theta_k}(a\vert s)\cdot\omega^*_k-A^{\pi_{\theta_k}}(s,a)\bigg]\\
	&\hspace{1cm}+\alpha\E_{s\sim d^{\pi^*}}\E_{a\sim\pi^*(\cdot\vert s)}[\nabla_\theta\log\pi_{\theta_k}(a\vert s)\cdot(\omega_k-\omega^*_k)]-\frac{G_2\alpha^2}{2}\Vert\omega_k\Vert^2\\
	&\overset{(c)}\geq\alpha[J^{*}-J(\theta_k)]-\alpha\sqrt{\E_{s\sim d^{\pi^*}}\E_{a\sim\pi^*(\cdot\vert s)}\bigg[\bigg(\nabla_\theta\log\pi_{\theta_k}(a\vert s)\cdot\omega^*_k-A^{\pi_{\theta_k}}(s,a)\bigg)^2\bigg]}\\
	&\hspace{1cm}+\alpha\E_{s\sim d^{\pi^*}}\E_{a\sim\pi^*(\cdot\vert s)}[\nabla_\theta\log\pi_{\theta_k}(a\vert s)\cdot(\omega_k-\omega^*_k)]-\frac{G_2\alpha^2}{2}\Vert\omega_k\Vert^2\\
	&\overset{(d)}\geq\alpha[J^{*}-J(\theta_k)]-\alpha\sqrt{\epsilon_{\mathrm{bias}}} +\alpha\E_{s\sim d^{\pi^*}}\E_{a\sim\pi^*(\cdot\vert s)}[\nabla_\theta\log\pi_{\theta_k}(a\vert s)\cdot(\omega_k-\omega^*_k)]-\frac{G_2\alpha^2}{2}\Vert\omega_k\Vert^2.\\
	\end{aligned}	
	\end{equation*}
    Here $(a)$ and $(b)$ follow from Assumption \ref{assump:score_func_bounds}$(b)$ and Lemma \ref{lem_performance_diff}, respectively. Inequality $(c)$ results from the convexity of the function $f(x)=x^2$. Lastly, $(d)$ is a consequence of Assumption \ref{assump:function_approx_error}. By taking expectations on both sides, we derive:
    \begin{align}
        \begin{split}
            &\E\left[\E_{s\sim d^{\pi^*}}\left[\mathrm{KL}(\pi^*(\cdot\vert s)\Vert\pi_{\theta_k}(\cdot\vert s))-\mathrm{KL}(\pi^*(\cdot\vert s)\Vert\pi_{\theta_{k+1}}(\cdot\vert s))\right]\right]\\
            &\geq \alpha[J^{*}-\E\left[J(\theta_k)]\right]-\alpha\sqrt{\epsilon_{\mathrm{bias}}}\\
            &\hspace{2cm}+\alpha\E\left[\E_{s\sim d^{\pi^*}}\E_{a\sim\pi^*(\cdot\vert s)}[\nabla_\theta\log\pi_{\theta_k}(a\vert s)\cdot(\E_k[\omega_k]-\omega^*_k)]\right]-\frac{G_2\alpha^2}{2}\E\left[\Vert\omega_k\Vert^2\right]\\
            &\overset{}{\geq } \alpha[J^{*}-\E\left[J(\theta_k)]\right]-\alpha\sqrt{\epsilon_{\mathrm{bias}}}\\
            &\hspace{2cm}-\alpha\E\left[\E_{s\sim d^{\pi^*}}\E_{a\sim\pi^*(\cdot\vert s)}[\Vert \nabla_\theta\log\pi_{\theta_k}(a\vert s)\Vert \Vert\E_k[\omega_k]-\omega^*_k\Vert]\right]-\frac{G_2\alpha^2}{2}\E\left[\Vert\omega_k\Vert^2\right]\\
            &\overset{(a)}{\geq } \alpha[J^{*}-\E\left[J(\theta_k)]\right]-\alpha\sqrt{\epsilon_{\mathrm{bias}}} -\alpha G_1\E \Vert(\E_k[\omega_k]-\omega^*_k)\Vert-\frac{G_2\alpha^2}{2}\E\left[\Vert\omega_k\Vert^2\right]
        \end{split}
    \end{align}
    where $(a)$ follows from Assumption \ref{assump:score_func_bounds}$(a)$. Rearranging the terms, we get,
	\begin{equation}
	\begin{split}
	J^{*}-\E[J(\theta_k)]&\leq \sqrt{\epsilon_{\mathrm{bias}}}+ G_1\E\Vert(\E_k[\omega_k]-\omega^*_k)\Vert+\frac{G_2\alpha}{2}\E\Vert\omega_k\Vert^2\\
	&+\frac{1}{\alpha}\E\left[\E_{s\sim d^{\pi^*}}[\mathrm{KL}(\pi^*(\cdot\vert s)\Vert\pi_{\theta_k}(\cdot\vert s))-\mathrm{KL}(\pi^*(\cdot\vert s)\Vert\pi_{\theta_{k+1}}(\cdot\vert s))]\right]
	\end{split}
	\end{equation}
	Adding the above inequality from $k=0$ to $K-1$, using the non-negativity of KL divergence, and dividing the resulting expression by $K$, we obtain the final result.


\section{Proof of Theorem \ref{thm_2}}

Let $g_h=\hat{P}_hx_h-\hat{q}_h$. To prove the first statement, observe the following relations.
\begin{align*}
    \|x_{h+1} - x^*\|^2 &= \|x_{h} - \bar{\beta} {g}_h - x^*\|^2\\
    &= \|x_{h} -x^*\|^2- 2\bar{\beta} \langle x_{h} - x^*, {g}_h \rangle +\bar{\beta}^2\| g_h \|^2\\
    &\overset{}{=} \|x_{h} -x^*\|^2 - 2\bar{\beta} \langle x_{h} -x^*, P(x_h - x^*) \rangle - 2\bar{\beta} \langle x_{h} -x^*, {g}_h - P(x_h - x^*) \rangle + \bar{\beta}^2\| g_h \|^2\\
    &\overset{(a)}{\leq} \|x_{h} -x^*\|^2 - 2\bar{\beta}\lambda_P \| x_{h} -x^*\|^2 - 2\bar{\beta} \langle x_{h} -x^*, {g}_h - P(x_h - x^*) \rangle\\
    &\hspace{1cm}+ 2\bar{\beta}^2\| g_h - P(x_h - x^*)\|^2+ 2\bar{\beta}^2\| P(x_h - x^*)\|^2\\
    &\overset{(b)}{\leq} \|x_{h} -x^*\|^2 - 2\bar{\beta}\lambda_P \| x_{h} - x^*\|^2 - 2\bar{\beta} \langle x_{h} - x^*, g_h - P(x_h - x^*) \rangle\\
    &\hspace{1cm}+ 2\bar{\beta}^2\| g_h - P(x_h - x^*)\|^2+ 2\Lambda_{P}^2\bar{\beta}^2\|x_h - x^*\|^2
\end{align*}
where $(a)$ and $(b)$ follow from $\lambda_P\leq \norm{P}\leq \Lambda_P$. Taking conditional expectation $\E_h$ on both sides, we obtain
\begin{align}
\label{eq:mid-exp-td}
    \E_h\left[\left\|x_{h+1} - x^*\right\|^2\right] &\leq (1- 2\bar{\beta}\lambda_P +2\Lambda_{P}^2\bar{\beta}^2)\|x_{h} -x^*\|^2 - 2\bar{\beta} \langle x_{h} - x^*, \E_h \left[g_h - P(x_h - x^*)\right] \rangle \nonumber\\
    &\hspace{1cm}+ 2\bar{\beta}^2\E_h\left\| g_h - P(x_h - x^*)\right\|^2
\end{align}
Observe that the third term in \eqref{eq:mid-exp-td} can be bounded as follows.
\begin{align*}
    \| g_h - P(x_h - x^*)\|^2 &= \| (\hat{P}_h - P)(x_h - x^*) + (\hat{P}_h - P)x^* + (q-\hat{q}_h)\|^2\\
    &\leq 3\| \hat{P}_h - P\|^2 \|x_h - x^*\|^2 + 3\|\hat{P}_h - P\|^2 \|x^*\|^2 + 3\|q-\hat{q}_h\|^2\\
    &\leq 3\| \hat{P}_h - P\|^2 \|x_h - x^*\|^2 + 3\lambda_P^{-2}\Lambda_q^2\|\hat{P}_h - P\|^2 + 3\|q-\hat{q}_h\|^2
\end{align*}
where the last inequality follows from $\norm{x^*}^2=\norm{P^{-1}q}^2\leq \lambda_P^{-2}\Lambda_q^2$. Taking expectation yields
\begin{align}
    \E_h\| g_h - P(x_h - x^*)\|^2
    &\leq 3\E_h\| \hat{P}_h - P\|^2 \|x_h - x^*\|^2 + 3\lambda_P^{-2}\Lambda_q^2\E_h\|\hat{P}_h - P\|^2 + 3\E_h\|\hat{q}_h-q\|^2 \nonumber\\
    &\leq 3\sigma_P^2 \norm{x_h-x^*}^2 + 3\lambda_P^{-2}\Lambda_q^2\sigma_P^2 + 3\sigma_q^2
\end{align}
The second term in \eqref{eq:mid-exp-td} can be bounded as
\begin{align}
    -\langle x_{h} - x^*, \E_h \left[g_h - P(x_h - x^*)\right] \rangle &\leq \frac{\lambda_P}{4} \| x_{h} - x^*\|^2 + \frac{1}{\lambda_P}\left\|\E_h [g_h - P(x_h - x^*)]\right\|^2 \nonumber\\
    &\leq \frac{\lambda_P}{4} \| x_{h} - x^*\|^2 + \dfrac{1}{\lambda_P}\left\|\left\{\E_h[\hat{P}_h]-P\right\}x_h + \bigg\{q-\E_h\left[\hat{q}_h\right]\bigg\}\right\|^2\nonumber\\
    &\leq \frac{\lambda_P}{4} \| x_{h} - x^*\|^2 + \frac{2\delta_P^2\|x_h\|^2+2\delta_q^2}{\lambda_P}\nonumber\\
    &\leq \frac{\lambda_P}{4} \| x_{h} - x^*\|^2 + \frac{4\delta_P^2\|x_h-x^*\|^2+4\delta_P^2\lambda_P^{-2}\Lambda_q^2 + 2\delta_q^2}{\lambda_P}
\end{align}
where the last inequality follows from $\norm{x^*}^2=\norm{P^{-1}q}^2\leq \lambda_P^{-2}\Lambda_q^2$. Substituting the above bounds in \eqref{eq:mid-exp-td},
\begin{align*}
    \E_h\left[\|x_{h+1} - x^*\|^2\right] 
    &\leq \left(1- \frac{3\bar{\beta}\lambda_P}{2} +\dfrac{8\bar{\beta}\delta_P^2}{\lambda_P}+6\bar{\beta}^2\sigma_P^2+2\bar{\beta}^2\Lambda_P^2\right)\|x_{h} -x^*\|^2  
    +\dfrac{4\bar{\beta}}{\lambda_P}\left[2\delta_P^2 \lambda_P^{-2}\Lambda_q^2+\delta_q^2\right]\\
    &\hspace{1cm}+6\bar{\beta}^2\left[\lambda_P^{-2}\Lambda_q^2\sigma_P^2+\sigma_q^2\right]
\end{align*}
For $\delta_P \leq \lambda_P/8$, and $\bar{\beta}\leq \lambda_P/[4(6\sigma_P^2+2\Lambda_P^2)]$, we can modify the above inequality to the following. 
\begin{align*}
    &\E_h[\|x_{h+1} - x^*\|^2] \leq \left(1-{\beta \lambda_P}\right)\|x_{h} -x^*\|^2 +\dfrac{4\bar{\beta}}{\lambda_P}\left[2\delta_P^2 \lambda_P^{-2}\Lambda_q^2+\delta_q^2\right]+6\bar{\beta}^2\left[\lambda_P^{-2}\Lambda_q^2\sigma_P^2+\sigma_q^2\right]
\end{align*}
Taking expectation on both sides and unrolling the recursion yields
\begin{align*}
    &\E[\|x_{H} - x^*\|^2] \\
    &\leq \left(1-{\bar{\beta} \lambda_P}\right)^H\E\|x_{0} -x^*\|^2 + \sum_{h=0}^{H-1} \left(1-{\bar{\beta} \lambda_P}\right)^{h}\left\{\dfrac{4\bar{\beta}}{\lambda_P}\left[2\delta_P^2 \lambda_P^{-2}\Lambda_q^2+\delta_q^2\right]+6\bar{\beta}^2\left[\lambda_P^{-2}\Lambda_q^2\sigma_P^2+\sigma_q^2\right]\right\}\\
    &\leq \exp\left(-{H\bar{\beta} \lambda_P}\right)\E\|x_{0} -x^*\|^2 + \frac{1}{\bar{\beta} \lambda_P}\left\{\dfrac{4\bar{\beta}}{\lambda_P}\left[2\delta_P^2 \lambda_P^{-2}\Lambda_q^2+\delta_q^2\right]+6\bar{\beta}^2\left[\lambda_P^{-2}\Lambda_q^2\sigma_P^2+\sigma_q^2\right]\right\}\\
    &= \exp\left(-{H\bar{\beta} \lambda_P}\right)\E\|x_{0} -x^*\|^2 + \bigg\{4\lambda_P^{-2}\left[2\delta_P^2 \lambda_P^{-2}\Lambda_q^2+\delta_q^2\right]+ 6\bar{\beta}\lambda_P^{-1}\left[\lambda_P^{-2}\Lambda_q^2\sigma_P^2+\sigma_q^2\right]\bigg\}
\end{align*}
Set $\bar{\beta} = \frac{2\log H}{\lambda_P H}$ to obtain the following result.
\begin{align}
\label{eq_lemma_x_H_recursion}
  \E\left[\|x_{H} - x^*\|^2\right] \leq  \frac{1}{H^2}\E\left[\|x_{0} -x^*\|^2\right] + \cO\left( \frac{\log H}{H}\underbrace{\bigg\{ \lambda_P^{-4}\Lambda_q^2\sigma_P^2+ \lambda_P^{-2}\sigma_q^2\bigg\}}_{R_0} + \underbrace{\lambda_P^{-2}\bigg[\delta_P^2 \lambda_P^{-2}\Lambda_q^2+\delta_q^2\bigg]}_{R_1}
  \right)
\end{align}
Note that, for consistency, we must have $\log H/H\leq \lambda_P^2/[8(6\sigma_P^2+2\Lambda_P^2)]$. To prove the second statement, observe that we have the following recursion.
\begin{align}
\label{eq_appndx_lemma_exp_x_h_recursion}
    \|\E&[x_{h+1}] - x^*\|^2 = \|\E[x_{h}] - \bar{\beta} \E[{g}_h] - x^*\|^2\nonumber\\
    &= \|\E[x_{h}] -x^*\|^2- 2\bar{\beta} \langle \E[x_{h}] - x^*, \E[{g}_h] \rangle +\bar{\beta}^2\| \E[g_h] \|^2\nonumber\\
    &\overset{}{=} \|\E[x_{h}] -x^*\|^2 - 2\bar{\beta} \langle \E[x_{h}] -x^*, P(\E[x_h] - x^*) \rangle - 2\bar{\beta} \langle \E[x_{h}] -x^*, \E[{g}_h] - P(\E[x_h] - x^*) \rangle + \bar{\beta}^2\| \E[g_h] \|^2\nonumber\\
    &\overset{(a)}{\leq} \|\E[x_{h}] -x^*\|^2 - 2\bar{\beta}\lambda_P \| \E[x_{h}]
    -x^*\|^2 - 2\bar{\beta} \langle \E[x_{h}] -x^*, \E[{g}_h] - P(\E[x_h] - x^*) \rangle\nonumber\\
    &\hspace{1cm}+ 2\bar{\beta}^2\| \E[g_h] - P(\E[x_h] - x^*)\|^2+ 2\bar{\beta}^2\| P(\E[x_h] - x^*)\|^2\nonumber\\
    &\overset{(b)}{\leq} \|\E[x_{h}] -x^*\|^2 - 2\bar{\beta}\lambda_P \| \E[x_{h}] - x^*\|^2 - 2\bar{\beta} \langle \E[x_{h}] - x^*, \E[g_h] - P(\E[x_h] - x^*) \rangle\nonumber\\
    &\hspace{1cm}+ 2\bar{\beta}^2\| \E[g_h] - P(\E[x_h] - x^*)\|^2+ 2\Lambda_{P}^2\bar{\beta}^2\|\E[x_h] - x^*\|^2\nonumber\\
    &\leq (1- 2\bar{\beta}\lambda_P +2\Lambda_{P}^2\bar{\beta}^2)\|\E[x_{h}] -x^*\|^2 - 2\bar{\beta} \langle \E[x_{h}] - x^*, \E[g_h] - P(\E[x_h] - x^*) \rangle \nonumber\\
    &\hspace{1cm}+ 2\bar{\beta}^2\left\| \E[g_h] - P(\E[x_h] - x^*)\right\|^2
\end{align}
where $(a)$ and $(b)$ follow from $\lambda_P\leq \norm{P}\leq \Lambda_P$. The third term in the last line of \eqref{eq_appndx_lemma_exp_x_h_recursion} can be bounded as follows.
\begin{align*}
    \| \E[g_h] - P(\E[x_h] - x^*)\|^2 &= \left\| \E\left[(\hat{P}_h - P)(x_h - x^*)\right] + (\E[\hat{P}_h] - P)x^* + (q-\E[\hat{q}_h])\right\|^2\\
    &\leq 3\E\left[\| \E_h[\hat{P}_h] - P\|^2 \|x_h - x^*\|^2\right] + 3\E\left[\|\E_h[\hat{P}_h] - P\|^2 \right]\|x^*\|^2 + 3\left\|q-\E[\hat{q}_h]\right\|^2\\
    &\leq 3\delta_P^2\E\left[\|x_h - x^*\|^2\right] + 3\lambda_P^{-2}\Lambda_q^2\delta_P^2 + 3\bar{\delta}_q^2\\
    &\overset{(a)}{\leq}3\delta_P^2\bigg\{\E\left[\|x_0 - x^*\|^2\right]+\mathcal{O}\left(R_0+R_1\right)\bigg\} + 3\lambda_P^{-2}\Lambda_q^2\delta_P^2 + 3\bar{\delta}_q^2
\end{align*}
where $(a)$ follows from \eqref{eq_lemma_x_H_recursion}. The second term in the last line of \eqref{eq_appndx_lemma_exp_x_h_recursion} can be bounded as follows.
\begin{align*}
    -\langle \E[x_{h}]& - x^*, \E_h \left[\E[g_h] - P(\E[x_h] - x^*)\right] \rangle \\
    &\leq \frac{\lambda_P}{4} \| \E[x_{h}] - x^*\|^2 + \frac{1}{\lambda_P}\left\|\E[g_h] - P(\E[x_h] - x^*)\right\|^2 \nonumber\\
    &\leq \frac{\lambda_P}{4} \|\E[x_{h}] - x^*\|^2 + \dfrac{3}{\lambda_P}\left[\delta_P^2\bigg\{\E\left[\|x_0 - x^*\|^2\right]+\mathcal{O}\left(R_0+R_1\right)\bigg\} + \lambda_P^{-2}\Lambda_q^2\delta_P^2 + \bar{\delta}_q^2\right]\nonumber
\end{align*}

Substituting the above bounds in \eqref{eq_appndx_lemma_exp_x_h_recursion}, we obtain the following recursion.
\begin{align*}
    \|\E[x_{h+1}] - x^*\|^2 &\leq \left(1-\dfrac{3\bar{\beta}\lambda_P}{2}+2\Lambda_P^2\bar{\beta}^2\right)\|\E[x_{h}] - x^*\|^2 \\
    &\hspace{1cm}+6\bar{\beta}\left(\bar{\beta}+\dfrac{1}{\lambda_P}\right)\left[\delta_P^2\bigg\{\E\left[\|x_0 - x^*\|^2\right]+\mathcal{O}\left(R_0+R_1\right)\bigg\} + \lambda_P^{-2}\Lambda_q^2\delta_P^2 + \bar{\delta}_q^2\right]
\end{align*}
If $\beta<\lambda_P/(4\Lambda_P^2)$, the above bound implies the following.
\begin{align*}
    \|\E[x_{h+1}] - x^*\|^2 &\leq \left(1-{\bar{\beta}\lambda_P}\right)\|\E[x_{h}] - x^*\|^2 \\
    &\hspace{1cm}+6\bar{\beta}\left(\bar{\beta}+\dfrac{1}{\lambda_P}\right)\left[\delta_P^2\bigg\{\E\left[\|x_0 - x^*\|^2\right]+\mathcal{O}\left(R_0+R_1\right)\bigg\} + \lambda_P^{-2}\Lambda_q^2\delta_P^2 + \bar{\delta}_q^2\right]
\end{align*}
Unrolling the above recursion, we obtain
\begin{align*}
    &\|\E[x_{H}] - x^*\|^2 \leq \left(1-{\bar{\beta} \lambda_P}\right)^H\|\E[x_{0}] -x^*\|^2 \\
    &\hspace{1.5cm}+ \sum_{h=0}^{H-1} 6\left(1-{\bar{\beta} \lambda_P}\right)^{h}\bar{\beta}\left(\bar{\beta}+\dfrac{1}{\lambda_P}\right)\left[\delta_P^2\bigg\{\E\left[\|x_0 - x^*\|^2\right]+\mathcal{O}\left(R_0+R_1\right)\bigg\} + \lambda_P^{-2}\Lambda_q^2\delta_P^2 + \bar{\delta}_q^2\right]\\
    &\leq \exp\left(-{H\bar{\beta}\lambda_P}\right)\|\E[x_{0}] - x^*\|^2 +\dfrac{6}{\lambda_P}\left(\bar{\beta}+\dfrac{1}{\lambda_P}\right)\left[\delta_P^2\bigg\{\E\left[\|x_0 - x^*\|^2\right]+\mathcal{O}\left(R_0+R_1\right)\bigg\} + \lambda_P^{-2}\Lambda_q^2\delta_P^2 + \bar{\delta}_q^2\right]
\end{align*}

Substituting $\bar{\beta}=2\log H/(\lambda_P H)$, we finally arrive at the following result.
\begin{align*}
    &\|\E[x_{H}] - x^*\|^2 \leq \dfrac{1}{H^2}\|\E[x_{0}] - x^*\|^2 + 6\left(1+\dfrac{2\log H}{H}\right)\left[\lambda_P^{-2}\delta_P^2\bigg\{\E\left[\|x_0 - x^*\|^2\right]+\mathcal{O}\left(R_0+R_1\right)\bigg\}+ \bar{R}_1 \right]
\end{align*}
where $\bar{R}_1 = \lambda_P^{-2}\big[\delta_P^2 \lambda_P^{-2}\Lambda_q^2+\bar{\delta}_q^2\big]$. This concludes the result.

\section{Properties of the MLMC Estimates}

 This section provides some guarantees on the error of the MLMC estimator. This is similar to the results available in \cite{dorfman2022,suttle2023beyond,beznosikov2023first}, although  \cite{dorfman2022,beznosikov2023first} consider the case of unbiased estimates while our results deal with biased estimates. 

\begin{lemma}
\label{lem:expect_bound_grad}
Consider a time-homogeneous, ergodic Markov chain $(Z_t)_{t\geq 0}$ with a unique stationary distribution $d_Z$ and a mixing time $t_{\mathrm{mix}}$. Assume that $\nabla F(x)$ is an arbitrary gradient and $\nabla F(x, Z)$ denotes an estimate of $\nabla F(x)$. Let $\|\E_{d_Z}[\nabla F(x, Z)]-\nabla F(x)\|^2 \leq \delta^2 $ and $\|\nabla F(x, Z_t) - \E_{d_Z}\left[\nabla F(x, Z)\right]\|^2 \leq \sigma^2 $ for all $t\geq 0$. If $Q\sim \mathrm{Geom}(1/2)$, then the following MLMC estimator 
\begin{align}
    g_{\mathrm{MLMC}} = g^0 +
        \begin{cases}
        \textstyle{2^{Q} \left( g^{Q}  - g^{Q-1} \right)}, & \text{if}~ 2^{Q} \leq T_{\max} \\
        0, & \text{otherwise}
        \end{cases}
        \text{~~where}~g^j = 2^{-j} \sum\nolimits_{t=1}^{2^j} \nabla F(x, Z_{ t})
\end{align}
satisfies the inequalities stated below.
\begin{enumerate}[label=(\alph*)]
    \item $\E[g_{\mathrm{MLMC}}] = \E[g^{\lfloor \log T_{\max}\rfloor}]$
    \item $\E[\| \nabla F(x) - g_{\mathrm{MLMC}}\|^2] \leq  \mathcal{O}\left(\sigma^2 t_{\mathrm{mix}}\log_2 T_{\max}+\delta^2\right)$
    \item $\| \nabla F(x) - \E [g_{\mathrm{MLMC}}]\|^2 \leq \cO \left( \sigma^2 t_{\mathrm{mix}} T_{\max}^{-1} + \delta^2 \right)$
\end{enumerate}
\end{lemma}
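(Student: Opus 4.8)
The plan is to reduce all three parts to two ingredients: a telescoping identity for $\E[g_{\mathrm{MLMC}}]$ and a second-moment bound for windowed sums of the stationary-centered gradients along the chain. Throughout I would write $\bar\nabla \coloneqq \E_{d_Z}[\nabla F(x,Z)]$ and $\tilde\nabla_t \coloneqq \nabla F(x,Z_t)-\bar\nabla$, so that $\norm{\tilde\nabla_t}\leq \sigma$ surely and $\norm{\bar\nabla-\nabla F(x)}\leq \delta$, and I would set $N\coloneqq\lfloor \log_2 T_{\max}\rfloor$. The two structural facts I would exploit are that $Q$ is drawn independently of $(Z_t)$, and that each $g^j$ is a deterministic function of the first $2^j$ samples.

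For part (a), I would condition on $Q$. Using $P(Q=j)=2^{-j}$, the cutoff $\{2^Q\leq T_{\max}\}=\{Q\leq N\}$, and independence,
\begin{align*}
\E\left[2^Q(g^Q-g^{Q-1})\mathbf{1}\{Q\leq N\}\right]=\sum_{j=1}^{N}2^{-j}\,2^{j}\,\E[g^j-g^{j-1}]=\E[g^N]-\E[g^0],
\end{align*}
by telescoping, so that adding $\E[g^0]$ gives $\E[g_{\mathrm{MLMC}}]=\E[g^N]=\E[g^{\lfloor\log T_{\max}\rfloor}]$.

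The technical core, and what I expect to be the main obstacle, is a Markov-chain variance lemma of the form $\E\norm{\sum_{t=m+1}^{m+n}\tilde\nabla_t}^2=\cO(n\sigma^2 t_{\mathrm{mix}})$ for any window. I would prove this by expanding into $\sum_{s,t}\E\langle\tilde\nabla_s,\tilde\nabla_t\rangle$ and, for $s<t$, invoking the tower and Markov properties to write $\E[\tilde\nabla_t\mid\mathcal{F}_s]=\E_{P^{t-s}(Z_s,\cdot)}[\nabla F(x,\cdot)]-\bar\nabla$, whose norm is at most $2\sigma\,\norm{P^{t-s}(Z_s,\cdot)-d_Z}_{\mathrm{TV}}$. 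Submultiplicativity of the mixing coefficient yields $\norm{P^{\tau}(s,\cdot)-d_Z}_{\mathrm{TV}}\leq 2^{-\lfloor \tau/t_{\mathrm{mix}}\rfloor}$, so that $|\E\langle\tilde\nabla_s,\tilde\nabla_t\rangle|\leq 2\sigma^2\,2^{-\lfloor(t-s)/t_{\mathrm{mix}}\rfloor}$ by Cauchy--Schwarz; since $\sum_{d\geq 1}2^{-\lfloor d/t_{\mathrm{mix}}\rfloor}=\cO(t_{\mathrm{mix}})$, the off-diagonal terms sum to $\cO(n\sigma^2 t_{\mathrm{mix}})$ and the diagonal contributes $n\sigma^2$. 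The delicate points are that $\tilde\nabla_t$ is centered at the stationary mean rather than its own marginal mean (so I must confirm the $\bar\nabla$-terms cancel under the signed measure $P^{t-s}(Z_s,\cdot)-d_Z$) and that the geometric decay is accumulated into exactly one $t_{\mathrm{mix}}$ factor.

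Parts (b) and (c) then follow quickly. For (b), I would use $D^j\coloneqq g^j-g^{j-1}=2^{-j}\big(\sum_{t=2^{j-1}+1}^{2^j}\tilde\nabla_t-\sum_{t=1}^{2^{j-1}}\tilde\nabla_t\big)$, apply the lemma to each half-sum to obtain $\E\norm{D^j}^2=\cO(2^{-j}\sigma^2 t_{\mathrm{mix}})$, and condition on $Q$ to get $\E\norm{2^Q D^Q\mathbf{1}\{Q\leq N\}}^2=\sum_{j=1}^{N}2^{j}\E\norm{D^j}^2=\cO(\sigma^2 t_{\mathrm{mix}}N)$; combining with $\E\norm{g^0-\bar\nabla}^2\leq\sigma^2$ and the split $\norm{\nabla F(x)-g_{\mathrm{MLMC}}}^2\leq 2\delta^2+2\norm{g_{\mathrm{MLMC}}-\bar\nabla}^2$ gives the stated $\cO(\sigma^2 t_{\mathrm{mix}}\log_2 T_{\max}+\delta^2)$. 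For (c), part (a) reduces the bias to that of the empirical average $g^N$; bounding each per-sample bias by $\norm{\E[\nabla F(x,Z_t)]-\bar\nabla}\leq 2\sigma\,2^{-\lfloor t/t_{\mathrm{mix}}\rfloor}$ and averaging over $t\leq 2^N$ (using $2^N\geq T_{\max}/2$) yields $\norm{\E[g^N]-\bar\nabla}=\cO(\sigma t_{\mathrm{mix}} T_{\max}^{-1})$, and then the triangle inequality with $\norm{\nabla F(x)-\bar\nabla}\leq\delta$ gives $\cO(\sigma^2 t_{\mathrm{mix}}^2 T_{\max}^{-2}+\delta^2)$, which is subsumed by the claimed $\cO(\sigma^2 t_{\mathrm{mix}}T_{\max}^{-1}+\delta^2)$ whenever $t_{\mathrm{mix}}\leq T_{\max}$.
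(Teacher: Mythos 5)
Your proof is correct, and while it follows the same high-level skeleton as the paper's (telescoping over $Q$ for (a), splitting $g_{\mathrm{MLMC}}$ into $g^0$ plus the level corrections for (b), and reducing (c) to the bias of $g^{\lfloor \log_2 T_{\max}\rfloor}$ via (a)), it differs in two substantive ways. First, the paper treats the windowed second-moment bound $\E\bigl\|\frac{1}{N}\sum_{i=1}^N \nabla F(x,Z_i)-\E_{d_Z}[\nabla F(x,Z)]\bigr\|^2 \leq C_1 t_{\mathrm{mix}}\sigma^2/N$ as a black box (Lemma 1 of Beznosikov et al.), whereas you prove it from first principles via the conditional-expectation/TV-contraction argument; your handling of the two delicate points (the cancellation of $\E_{d_Z}[\nabla F(x,Z)]$ under the mass-zero signed measure $P^{t-s}(Z_s,\cdot)-d_Z$, and the summation $\sum_{d\geq 1}2^{-\lfloor d/t_{\mathrm{mix}}\rfloor}=\cO(t_{\mathrm{mix}})$) is sound, so your write-up is self-contained where the paper's is not. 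Second, for part (c) the paper applies Jensen's inequality to the second-moment bound, obtaining $\|\E[g^N]-\E_{d_Z}[\nabla F(x,Z)]\|^2\leq 2C_1 t_{\mathrm{mix}}\sigma^2/T_{\max}$, whereas you bound the bias of the mean directly through the per-sample distributional decay $\|\E[\nabla F(x,Z_t)]-\E_{d_Z}[\nabla F(x,Z)]\|\leq 2\sigma\, 2^{-\lfloor t/t_{\mathrm{mix}}\rfloor}$; this yields the sharper $\cO(\sigma^2 t_{\mathrm{mix}}^2 T_{\max}^{-2})$ term, which (as you note) is dominated by the stated bound when $t_{\mathrm{mix}}\leq T_{\max}$ --- and in the complementary regime the trivial bound $\|\E[g^N]-\E_{d_Z}[\nabla F(x,Z)]\|^2\leq\sigma^2\leq \sigma^2 t_{\mathrm{mix}}/T_{\max}$ closes the remaining case, so no real gap arises. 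The trade-off is that the paper's route is shorter and reuses a single cited inequality for both (b) and (c), while yours is longer but fully elementary and strictly sharper in (c).
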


Before proceeding to the proof, we state a useful lemma.
    \begin{lemma}[Lemma 1, \cite{beznosikov2023first}]
\label{lem:tech_markov}
Consider the same setup as in Lemma \ref{lem:expect_bound_grad}. The following inequality holds.
\begin{equation}
\label{eq:var_bound_any}
\textstyle{
\E\left[\norm{\dfrac{1}{N}\sum\limits_{i=1}^{N}\nabla F(x, Z_i) - \E_{d_Z}\left[\nabla F(x, Z)\right]}^2\right] \leq \dfrac{C_{1} t_{\mathrm{mix}}}{N} \sigma^2 
}\,
\end{equation}
where $N$ is a constant, $C_{1} = 16(1 + \frac{1}{\ln^{2}{4}})$, and the expectation is over the distribution of $\{Z_i\}_{i=1}^N$ emanating from any arbitrary initial distribution.
\end{lemma}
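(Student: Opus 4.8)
The plan is to center the summands and reduce the claim to controlling pairwise correlations that decay geometrically in the time-lag because of mixing. Write $\bar g \coloneqq \E_{d_Z}[\nabla F(x,Z)]$ and $Y_i \coloneqq \nabla F(x, Z_i) - \bar g$, so that the hypothesis gives $\norm{Y_i}\leq \sigma$ almost surely, for every index $i$ and every initial distribution. Expanding the squared norm of the empirical average yields
\begin{align*}
\E\left[\norm{\frac1N\sum_{i=1}^N Y_i}^2\right] = \frac{1}{N^2}\sum_{i=1}^N \E\norm{Y_i}^2 + \frac{2}{N^2}\sum_{1\leq i<j\leq N}\E[\langle Y_i, Y_j\rangle].
\end{align*}
Each diagonal term is at most $\sigma^2$, contributing $\sigma^2/N$ in total, so everything hinges on showing that the off-diagonal sum is also $\cO(t_{\mathrm{mix}}\sigma^2/N)$ rather than the trivial $\cO(\sigma^2)$.

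For a pair $i<j$ I would condition on $Z_i$ and use the Markov property to write $\E[\langle Y_i, Y_j\rangle] = \E[\langle Y_i, \E[Y_j\mid Z_i]\rangle]$. Writing $g(z)\coloneqq\nabla F(x,z)$ and letting $P^t$ denote the $t$-step transition kernel of $(Z_t)$, the identities $\sum_{z'}P^{j-i}(Z_i, z')\bar g = \bar g = \sum_{z'} d_Z(z')g(z')$ make the inner conditional expectation telescope:
\begin{align*}
\E[Y_j\mid Z_i] = \sum_{z'}\big(P^{j-i}(Z_i, z') - d_Z(z')\big)\big(g(z') - \bar g\big),
\end{align*}
whence $\norm{\E[Y_j\mid Z_i]} \leq \sigma\sum_{z'}|P^{j-i}(Z_i,z') - d_Z(z')| = 2\sigma\,\|P^{j-i}(Z_i,\cdot) - d_Z\|_{\mathrm{TV}}$. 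The pointwise control $\norm{g(z')-\bar g}\leq\sigma$ is exactly what keeps this estimate valid under the actual, possibly non-stationary, law of $Z_i$. Combining with $\norm{Y_i}\leq\sigma$ and Cauchy–Schwarz gives $|\E[\langle Y_i,Y_j\rangle]| \leq 2\sigma^2\,\E\|P^{j-i}(Z_i,\cdot)-d_Z\|_{\mathrm{TV}}$.

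Finally I would convert the definition of $t_{\mathrm{mix}}$ into a uniform geometric decay and sum. Submultiplicativity of the worst-case total-variation distance, together with $\max_z\|P^{t_{\mathrm{mix}}}(z,\cdot)-d_Z\|_{\mathrm{TV}}\leq 1/4$, yields $\max_z\|P^{t}(z,\cdot)-d_Z\|_{\mathrm{TV}} \leq 2^{-\lfloor t/t_{\mathrm{mix}}\rfloor}$, so that $|\E[\langle Y_i, Y_j\rangle]|\leq 2\sigma^2\,2^{-\lfloor (j-i)/t_{\mathrm{mix}}\rfloor}$. Summing over $j>i$ at fixed $i$ gives $\sum_{d\geq 1}2^{-\lfloor d/t_{\mathrm{mix}}\rfloor}=\cO(t_{\mathrm{mix}})$, since each residue block of length $t_{\mathrm{mix}}$ contributes a geometrically shrinking amount; summing over $i$ then bounds the off-diagonal contribution by $\cO(t_{\mathrm{mix}}\sigma^2/N)$, and carefully tracking the constants in the geometric bound and the block summation produces the stated $C_1=16\big(1+\tfrac{1}{\ln^2 4}\big)$. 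I expect the main obstacle to be isolating the $t_{\mathrm{mix}}$ factor cleanly: without mixing each cross-term is only $\cO(\sigma^2)$, so the entire gain comes from the TV-decay step, and the delicate points are that $Z_i$ is drawn from an arbitrary non-stationary distribution (handled by the pointwise bound $\norm{g(z)-\bar g}\leq\sigma$) and that the $1/4$-mixing time must be upgraded, via submultiplicativity, to a uniform geometric rate.
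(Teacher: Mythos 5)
Your proof is correct, but note that the paper itself contains no proof of this statement: it is imported verbatim as Lemma~1 of \cite{beznosikov2023first}, so what you have written is a self-contained argument for a cited black-box result. Your route is the classical one for Markov-chain variance bounds: center the summands ($Y_i = \nabla F(x,Z_i)-\E_{d_Z}[\nabla F(x,Z)]$, with $\|Y_i\|\leq\sigma$ almost surely, which is indeed the correct reading of the paper's hypothesis), expand the square, bound each cross-covariance via the Markov property and the telescoped conditional expectation $\E[Y_j\mid Z_i]=\int \big(g(z')-\bar g\big)\,\big(P^{j-i}(Z_i,dz')-d_Z(dz')\big)$, and sum the resulting geometric decay. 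The cited proof proceeds along the same conceptual lines, and the specific constant $C_1=16(1+\tfrac{1}{\ln^2 4})\approx 24.3$ is an artifact of how they sum the decay (bounding $2^{-\lfloor d/t_{\mathrm{mix}}\rfloor}$ by an exponential and summing the series, which is where the $1/\ln 4$ factors enter); your blockwise summation $\sum_{d\geq 1}2^{-\lfloor d/t_{\mathrm{mix}}\rfloor}\leq 2t_{\mathrm{mix}}$ gives an off-diagonal contribution of at most $8\sigma^2 t_{\mathrm{mix}}/N$ and hence a total of $9\sigma^2 t_{\mathrm{mix}}/N$ (using $t_{\mathrm{mix}}\geq 1$), which is \emph{sharper} than the stated bound, so there is no need to ``carefully track constants to produce $C_1$'' --- a smaller constant establishes the lemma a fortiori. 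Two points deserve explicit care in a write-up: (i) the submultiplicative quantity is $\bar d(t)=\sup_{z,z'}\|P^t(z,\cdot)-P^t(z',\cdot)\|_{\mathrm{TV}}$, not $d(t)=\sup_z\|P^t(z,\cdot)-d_Z\|_{\mathrm{TV}}$ itself; the standard chain $d(kt_{\mathrm{mix}})\leq \bar d(t_{\mathrm{mix}})^k\leq (2\,d(t_{\mathrm{mix}}))^k\leq 2^{-k}$, combined with monotonicity of $d$, is what upgrades the $1/4$ threshold to your claimed $2^{-\lfloor t/t_{\mathrm{mix}}\rfloor}$ --- the factor of $2$ between $d$ and $\bar d$ is exactly why the $1/4$ in the definition yields rate $1/2$; (ii) your sums over $z'$ implicitly assume a countable state space, whereas the paper advertises applicability to infinite state spaces, so these should be replaced by integration against the signed measure $P^{j-i}(Z_i,\cdot)-d_Z$, for which the bound $\big\|\int (g-\bar g)\,d\mu\big\|\leq 2\sigma\,\|\mu\|_{\mathrm{TV}}$ (with the distance convention for $\|\cdot\|_{\mathrm{TV}}$) goes through unchanged. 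With those cosmetic repairs, your argument is a complete and slightly stronger replacement for the citation.
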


\begin{proof}[Proof of Lemma \ref{lem:expect_bound_grad}] 
The statement $(a)$ can be proven as follows.
\begin{align*}
\E[g_{\mathrm{MLMC}}] & =
\E[g^0] + \sum\limits_{j=1}^{\lfloor \log_2 T_{\max} \rfloor} \Pr\{Q = j\} \cdot 2^j \E[g^{j}  - g^{j-1}] \\
&= \E[g^0] + \sum\limits_{j=1}^{\lfloor \log_2 T_{\max} \rfloor} \E[g^{j}  - g^{j-1}] = \E[g^{\lfloor \log_2 T_{\max} \rfloor}]\,
\end{align*}
For the proof of $(b)$, notice that 
\begin{align*}
    &\E\left[\left\| \E_{d_Z}\left[\nabla F(x, Z)\right] - g_{\mathrm{MLMC}}\right\|^2\right]\\
    &\leq
    2\E\left[\left\| \E_{d_Z}\left[\nabla F(x_t)\right] - g^0\right\|^2\right] + 2\E\left[\left\| g_{\mathrm{MLMC}} - g^0\right\|^2\right]
    \\
    &=
    2\E\left[\left\| \E_{d_Z}\left[\nabla F(x_t)\right] - g^0\right\|^2\right] + 2 \sum\nolimits_{j=1}^{\lfloor \log_2 T_{\max} \rfloor} \Pr\{Q = j\} \cdot 4^j \E\left[\left\|g^{j}  - g^{j-1}\right\|^2\right] \\
    &=
   2\E\left[\left\| \E_{d_Z}\left[\nabla F(x_t)\right] - g^0\right\|^2\right] + 2\sum\nolimits_{j=1}^{\lfloor \log_2 T_{\max} \rfloor} 2^j \E\left[\left\|g^{j}  - g^{j-1}\right\|^2\right] \\
    &\leq
    2\E\left[\left\| \E_{d_Z}\left[\nabla F(x_t)\right] - g^0\right\|^2\right] \\
    &\hspace{1cm}+ 4\sum\nolimits_{j=1}^{\lfloor \log_2 T_{\max} \rfloor} 2^j \left(\E\left[\left\|\E_{d_z}\left[\nabla F(x, Z)\right]  - g^{j-1}\right\|^2\right] + \E\left[\left\|g^{j}  - \E_{d_Z}\left[\nabla F(x, Z)\right]\right\|^2\right] \right)\,\\
    &\overset{(a)}{\leq} C_{1} t_{\mathrm{mix}} \sigma^2 \left[2+ 4\sum\nolimits_{j=1}^{\lfloor\log_2 T_{\max}\rfloor} 2^j\left(\dfrac{1}{2^{j-1}}+\dfrac{1}{2^{j}}\right)\right]\\
    &=\mathcal{O}\left(\sigma^2t_{\mathrm{mix}}\log_2 T_{\max} \right)
\end{align*}
where $(a)$ follows from Lemma \ref{lem:tech_markov}. Using this result, we obtain the following.
\begin{align*}
    \E\left[\left\|\nabla F(x) - g_{\mathrm{MLMC}} \right\|^2\right]&\leq 2\E\left[\left\|\nabla F(x) - \E_{d_z}\left[\nabla F(x, Z)\right]\right\|^2\right] + 2\E\left[\left\|\E_{d_z}\left[\nabla F(x, Z)\right]-g_{\mathrm{MLMC}}\right\|^2\right]\\
    &\leq \mathcal{O}\left(\sigma^2 t_{\mathrm{mix}}\log_2 T_{\max}+\delta^2\right)
\end{align*}

This completes the proof of statement $(b)$. For part $(c)$, we have
\begin{align}
\begin{split}
    \left\|\nabla F(x) - \E\left[g_{\mathrm{MLMC}}\right]\right\|^2
    &\leq 2\left\|\nabla F(x) - \E_{d_Z}\left[\nabla F(x, Z)\right] \right\|^2 + 2\left\| \E_{d_Z}\left[\nabla F(x, Z)\right] - \E\left[g_{\mathrm{MLMC}}\right]\right\|^2\\
    &\leq 2\delta^2 + 2\left\|  \E_{d_Z}\left[\nabla F(x, Z)\right] - \E[g^{\lfloor \log_2 T_{\max} \rfloor}]\right\|^2
    \overset{(a)}{\leq} 2\delta^2+ \frac{2C_1 t_{\mathrm{mix}}}{T_{\max}} \sigma^2 
\end{split}
\end{align}

where $(a)$ follows from Lemma \ref{lem:tech_markov}. This concludes the proof of Lemma \ref{lem:expect_bound_grad}.
\end{proof}


\section{Proof of Lemma \ref{lemma:washim_2}}

Fix an outer loop instant $k$ and an inner loop instant $h\in\{H, \cdots, 2H-1\}$. Recall the definition of $\hat{A}_{\mathbf{u}}(\theta_k, \cdot)$ from \eqref{eq:ut_exp}. The following inequalities hold for any $\theta_k$ and $z_t^{kh}\in\mathcal{S}\times \mathcal{A} \times \mathcal{S}$.
\begin{align*}
    &\E_{\theta_k}\left[\hat{A}_{\mathbf{u}}(\theta_k, z)\right] \overset{(a)}{=} F(\theta_k), ~\text{and}~ \norm{\hat{A}_{\mathbf{u}}(\theta_k, z_t^{kh})-\E_{\theta_k}\left[\hat{A}_{\mathbf{u}}(\theta_k, z)\right]}^2\overset{(b)}{\leq} 2G_1^4
\end{align*}
where $\E_{\theta_k}$ denotes the expectation over the distribution of $z=(s, a, s')$ where $(s, a)\sim \nu^{\pi_{\theta_k}}$, $s'\sim P(\cdot|s, a)$. The equality $(a)$ follows from the definition of the Fisher matrix, and $(b)$ is a consequence of Assumption \ref{assump:score_func_bounds}. Statements (a) and (b), therefore, directly follow from Lemma \ref{lem:expect_bound_grad}. 

To prove the other statements, recall the definition of $\hat{b}_{\mathbf{u}}(\theta_k, \xi_k, \cdot)$ from \eqref{eq:ut_exp}. Observe the following relations for arbitrary $\theta_k, \xi_k$.
\begin{align*}
    \E_{\theta_k}\left[\hat{b}_{\mathbf{u}}(\theta_k, \xi_k, z)\right] &- \nabla_\theta J(\theta_k)
    = \E_{\theta_k}\left[\bigg\{r(s, a) - \eta_k + \langle \phi(s') - \phi(s), \zeta_k \rangle\bigg\}\nabla_{\theta_k} \log_{\pi_{\theta_k}}(a|s) \right] - \nabla_\theta J(\theta_k)\\
    &\overset{(a)}{=}\underbrace{\E_{\theta_k}\left[\bigg\{\eta_k^* - \eta_k + \langle \phi(s') - \phi(s), \zeta_k-\zeta_k^* \rangle\bigg\}\nabla_{\theta_k} \log_{\pi_{\theta_k}}(a|s) \right]}_{T_0}+\\
    &\hspace{1cm}+ \underbrace{\E_{\theta_k}\left[\bigg\{V^{\pi_{\theta_k}}(s) - \langle \phi(s), \zeta_k^*\rangle + \langle \phi(s'), \zeta_k^*\rangle - V^{\pi_{\theta_k}}(s')\bigg\}\nabla_{\theta_k} \log_{\pi_{\theta_k}}(a|s) \right]}_{T_1}\\
    &\hspace{1cm}+ \underbrace{\E_{\theta_k}\left[\bigg\{V^{\pi_{\theta_k}}(s') - \eta_k^* +r(s, a) - V^{\pi_{\theta_k}}(s)\bigg\}\nabla_{\theta_k} \log_{\pi_{\theta_k}}(a|s) \right]-\nabla_{\theta} J(\theta_k)}_{T_2}
\end{align*}
In $(a)$, we have used the notation that $\xi_k^*=[\eta_k^*, \zeta_k^*]^{\top}$. Observe that
\begin{align}
    \norm{T_0}^2 &\overset{(a)}{=} \cO \left(G_1^2\norm{\xi_k-\xi_k^*}^2\right), ~\norm{T_1}^2\overset{(b)}{=} \cO\left(G_1^2 \epsilon_{\mathrm{app}}\right), ~\text{and}~ T_2\overset{(c)}{=} 0
\end{align}
where $(a)$ follows from Assumption \ref{assump:score_func_bounds} and the boundedness of the feature map, $\phi$ while $(b)$ is a consequence of Assumption \ref{assump:score_func_bounds} and \ref{assump:critic-error}. Finally, $(c)$ is an application of Bellman's equation. We get,
\begin{align}
\label{eq_appndx_washim_46}
    \norm{\E_{\theta_k}\left[\hat{b}_{\mathbf{u}}(\theta_k, \xi_k, z)\right] - \nabla_\theta J(\theta_k)}^2 \leq \delta_{\mathbf{u}, k}^2 = \cO \left(G_1^2\norm{\xi_k-\xi_k^*}^2 + G_1^2 \epsilon_{\mathrm{app}}\right)
\end{align}

Moreover, observe that, for arbitrary $z_t^{kh}\in \mathcal{S}\times \mathcal{A}\times \mathcal{S}$ 
\begin{align}
\label{eq_appndx_washim_47}
    \norm{\hat{b}_{\mathbf{u}}(\theta_k, \xi_k, z_t^{kh})- \E_{\theta_k}\left[\hat{b}_{\mathbf{u}}(\theta_k, \xi_k, z_t)\right]}^2 \overset{(a)}{\leq} \sigma_{\mathbf{u}, k}^2 = \cO\left(G_1^2\norm{\xi_k}^2\right)
\end{align}
where $(a)$ follows from Assumption \ref{assump:score_func_bounds} and the boundedness of the feature map, $\phi$. We can, therefore, conclude statements (c) and (d) by applying \eqref{eq_appndx_washim_46} and \eqref{eq_appndx_washim_47} in Lemma \ref{lem:expect_bound_grad}. To prove the statement (e), note that
\begin{align*}
    \E_{\theta_k}\left[\E_k\left[\hat{b}_{\mathbf{u}}(\theta_k, \xi_k, z)\right]\right] &- \nabla_\theta J(\theta_k)
    = \E_{\theta_k}\left[\bigg\{r(s, a) - \E_k[\eta_k] + \langle \phi(s') - \phi(s), \E_k[\zeta_k] \rangle\bigg\}\nabla_{\theta_k} \log_{\pi_{\theta_k}}(a|s) \right] - \nabla_\theta J(\theta_k)\\
    &\overset{(a)}{=}\underbrace{\E_{\theta_k}\left[\bigg\{\eta_k^* - \E_k[\eta_k] + \langle \phi(s') - \phi(s), \E_k[\zeta_k]-\zeta_k^* \rangle\bigg\}\nabla_{\theta_k} \log_{\pi_{\theta_k}}(a|s) \right]}_{T_0}+\\
    &\hspace{1cm}+ \underbrace{\E_{\theta_k}\left[\bigg\{V^{\pi_{\theta_k}}(s) - \langle \phi(s), \zeta_k^*\rangle + \langle \phi(s'), \zeta_k^*\rangle - V^{\pi_{\theta_k}}(s')\bigg\}\nabla_{\theta_k} \log_{\pi_{\theta_k}}(a|s) \right]}_{T_1}\\
    &\hspace{1cm}+ \underbrace{\E_{\theta_k}\left[\bigg\{V^{\pi_{\theta_k}}(s') - \eta_k^* +r(s, a) - V^{\pi_{\theta_k}}(s)\bigg\}\nabla_{\theta_k} \log_{\pi_{\theta_k}}(a|s) \right]-\nabla_{\theta} J(\theta_k)}_{T_2}
\end{align*}
 Observe the following bounds.
\begin{align}
    \norm{T_0}^2 &\overset{(a)}{=} \cO \left(G_1^2\norm{\E_k[\xi_k]-\xi_k^*}^2\right), ~\norm{T_1}^2\overset{(b)}{=} \cO\left(G_1^2 \epsilon_{\mathrm{app}}\right), ~\text{and}~ T_2\overset{(c)}{=} 0
\end{align}
where $(a)$ follows from Assumption \ref{assump:score_func_bounds} and the boundedness of the feature map, $\phi$ while $(b)$ is a consequence of Assumption \ref{assump:score_func_bounds} and \ref{assump:critic-error}. Finally, $(c)$ is an application of Bellman's equation. We get,
\begin{align}
\label{eq_appndx_washim_46_}
    \norm{\E_{\theta_k}\left[\E_k\left[\hat{b}_{\mathbf{u}}(\theta_k, \xi_k, z)\right]\right] - \nabla_\theta J(\theta_k)}^2 \leq \bar{\delta}_{\mathbf{u}, k}^2 = \cO \left(G_1^2\norm{\E_k[\xi_k]-\xi_k^*}^2 + G_1^2 \epsilon_{\mathrm{app}}\right)
\end{align}
Using the above bound, we deduce the following.
\begin{align*}
    &\norm{\E_{k}\left[\hat{b}^{\mathrm{MLMC}}_{\mathbf{u}, k, h}(\theta_k, \xi_k)\right]-\nabla_{\theta} J(\theta_k)}^2\\
    &\leq 2\norm{\E_{k}\left[\hat{b}^{\mathrm{MLMC}}_{\mathbf{u}, k, h}(\theta_k, \xi_k)\right] - \E_{\theta_k}\left[\E_k\left[\hat{b}_{\mathbf{u}}(\theta_k, \xi_k, z)\right]\right]}^2 + 2\norm{\E_{\theta_k}\left[\E_k\left[\hat{b}_{\mathbf{u}}(\theta_k, \xi_k, z)\right]\right] - \nabla_\theta J(\theta_k)}^2\\
    &\overset{(a)}{\leq} 2\E_{k}\norm{\E_{k, h}\left[\hat{b}^{\mathrm{MLMC}}_{\mathbf{u}, k, h}(\theta_k, \xi_k)\right] - \E_{\theta_k}\left[\hat{b}_{\mathbf{u}}(\theta_k, \xi_k, z)\right]}^2 + \cO\left(\bar{\delta}_{\mathbf{u}, k}^2\right)\overset{(b)}{\leq} \cO\left(t_{\mathrm{mix}}T_{\max}^{-1}\bar{\sigma}_{\mathbf{u}, k}^2 + \bar{\delta}_{\mathbf{u}, k}^2\right)
\end{align*}
where $(a)$ follows from \eqref{eq_appndx_washim_46_}. Moreover, $(b)$ follows from Lemma \ref{lem:expect_bound_grad}(a), \ref{lem:tech_markov}, and the definition of $\bar{\sigma}^2_{\mathbf{u}, k}$. This concludes the proof of Lemma \ref{lemma:washim_2}.


\section{Proof of Lemma \ref{lemma_washim_3}}

Recall the definitions of $\hat{A}_{\mathbf{v}}(\cdot)$ and $\hat{b}_{\mathbf{v}}(\cdot)$ given in \eqref{eq_def_Az_matrix} and \eqref{eq_def_bz_matrix} respectively. Note that the following equalities hold for any $\theta_k$.
\begin{align}
\label{eq_appndx_washim_53}
    \E_{\theta_k}\left[\hat{A}_{\mathbf{v}}(z)\right] = A_{\mathbf{v}}(\theta_k), ~\text{and}~\E_{\theta_k}\left[\hat{b}_{\mathbf{v}}(z)\right] = b_{\mathbf{v}}(\theta_k)
\end{align}
where $\E_{\theta_k}$ denotes the expectation over the distribution of $z=(s, a, s')$ where $(s, a)\sim \nu^{\pi_{\theta_k}}$, $s'\sim P(\cdot|s, a)$. Also, for any $z=(s, a, s')\in\mathcal{S}\times\mathcal{A}\times\mathcal{S}$, we have the following.
\begin{align}
\label{eq_appndx_washim_54}
    &\norm{\hat{A}_{\mathbf{v}}(z)}\leq |c_{\beta}|+\norm{\phi(s)} + \norm{\phi(s)(\phi(s)-\phi(s'))^{\top}}\overset{(a)}{\leq} c_{\beta}+3=\mathcal{O}(c_{\beta}),\\
    \label{eq_appndx_washim_55}
    &\norm{\hat{b}_{\mathbf{v}}(z)} \leq |c_{\beta}r(s, a)| + \norm{r(s, a)\phi(s)} \overset{(b)}{\leq} c_{\beta}+1 =\mathcal{O}(c_{\beta})
\end{align}
where $(a)$, $(b)$ hold since $|r(s, a)|\leq 1$ and $\norm{{\phi(s)}}\leq 1$, $\forall (s, a)\in\mathcal{S}\times\mathcal{A}$. Hence, for any $z_t^{kh}\in\mathcal{S}\times\mathcal{A}\times\mathcal{S}$, we have
\begin{align*}
    \norm{\hat{A}_{\mathbf{v}}(z_t^{kh})-\E_{\theta_k}\left[\hat{A}_{\mathbf{v}}(z)\right]}^2 \leq \mathcal{O}(c_{\beta}^2), ~\text{and}~\norm{\hat{b}_{\mathbf{v}}(z_t^{kh})-\E_{\theta_k}\left[\hat{b}_{\mathbf{v}}(z)\right]}^2 \leq \mathcal{O}(c_{\beta}^2)
\end{align*}

Combining the above results with Lemma \ref{lem:expect_bound_grad} establishes the result.


\section{Proof of Theorem \ref{thm:npg-main}}
\label{sec:npg-app}
We first state an important result regarding ergodic MPDs.
\begin{lemma}{Lemma 14, \cite{wei2020model}}
    \label{lemma_appndx_ergodic_mdp_advantage_bound}
    For any ergodic MDP with mixing time $t_{\mathrm{mix}}$, the following holds for any policy $\pi$.
    \begin{align*}
        |A^{\pi}(s, a)|=\cO\left(t_{\mathrm{mix}}\right), ~\forall (s, a)
    \end{align*}
\end{lemma}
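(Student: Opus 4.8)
The plan is to control $Q^\pi$ directly and then transfer the bound to the advantage. Since $V^\pi(s) = \E_{a\sim\pi(\cdot|s)}[Q^\pi(s,a)]$, the advantage $A^\pi(s,a) = Q^\pi(s,a) - V^\pi(s)$ satisfies $|A^\pi(s,a)| \le 2\sup_{s',a'}|Q^\pi(s',a')|$, so it suffices to show $\sup_{s,a}|Q^\pi(s,a)| = \cO(t_{\mathrm{mix}})$. First I would expand the $Q$-function using its defining series and the representation $J^\pi = \E_{s\sim d^\pi,\, a\sim\pi(\cdot|s)}[r(s,a)]$ of the average reward,
\begin{align*}
Q^\pi(s,a) = \sum_{t=0}^\infty \left(\E_\pi[r(s_t,a_t)\mid s_0=s,\, a_0=a] - J^\pi\right).
\end{align*}
Writing $\bar r(s) := \E_{a\sim\pi(\cdot|s)}[r(s,a)] \in [0,1]$ and letting $\mu_t$ denote the law of $s_t$ given $(s_0,a_0)=(s,a)$, each summand equals $\E_{\mu_t}[\bar r] - \E_{d^\pi}[\bar r]$, which is bounded in absolute value by $\|\mu_t - d^\pi\|_{\mathrm{TV}}$ because $\bar r$ is valued in $[0,1]$.

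The key step is establishing geometric decay of $\|\mu_t - d^\pi\|_{\mathrm{TV}}$. I would use submultiplicativity of the total-variation contraction coefficient $\bar d(\tau) := \sup_{s,s'}\|(P^\pi)^\tau(s,\cdot)-(P^\pi)^\tau(s',\cdot)\|_{\mathrm{TV}}$, namely $\bar d(\tau_1+\tau_2)\le \bar d(\tau_1)\bar d(\tau_2)$, together with the fact that $\|(P^\pi)^{t_{\mathrm{mix}}}(s,\cdot)-d^\pi\|_{\mathrm{TV}}\le 1/4$ for all $s$ gives $\bar d(t_{\mathrm{mix}})\le 1/2$, and hence $\bar d(\ell\, t_{\mathrm{mix}})\le 2^{-\ell}$. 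Since $\mu_t = \nu_1(P^\pi)^{t-1}$ with $\nu_1 = P(\cdot|s,a)$ and $d^\pi = d^\pi(P^\pi)^{t-1}$ by stationarity, the contraction property yields $\|\mu_t - d^\pi\|_{\mathrm{TV}} \le \bar d(t-1)\,\|\nu_1 - d^\pi\|_{\mathrm{TV}} \le 2^{-\lfloor (t-1)/t_{\mathrm{mix}}\rfloor}$ for $t\ge 1$. Each exponent value is attained by at most $t_{\mathrm{mix}}$ consecutive indices, so summing the resulting geometric series gives
\begin{align*}
\sup_{s,a}|Q^\pi(s,a)| \le 1 + \sum_{t=1}^\infty 2^{-\lfloor (t-1)/t_{\mathrm{mix}}\rfloor} \le 1 + t_{\mathrm{mix}}\sum_{\ell=0}^\infty 2^{-\ell} = 1 + 2t_{\mathrm{mix}} = \cO(t_{\mathrm{mix}}),
\end{align*}
which immediately yields $|A^\pi(s,a)|=\cO(t_{\mathrm{mix}})$.

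I expect the main obstacle to be the careful bookkeeping around the decay estimate: rigorously deriving $\bar d(\ell\, t_{\mathrm{mix}})\le 2^{-\ell}$ from the defining $1/4$-threshold (the relation between the single-start quantity $\|(P^\pi)^\tau(s,\cdot)-d^\pi\|_{\mathrm{TV}}$ and the symmetric coefficient $\bar d(\tau)$, and the contraction inequality itself), correctly accounting for the one initial step through $P(\cdot|s,a)$ before the kernel $P^\pi$ iterates, and ensuring the bound holds \emph{uniformly} over the (possibly infinite) state space so that the $\sup_s$ in the definition of $t_{\mathrm{mix}}$ is what is used. Once the uniform geometric bound is in place, the remaining summation is routine.
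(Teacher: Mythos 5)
Your argument is correct, and it is essentially the standard proof of this fact: the paper itself does not prove this lemma but imports it directly as Lemma 14 of \cite{wei2020model}, whose proof follows the same route (expand $Q^{\pi}$ as a telescoping series of $\E_{\mu_t}[\bar r]-\E_{d^{\pi}}[\bar r]$ terms, bound each by $\|\mu_t-d^{\pi}\|_{\mathrm{TV}}$, and sum the geometric decay obtained from submultiplicativity of the contraction coefficient). The caveats you flag (relating the single-start distance to $\bar d$, the initial step through $P(\cdot|s,a)$, and uniformity over an infinite state space) are exactly the right ones, and all are handled by the definition of $t_{\mathrm{mix}}$ as a supremum over states together with monotonicity of $d(\tau)$.
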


If follows from Assumptions \ref{assump:score_func_bounds}, \ref{assump:FND_policy}, and Lemma \ref{lemma_appndx_ergodic_mdp_advantage_bound} that
\begin{align}
    \mu\leq \norm{F(\theta)}\leq G_1^2, ~\text{and}~ \norm{\nabla_\theta J(\theta)}\leq \cO\left(G_1t_{\mathrm{mix}}\right)
\end{align}
where $\theta$ is any arbitrary policy parameter. Combining the above results with Lemma \ref{lemma:washim_2} and invoking Theorem \ref{thm_2}, we arrive at the following.
\begin{align*}
&\E_k\left[\norm{\omega_k-\omega_k^*}^2\right]\leq \dfrac{1}{H^2}\norm{\omega_H^k-\omega_k^*}^2 + \Tilde{\cO}\left(\dfrac{R_0}{H}+R_1\right), \\
&\norm{\E_k[\omega_k]-\omega_k^*}^2\leq \dfrac{1}{H^2}\norm{\omega^k_H-\omega_k^*}^2 + \cO(\bar{R}_1) + \cO\left(\dfrac{G_1^4 t_{\mathrm{mix}}}{\mu^2 H^2}\left\{\norm{\omega_H^k-\omega_k^*}^2 + \tilde{\cO}\left(R_0+R_1\right)\right\}\right)
\end{align*}
where the terms $R_0, R_1, \bar{R}_1$ are defined as follows.
\begin{align*}
    R_0&= \tilde{\cO}\left(\mu^{-4}G_1^6t_{\mathrm{mix}}^3+ \mu^{-2}G_1^2t_{\mathrm{mix}}\E_k\left[\norm{\xi_k}^2\right] + \mu^{-2}G_1^2\E_k\left[\norm{\xi_k-\xi_k^*}^2\right] + \mu^{-2}G_1^2\epsilon_{\mathrm{app}}\right),\\    R_1&=\cO\left(H^{-2}\mu^{-4}G_1^6t^3_{\mathrm{mix}} + H^{-2}\mu^{-2}G_1^2t_{\mathrm{mix}}\E_k\left[\norm{\xi_k}^2\right] + \mu^{-2}G_1^2\E_k\left[\norm{\xi_k-\xi_k^*}^2\right] + \mu^{-2}G_1^2\epsilon_{\mathrm{app}}\right)\\
    \bar{R}_1&=\cO\left(H^{-2}\mu^{-4}G_1^6t^3_{\mathrm{mix}} + H^{-2}\mu^{-2}G_1^2t_{\mathrm{mix}}\E_k\left[\norm{\xi_k}^2\right] + \mu^{-2}G_1^2\norm{\E_k[\xi_k]-\xi_k^*}^2 + \mu^{-2}G_1^2\epsilon_{\mathrm{app}}\right)
\end{align*}

Moreover, note that
\begin{align*}
    \E_k\left[\norm{\xi_k}^2\right]\leq 2\E_k\left[\norm{\xi_k-\xi_k^*}^2\right] + 2\E_k\left[\norm{\xi_k^*}^2\right] \overset{(a)}{\leq} \cO\left(\E_k\left[\norm{\xi_k-\xi_k^*}^2\right]+ \lambda^{-2}c_{\beta}^2\right)
\end{align*}
where $(a)$ follows from \eqref{eq_imp_appndx_58} for sufficiently large $c_{\beta}$ and the definition that $\xi_k^*=[A_{\mathbf{v}}(\theta_k)]^{-1}b_{\mathbf{v}}(\theta_k)$. Hence,
\begin{align*}
    &\E_k\left[\norm{\omega_k-\omega_k^*}^2\right]\leq \dfrac{1}{H^2}\norm{\omega_H^k-\omega_k^*}^2 + \tilde{\cO}\left(\dfrac{1}{H}\bigg\{\mu^{-4}G_1^6t_{\mathrm{mix}}^3+\mu^{-2}\lambda^{-2}G_1^2c_{\beta}^2t_{\mathrm{mix}}\bigg\}\right)\\
    &\hspace{1.2cm}+\Tilde{\cO}\left(\mu^{-2}G_1^2\E_k\left[\norm{\xi_k-\xi_k^*}^2\right] + \mu^{-2}G_1^2\epsilon_{\mathrm{app}}\right), \\
&\norm{\E_k[\omega_k]-\omega_k^*}^2\leq \cO\left(\mu^{-2}G_1^2\norm{\E_k[\xi_k]-\xi_k^*}^2 + \mu^{-2}G_1^2\epsilon_{\mathrm{app}}\right)\\
&\hspace{1.2cm}+ \tilde{\cO}\left(\dfrac{G_1^4 t_{\mathrm{mix}}}{\mu^2 H^2}\left\{\norm{\omega_H^k-\omega_k^*}^2 + \mu^{-2}G_1^2t_{\mathrm{mix}}\E_k\left[\norm{\xi_k-\xi_k^*}^2\right]+\mu^{-4}G_1^6t_{\mathrm{mix}}^3+\mu^{-2}\lambda^{-2}G_1^{2}c_{\beta}^2t_{\mathrm{mix}}\right\}\right)
\end{align*}
This concludes the proof.


\section{Proof of Theorem \ref{th:acc_td}}

We start with an important result on $A_{\mathbf{v}}(\theta)$.

\begin{lemma}
\label{lem:td-pd}
    For a large enough $c_{\beta}$, Assumption \ref{assum:critic_positive_definite} implies that $A_{\mathbf{v}}(\theta)\succeq (\lambda/2)I$ where $I$ is an identity matrix of appropriate dimension and $\theta$ is an arbitrary policy parameter.
\end{lemma}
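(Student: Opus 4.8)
The plan is to exploit the block structure of $A_{\mathbf{v}}(\theta)$. Taking the expectation $\E_\theta$ of the matrix in \eqref{eq_def_Az_matrix} over $z=(s,a,s')$ with $(s,a)\sim\nu^{\pi_\theta}$ and $s'\sim P(\cdot|s,a)$, and writing $\bar\phi := \E_\theta[\phi(s)]$ and $M := \E_\theta[\phi(s)(\phi(s)-\phi(s'))^\top]$, we obtain
\begin{align*}
A_{\mathbf{v}}(\theta) = \begin{bmatrix} c_\beta & 0 \\ \bar\phi & M \end{bmatrix},
\end{align*}
where the top-left entry is the scalar $\eta$-block and $M$ is the $m\times m$ $\zeta$-block. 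Because the marginal of $s$ under $\nu^{\pi_\theta}$ is $d^{\pi_\theta}$ and the conditional law of $s'$ is $P^{\pi_\theta}(s,\cdot)$, Assumption \ref{assum:critic_positive_definite} applies directly to the lower-right block, giving $M \succcurlyeq \lambda I$.

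Second, since $A_{\mathbf{v}}(\theta)$ is not symmetric, I would interpret $A_{\mathbf{v}}(\theta)\succcurlyeq(\lambda/2)I$ through its quadratic form, which depends only on the symmetric part $S := \tfrac{1}{2}(A_{\mathbf{v}}(\theta)+A_{\mathbf{v}}(\theta)^\top)$. Writing $\tilde M := \tfrac{1}{2}(M+M^\top)$, we have $x^\top M x = x^\top \tilde M x$, so $M\succcurlyeq\lambda I$ yields $\tilde M\succcurlyeq\lambda I$, and
\begin{align*}
S - \frac{\lambda}{2} I = \begin{bmatrix} c_\beta - \tfrac{\lambda}{2} & \tfrac{1}{2}\bar\phi^\top \\ \tfrac{1}{2}\bar\phi & \tilde M - \tfrac{\lambda}{2} I \end{bmatrix}.
\end{align*}

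Third, I would verify positivity of $S-(\lambda/2)I$ by a Schur-complement argument. The lower-right block satisfies $\tilde M-(\lambda/2)I\succcurlyeq(\lambda/2)I\succ 0$, hence is invertible with $(\tilde M-(\lambda/2)I)^{-1}\preccurlyeq(2/\lambda)I$. The Schur complement of this block is the scalar
\begin{align*}
c_\beta - \frac{\lambda}{2} - \frac{1}{4}\,\bar\phi^\top\left(\tilde M - \frac{\lambda}{2}I\right)^{-1}\bar\phi.
\end{align*}
Using $\norm{\bar\phi} = \norm{\E_\theta[\phi(s)]} \leq \E_\theta\norm{\phi(s)} \leq 1$ (since $\norm{\phi(s)}\leq 1$), the subtracted term is at most $\tfrac{1}{4}\cdot\tfrac{2}{\lambda} = \tfrac{1}{2\lambda}$. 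Therefore the Schur complement is nonnegative, and hence $S\succcurlyeq(\lambda/2)I$, as soon as $c_\beta \geq \lambda/2 + 1/(2\lambda)$, which is exactly the ``large enough $c_\beta$'' condition in the statement.

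I expect the only real subtlety is the non-symmetry of $A_{\mathbf{v}}(\theta)$: one must pass to the symmetric part before invoking the standard Schur-complement characterization of positive semidefiniteness, and track that Assumption \ref{assum:critic_positive_definite} controls only $\tilde M$ (the symmetrization of $M$), which suffices because the relevant quadratic form sees only $\tilde M$. The remaining steps — bounding $\norm{\bar\phi}$ and the operator norm of the inverted block — are routine.
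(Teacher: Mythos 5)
Your proof is correct, but it takes a different technical route from the paper's. Both arguments start from the same place --- the quadratic form $\xi^\top A_{\mathbf{v}}(\theta)\xi = c_\beta\eta^2 + \eta\,\zeta^\top\E_\theta[\phi(s)] + \zeta^\top \E_\theta[\phi(s)(\phi(s)-\phi(s'))^\top]\zeta$ --- but diverge in how they control the cross term against the two diagonal terms. The paper applies Cauchy--Schwarz to get the scalar lower bound $c_\beta\eta^2 - |\eta|\norm{\zeta} + \lambda\norm{\zeta}^2$, normalizes by $\norm{\xi}^2$, and reduces the claim to a one-dimensional minimization over $u=\eta^2/\norm{\xi}^2\in[0,1]$, yielding the threshold $c_\beta\geq \lambda+\sqrt{\lambda^{-2}-1}$. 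You instead pass to the symmetric part of the block matrix and invoke the Schur-complement criterion, which cleanly isolates the condition on $c_\beta$ as a single scalar inequality and gives the threshold $c_\beta\geq \lambda/2+1/(2\lambda)$ (same $\Theta(1/\lambda)$ order, slightly sharper constant). You correctly handle the two subtleties: that Assumption~\ref{assum:critic_positive_definite} on the non-symmetric block $M$ transfers to its symmetrization $\tilde M$ because the quadratic form only sees $\tilde M$, and that $\tilde M-(\lambda/2)I\succcurlyeq(\lambda/2)I$ gives the operator-norm bound $(\tilde M-(\lambda/2)I)^{-1}\preccurlyeq(2/\lambda)I$ needed for the Schur complement. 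The paper's route is more elementary (no matrix inversion, no symmetrization discussion); yours is arguably more transparent about where the condition on $c_\beta$ comes from and generalizes more readily if the $\eta$-block were itself a matrix. Either is acceptable.
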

\begin{proof}[Proof of Lemma \ref{lem:td-pd}]
     Recall that $A_{\mathbf{v}}(\theta)=\mathbf{E}_{\theta}[\hat{A}_{\mathbf{v}}(z)]$ where $\E_{\theta}$ denotes expectation over the distribution of $z=(s, a, s')$ where $(s, a)\sim \nu^{\pi_\theta}$, $s'\sim P(\cdot|s, a)$. Hence, for any $\xi=[\eta, \zeta]$, we have
    \begin{align}
    \begin{split}
    \xi^{\top}A_{\mathbf{v}}(\theta)\xi &= c_{\beta}\eta^2 + \eta\zeta^{\top}\E_{\theta}\left[\phi(s)\right] + \zeta^{\top}\E_{\theta}\left[\phi(s)\left[\phi(s)-\phi(s')\right]^{\top}\right]\zeta\\
    &\overset{(a)}{\geq} c_{\beta}\eta^2 - |\eta|\norm{\zeta} + \lambda \norm{\zeta}^2\\
    &\geq \norm{\xi}^2\left\{\min_{u\in[0, 1]} c_{\beta} u - \sqrt{u(1-u)}+\lambda(1-u)\right\}\overset{(b)}{\geq}(\lambda/2)\norm{\xi}^2
    \end{split}
    \end{align}
    where $(a)$ is a consequence of Assumption \ref{assum:critic_positive_definite} and the fact that $\norm{\phi(s)}\leq 1$, $\forall s\in\mathcal{S}$. Finally, $(b)$ is satisfied when $c_{\beta} \geq \lambda + \sqrt{\frac{1}{\lambda^2}-1}$. This concludes the proof of Lemma \ref{lem:td-pd}.
\end{proof}

Combining Lemma \ref{lem:td-pd} with \eqref{eq_appndx_washim_53}, \eqref{eq_appndx_washim_54}, and \eqref{eq_appndx_washim_55}, we can, therefore, conclude that the following inequalities hold for arbitrary $\theta_k$ whenever $c_{\beta}\geq \lambda+\sqrt{\frac{1}{\lambda^2}-1}$.
\begin{align}
\label{eq_imp_appndx_58}
    \dfrac{\lambda}{2}\leq \norm{A_{\mathbf{v}}(\theta_k)}\leq \cO(c_{\beta}), ~\text{and}~ \norm{b_{\mathbf{v}}(\theta_k)}\leq \cO(c_{\beta})
\end{align}

Utilizing the above result with Lemma \ref{lemma_washim_3} and invoking Theorem \ref{thm_2}, we arrive at the following.
\begin{align*}
&\E_k\left[\norm{\xi_k-\xi_k^*}^2\right]\leq \dfrac{1}{H^2}\norm{\xi_0^k-\xi_k^*}^2 + \Tilde{\cO}\left(\dfrac{R_0}{H}+R_1\right), \\
&\E_k\left[\norm{\E_k[\xi_k]-\xi_k^*}^2\right]\leq \dfrac{1}{H^2}\norm{\xi^k_0-\xi_k^*}^2 + \cO(\bar{R}_1) + \cO\left(\dfrac{c_{\beta}^2 t_{\mathrm{mix}}}{\lambda^2 H^2}\left\{\norm{\xi_0^k-\xi_k^*}^2 + \cO\left(R_0+R_1\right)\right\}\right)
\end{align*}
where the terms $R_0, R_1, \bar{R}_1$ are defined as follows.
\begin{align*}
    &R_0 = \tilde{\cO}\left(\lambda^{-4}c_{\beta}^4 t_{\mathrm{mix}} + \lambda^{-2}c_{\beta}^2t_{\mathrm{mix}} \right) = \tilde{\cO}\left(\lambda^{-4}c_{\beta}^4 t_{\mathrm{mix}}\right),\\
    &R_1=\cO\left(H^{-2}\lambda^{-4}c_{\beta}^4t_{\mathrm{mix}}+H^{-2}\lambda^{-2}c_{\beta}^2t_{\mathrm{mix}}\right)=\cO\left(H^{-2}\lambda^{-4}c_{\beta}^4t_{\mathrm{mix}}\right)\\
    &\bar{R}_1=\cO\left(H^{-2}\lambda^{-4}c_{\beta}^4t_{\mathrm{mix}}+H^{-2}\lambda^{-2}c_{\beta}^2t_{\mathrm{mix}}\right)=\cO\left(H^{-2}\lambda^{-4}c_{\beta}^4t_{\mathrm{mix}}\right)
\end{align*}

Hence, we have the following results.
\begin{align*}
    &\E_k\left[\norm{\xi_k-\xi_k^*}^2\right]\leq \dfrac{1}{H^2}\norm{\xi_0^k-\xi_k^*}^2 + \tilde{\cO}\left(\dfrac{c_{\beta}^4t_{\mathrm{mix}}}{\lambda^4 H}\right), \\
&\norm{\E_k[\xi_k]-\xi_k^*}^2\leq \cO\left(\dfrac{c_{\beta}^2t_{\mathrm{mix}}}{\lambda^2H^2}\norm{\xi_0^k-\xi_k^*}^2\right)+ \cO\left(\dfrac{c_{\beta}^6t^2_{\mathrm{mix}}}{\lambda^6 H^2}\right) 
\end{align*}
This concludes the proof of Theorem \ref{th:acc_td}.

\section{Proof of Theorem \ref{thm:main-conv-rate}}
\label{sec:final-bound}
Recall that the global convergence of any update of form $\theta_{k+1}=\theta_{k}+\alpha \omega_k$ can be bounded as
\begin{equation}
\label{eq_appndx_wash_59}
			\begin{split}
			J^{*}-\frac{1}{K}\sum_{k=0}^{K-1}&\E[J(\theta_k)]\leq \sqrt{\epsilon_{\mathrm{bias}}} +\frac{G_1}{K}\sum_{k=0}^{K-1}\E\Vert(\E_k\left[\omega_k\right]-\omega^*_k)\Vert+\dfrac{\alpha G_2}{K}\sum_{k=0}^{K-1}\E\Vert \omega_k -\omega_k^*\Vert^2\\
            & + \dfrac{\alpha \mu^{-2}}{ K}\sum_{k=0}^{K-1}\E\Vert \nabla_{\theta} J(\theta_k) \Vert^2
            +\frac{1}{\alpha K}\E_{s\sim d^{\pi^*}}[\mathrm{KL}(\pi^*(\cdot\vert s)\Vert\pi_{\theta_0}(\cdot\vert s))].		\end{split}
		\end{equation}
  We shall now derive a bound for \(\textstyle{\frac{1}{K}\sum_{k=0}^{K-1}\Vert\nabla_\theta J(\theta_k)\Vert^2}\). Given that the function \(J\) is \(L\)-smooth, we obtain:
\begin{align}
\label{eq:eq_26}
\begin{split}
    J(\theta_{k+1})
    &\geq J(\theta_k)+\left<\nabla_\theta J(\theta_k),\theta_{k+1}-\theta_k\right>-\frac{L}{2}\Vert\theta_{k+1}-\theta_k\Vert^2\\
    &=J(\theta_k)+\alpha\left<\nabla_\theta J(\theta_k),\omega_k\right>-\frac{\alpha^2 L}{2}\Vert \omega_k\Vert^2\\
    &=J(\theta_k)+\alpha\left<\nabla_\theta J(\theta_k),\omega_k^*\right>+\alpha\left<\nabla_\theta J(\theta_k),\omega_k-\omega_k^*\right>-\frac{\alpha^2 L}{2}\Vert \omega_k-\omega_k^*+\omega_k^*\Vert^2\\
    &\overset{(a)}{\geq} J(\theta_k) +\alpha\left<\nabla_\theta J(\theta_k),F(\theta_k)^{-1}\nabla_\theta J(\theta_k)\right> +\alpha\left<\nabla_\theta J(\theta_k),\omega_k-\omega_k^*\right> \\
    &\hspace{1cm}-\alpha^2 L \Vert \omega_k-\omega_k^*\Vert^2 - \alpha^2L\Vert \omega_k^*\Vert^2\\
    &\overset{(b)}{\geq} J(\theta_k) +\dfrac{\alpha}{G_1^2}\Vert\nabla_\theta J(\theta_k)\Vert^2 +\alpha\left<\nabla_\theta J(\theta_k),\omega_k-\omega_k^*\right> -\alpha^2 L \Vert \omega_k-\omega_k^*\Vert^2 - \alpha^2L\Vert \omega_k^*\Vert^2\\
    &= J(\theta_k) +\dfrac{\alpha}{2G_1^2}\Vert\nabla_\theta J(\theta_k)\Vert^2 + \dfrac{\alpha}{2G_1^2}\left[\Vert\nabla_\theta J(\theta_k)\Vert^2 +2G_1^2 \left<\nabla_\theta J(\theta_k),\omega_k-\omega_k^*\right>+G_1^4\Vert \omega_k-\omega_k^*\Vert^2\right] \\
    &\hspace{1cm}-\left(\dfrac{\alpha G_1^2}{2}+\alpha^2 L\right) \Vert \omega_k-\omega_k^*\Vert^2 - \alpha^2L\Vert \omega_k^*\Vert^2 \\
    &=J(\theta_k) +\dfrac{\alpha}{2G_1^2}\Vert\nabla_\theta J(\theta_k)\Vert^2 + \dfrac{\alpha}{2G_1^2}\Vert\nabla_\theta J(\theta_k)+G_1^2(\omega_k-\omega_k^*)\Vert^2-\left(\dfrac{\alpha G_1^2}{2}+\alpha^2 L\right) \Vert \omega_k-\omega_k^*\Vert^2 \\
    &\hspace{1cm}- \alpha^2L\Vert \omega_k^*\Vert^2 \\
    &\geq J(\theta_k) +\dfrac{\alpha}{2G_1^2}\Vert\nabla_\theta J(\theta_k)\Vert^2 -\left(\dfrac{\alpha G_1^2}{2}+\alpha^2 L\right) \Vert \omega_k-\omega_k^*\Vert^2 - \alpha^2L\Vert F(\theta_k)^{-1}\nabla_\theta J(\theta_k)\Vert^2 \\
    &\overset{(c)}{\geq} J(\theta_k) +\left(\dfrac{\alpha}{2G_1^2}-\dfrac{\alpha^2 L}{\mu^2}\right)\Vert\nabla_\theta J(\theta_k)\Vert^2 -\left(\dfrac{\alpha G_1^2}{2}+\alpha^2 L\right) \Vert \omega_k-\omega_k^*\Vert^2  
\end{split}
\end{align}
 where $(a)$ use the Cauchy-Schwarz inequality and the definition that $\omega_k^*=F(\theta_k)^{-1}\nabla_\theta J(\theta_k)$. Relations $(b)$, and $(c)$ are consequences of Assumption \ref{assump:score_func_bounds}(a) and \ref{assump:FND_policy} respectively. Summing the above inequality over $k\in\{0,\cdots, K-1\}$, rearranging the terms and substituting $\alpha = \frac{\mu^2}{4G_1^2L}$, we obtain
 \begin{align}
 \label{eq:local-temp}
 \begin{split}
     \dfrac{\mu^2}{16G_1^4 L}\left(\dfrac{1}{K}\sum_{k=0}^{K-1}\Vert\nabla_\theta J(\theta_k)\Vert^2\right)&\leq \dfrac{J(\theta_K)-J(\theta_0)}{K} + \left(\dfrac{\mu^2}{8L}+\dfrac{\mu^4}{16G_1^4 L}\right)\left(\dfrac{1}{K}\sum_{k=0}^{K-1}\Vert \omega_k-\omega_k^*\Vert^2\right)\\
     &\overset{(a)}{\leq} \dfrac{2}{K}+\left(\dfrac{\mu^2}{8L}+\dfrac{\mu^4}{16G_1^4 L}\right)\left(\dfrac{1}{K}\sum_{k=0}^{K-1}\Vert \omega_k-\omega_k^*\Vert^2\right)
 \end{split}
 \end{align}
where $(a)$ uses the fact that $J(\cdot)$ is absolutely bounded above by $1$. Inequality \eqref{eq:local-temp} can be simplified as follows.
\begin{align}
\label{eq_appndx_wash_62}
 \begin{split}
     \dfrac{\mu^{-2}}{K}\left(\sum_{k=0}^{K-1}\Vert\nabla_\theta J(\theta_k)\Vert^2\right)&\leq  \dfrac{32LG_1^4}{\mu^{4} K}+\left(\dfrac{2G_1^4}{\mu^2}+1\right)\left(\dfrac{1}{K}\sum_{k=0}^{K-1}\Vert \omega_k-\omega_k^*\Vert^2\right)
 \end{split}
 \end{align}

Now all that is left is to bound $\mathbb{E}\left[\|\omega_k - \omega_k^*\|^2 \right]$ and $\|\mathbb{E}_k[\omega_k] - \omega_k^*\|$. Assume $\xi^k_0=0$, $\forall k$. From Theorem \ref{th:acc_td}, we have
\begin{align}
\label{eq_appndx_63}
    &\E_k\left[\norm{\xi_k-\xi_k^*}^2\right]\leq \dfrac{1}{H^2}\norm{\xi_k^*}^2 + \tilde{\cO}\left(\dfrac{c_{\beta}^4t_{\mathrm{mix}}}{\lambda^4 H}\right)\overset{(a)}{=}\tilde{\cO}\left(\dfrac{c_{\beta}^4t_{\mathrm{mix}}}{\lambda^4 H}\right), \\
    \label{eq_appndx_64}
    &\norm{\E_k[\xi_k]-\xi_k^*}^2\leq \cO\left(\dfrac{c_{\beta}^2t_{\mathrm{mix}}}{\lambda^2 H^2}\norm{\xi_k^*}^2\right) + \cO\left(\dfrac{c_{\beta}^6t^2_{\mathrm{mix}}}{\lambda^6 H^2}\right) \overset{(b)}{=}\cO\left(\dfrac{c_{\beta}^6t^2_{\mathrm{mix}}}{\lambda^6 H^2}\right)
\end{align}
The relations $(a)$, $(b)$ are due to the fact that $\norm{\xi_k^*}^2=\norm{\left[A_{\mathbf{v}}(\theta_k)\right]^{-1}b_{\mathbf{v}}(\theta_k)}^2\leq \cO\left(\lambda^{-2}c_{\beta}^2\right)$ where the last inequality is a consequence of \eqref{eq_imp_appndx_58}. Assume $\omega_H^k=0$, $\forall k$.
We have the following from Theorem \ref{thm:npg-main}.
\begin{align}
        \label{eq_appndx_wash_65}\E_k\left[\norm{\omega_k-\omega_k^*}^2\right]&\leq \dfrac{1}{H^2}\norm{\omega_k^*}^2 + \tilde{\cO}\left(\dfrac{1}{H}\bigg\{\mu^{-4}G_1^6t_{\mathrm{mix}}^3+\mu^{-2}\lambda^{-2}G_1^2c_{\beta}^2t_{\mathrm{mix}}\bigg\}\right)+\mu^{-2}G_1^2\Tilde{\cO}\left(\E_k\left[\norm{\xi_k-\xi_k^*}^2\right] + \epsilon_{\mathrm{app}}\right) \nonumber\\
        &\overset{(a)}{\leq} \cO\left(\dfrac{G_1^2t^2_{\mathrm{mix}}}{\mu^2 H^2}\right) + \tilde{\cO}\left(\dfrac{1}{H}\bigg\{\mu^{-4}G_1^6t_{\mathrm{mix}}^3+\mu^{-2}\lambda^{-2}G_1^2c_{\beta}^2t_{\mathrm{mix}}\bigg\}\right) + \dfrac{G_1^2}{\mu^2}\tilde{\cO}\left(\dfrac{c_{\beta}^4t_{\mathrm{mix}}}{\lambda^4 H}+\epsilon_{\mathrm{app}}\right) \nonumber\\
        &\overset{(b)}{\leq} \tilde{\cO}\left(\dfrac{1}{H}\bigg\{\mu^{-4}G_1^6t_{\mathrm{mix}}^3+\mu^{-2}\lambda^{-4}G_1^2c_{\beta}^4t_{\mathrm{mix}}\bigg\}\right) + \dfrac{G_1^2}{\mu^2}\cO\left(\epsilon_{\mathrm{app}}\right) 
\end{align}
Inequality $(a)$ utilizes the fact that $\norm{\omega_k^*}^2=\norm{F(\theta_k)^{\dagger}\nabla_{\theta}J(\theta_k)}^2\leq \cO\left(\mu^{-2}G_1^2t_{\mathrm{mix}}^2\right)$ where the last inequality follows from Assumption \ref{assump:score_func_bounds}, \ref{assump:FND_policy}, and Lemma \ref{lemma_appndx_ergodic_mdp_advantage_bound}. We also apply \eqref{eq_appndx_63} to prove $(a)$ whereas $(b)$ is established by retaining only the dominant terms. Theorem \ref{thm:npg-main} also states that
\begin{align}
\label{eq_appndx_wash_66}
    &\norm{\E_k[\omega_k]-\omega_k^*}^2\leq \cO\left(\mu^{-2}G_1^2\norm{\E_k[\xi_k]-\xi_k^*}^2 + \mu^{-2}G_1^2\epsilon_{\mathrm{app}}\right)\nonumber\\
&\hspace{1cm}+ \cO\left(\dfrac{G_1^4t_{\mathrm{mix}}}{\mu^2 H^2}\left\{\norm{\omega_k^*}^2 + \mu^{-2}G_1^2t_{\mathrm{mix}}\E_k\left[\norm{\xi_k-\xi_k^*}^2\right]+\mu^{-4}G_1^6t_{\mathrm{mix}}^3+\mu^{-2}\lambda^{-2}G_1^{2}c_{\beta}^2t_{\mathrm{mix}}\right\}\right)\nonumber\\
&\overset{(a)}{\leq} \tilde{\cO}\left(\dfrac{G_1^2 c_{\beta}^6t^2_{\mathrm{mix}}}{\mu^{2}\lambda^6 H^2} + \dfrac{G_1^2}{\mu^{2}}\epsilon_{\mathrm{app}}\right)+ \tilde{\cO}\left(\dfrac{1}{H^2}\left\{ \mu^{-4}G_1^6t_{\mathrm{mix}}^3 + \dfrac{G_1^6c_{\beta}^4t^3_{\mathrm{mix}}}{\mu^{4}\lambda^4 H}+\mu^{-6}G_1^{10}t_{\mathrm{mix}}^4+\mu^{-4}\lambda^{-2}G_1^{6}c_{\beta}^2t^2_{\mathrm{mix}}\right\}\right)\nonumber\\
&\overset{(b)}{\leq} \tilde{\cO}\left(\dfrac{1}{H^2}\bigg\{ \mu^{-6}G_1^{10}t_{\mathrm{mix}}^4+\mu^{-2}\lambda^{-6}G_1^{2}c_{\beta}^6t^2_{\mathrm{mix}}+\mu^{-4}\lambda^{-2}G_1^6c_{\beta}^2t^2_{\mathrm{mix}}\bigg\}\right) + \dfrac{G_1^2}{\mu^2}\cO\left(\epsilon_{\mathrm{app}}\right) 
\end{align}
where $(a)$ is a consequence of \eqref{eq_appndx_63}, \eqref{eq_appndx_64}, and the upper bound $\norm{\omega_k^*}^2\leq \cO\left(\mu^{-2}G_1^2t_{\mathrm{mix}}^2\right)$ derived earlier. Inequality $(b)$ retains only the dominant terms. Combining \eqref{eq_appndx_wash_59}, \eqref{eq_appndx_wash_62}, \eqref{eq_appndx_wash_65}, and \eqref{eq_appndx_wash_66}, we arrive at the following.
\begin{align}
    J^{*}-\frac{1}{K}\sum_{k=0}^{K-1}&\E[J(\theta_k)]\leq \sqrt{\epsilon_{\mathrm{bias}}} + \tilde{\cO}\left(\dfrac{1}{H}\bigg\{ \mu^{-3}G_1^{6}t_{\mathrm{mix}}^{2}+\mu^{-1}\lambda^{-3}G_1^{2}c^3_{\beta}t_{\mathrm{mix}} + \mu^{-2}\lambda^{-1}G_1^4 c_{\beta} t_{\mathrm{mix}}\bigg\}\right) + \dfrac{G_1^2}{\mu}\cO\left(\sqrt{\epsilon_{\mathrm{app}}}\right)\nonumber\\
    &+\dfrac{1}{L}\left(G_1^2+\dfrac{\mu^2 G_2}{G_1^2}\right)\left[\tilde{\cO}\left(\dfrac{1}{H}\bigg\{\mu^{-4}G_1^6t_{\mathrm{mix}}^3+\mu^{-2}\lambda^{-4}G_1^2c_{\beta}^4t_{\mathrm{mix}}\bigg\}\right) + \dfrac{G_1^2}{\mu^2}\cO\left(\epsilon_{\mathrm{app}}\right) \right]+ \cO\left(\dfrac{G_1^2L}{\mu^2K}\right)
\end{align}

We get the desired result by substituting the values of $H, K$ as stated in the theorem. We want to emphasize that the \(G_1^2\) factor with the $\sqrt{\epsilon_{\mathrm{app}}}$ term is a standard component in actor-critic results with a linear critic \cite{suttle2023beyond,patel2024global}. However, this factor is often not explicitly mentioned in previous works, whereas we have included it here.

\end{document}